\theoremstyle{plain}
\newtheorem{theorem}{Theorem}[section]
\newtheorem{lemma}[theorem]{Lemma}
\newtheorem{corollary}[theorem]{Corollary}
\theoremstyle{definition}
\newtheorem{definition}[theorem]{Definition}
\theoremstyle{remark}
\newtheorem{fact}[theorem]{Fact}
\icmltitlerunning{Connecting Thompson Sampling and UCB: Towards More Efficient Trade-offs Between Privacy and Regret}
\begin{document}

\twocolumn[
\icmltitle{Connecting Thompson Sampling and UCB: Towards More Efficient Trade-offs Between Privacy and Regret}



\icmlsetsymbol{equal}{*}

\begin{icmlauthorlist}
\icmlauthor{Bingshan Hu}{yyy}
\icmlauthor{Zhiming Huang}{xxx}
\icmlauthor{Tianyue H. Zhang}{comp,mila}
\icmlauthor{Mathias Lécuyer}{yyy}
\icmlauthor{Nidhi Hegde}{sch,amii}


\end{icmlauthorlist}

\icmlaffiliation{yyy}{Department of Computer Science, University of British Columbia, Canada.}
\icmlaffiliation{xxx}{Department of Computer Science, University of Victoria, Canada.}
\icmlaffiliation{comp}{Université de Montréal, Canada.}
\icmlaffiliation{mila}{Mila -- Quebec AI Institute, Canada.}
\icmlaffiliation{sch}{Department of Computing Science, University of Alberta, Canada.}
\icmlaffiliation{amii}{Alberta Machine Intelligence Instribute (Amii), Canada}

\icmlcorrespondingauthor{Bingshan Hu}{bingshanhu3@gmail.com}

\icmlkeywords{Machine Learning, ICML}

\vskip 0.3in
]



\printAffiliationsAndNotice{}  

\begin{abstract}


We address differentially private stochastic bandit problems by leveraging  Thompson Sampling with Gaussian priors  and Gaussian differential privacy (GDP). 
 We propose DP-TS-UCB, a novel parametrized private  algorithm that enables trading off privacy and regret. DP-TS-UCB satisfies $ \tilde{O} \left(T^{0.25(1-\alpha)}\right)$-GDP and achieves
 $O \left(K\ln^{\alpha+1}(T)/\Delta \right)$ regret bounds, where $K$ is the number of arms, $ \Delta$ is the sub-optimality gap, $T$ is the learning horizon, and
 $\alpha \in [0,1]$ controls the trade-off between privacy and regret. Theoretically, 
 DP-TS-UCB relies on anti-concentration bounds for the Gaussian distributions, linking the exploration mechanisms of Thompson Sampling and Upper Confidence Bound, which may be of independent research interest.

\end{abstract}

\section{Introduction}\label{sec: intro}

This paper studies differentially private
stochastic  bandit problems previously studied in~\citet{mishra2015nearly,hu2021near,hu2022near,azize2022privacy,ou2024thompsonsamplingdifferentiallyprivate}.  
In a classical stochastic bandit problem, we have a fixed arm set $[K]$. Each arm  $i $ is associated with a fixed but unknown reward distribution $p_i$ with mean reward $\mu_i$. 
In each round, a learning agent pulls an arm and obtains a random reward that is distributed according to the reward distribution associated with the pulled arm. The goal of the learning agent is to pull arms sequentially to accumulate as much reward as possible over a finite number of $T$ rounds.  Since 
the pulled arm in each round may not always be the optimal one, \emph{regret}, defined as the expected cumulative loss between the highest mean reward and the earned mean reward, is used to measure the performance of the algorithm used by the learning agent to make decisions.

Low-regret bandit algorithms should leverage past information to inform future decisions, as previous observations reveal which arms have the potential to yield higher rewards.
However, due to privacy concerns, the learning agent may not be allowed to directly use past information 
 to make decisions. For example,  a hospital collects health data from patients participating in clinical trials over time to learn the side effects of some newly developed treatments.
To comply with privacy regulations, the hospital is required to publish scientific findings in a differentially private manner, as the sequentially collected data from patients carries sensitive information from individuals. 
The framework of differential privacy (DP) \citep{dwork2014algorithmic}
is widely accepted to preserve the privacy of individuals whose data have been used for data analysis. 
 Differentially private learning algorithms 
bound the privacy loss, the amount of information that
 an external observer can infer about individuals. 


DP is commonly achieved by 
adding noise to summary statistics computed based on the collected data. Therefore, 
 to solve a private bandit problem, the learning agent has to navigate two trade-offs. 
  The first one is the \emph{fundamental trade-off between exploitation and exploration} due to bandit feedback:  
in each round, the learning agent can only focus on either exploitation (pulling arms seemingly promising to attain reward) or exploration (pulling arms helpful to learn the unknown mean rewards and reduce uncertainty). 
 The second one is the \emph{trade-off between privacy and regret} due to the DP noise: 
adding more noise enhances privacy, but it reduces data estimation accuracy and weakens regret guarantees.


There are two main strategies to design (non-private) stochastic bandit algorithms that efficiently balance exploitation and exploration: Upper Confidence Bound (UCB) \citep{auer2002finite} and Thompson Sampling \citep{agrawalnear,kaufmann2012thompson}.
Both enjoy good theoretical regret guarantees and competitive empirical performance.
 UCB is inspired by the principle of optimism in the face of uncertainty, adding deterministic bonus terms to the empirical estimates based on their uncertainty to achieve exploration. 
Thompson Sampling is inspired by Bayesian learning, 
using the idea of sampling mean reward models from posterior distributions (e.g., Gaussian distributions) that model the unknown mean rewards of each arm. 
The procedure of sampling mean reward models can be viewed as adding random bonus terms to the empirical estimates. 

The design of the existing private stochastic bandit algorithms \citep{sajed2019optimal,hu2021near,azize2022privacy,hu2022near} follows the framework of  adding calibrated noise  to the empirical estimates first to achieve privacy. Then, the learning agent  makes decisions based on noisy estimates, which can be viewed as post-processing that preserves DP guarantees.
Since both Thompson Sampling and DP algorithms rely on   adding noise to the empirical estimates, it is natural to wonder whether the existing Thompson Sampling-based algorithms offer some level of privacy at no additional cost, without compromising any regret guarantees.

Very recently, 
\citet{ou2024thompsonsamplingdifferentiallyprivate} show that  Thompson Sampling with Gaussian priors \citep{agrawalnear} (we rename it as TS-Gaussian in this work) without any modification is indeed DP by leveraging Gaussian privacy mechanism (adding Gaussian noise to the collected data \citep{dwork2014algorithmic}) and the notion of Gaussian differential privacy (GDP) \citep{dong2022gaussian}. They show that TS-Gaussian is $O(\sqrt{T})$-GDP.
However, this privacy guarantee is not tight due to the fact that \emph{TS-Gaussian has to sample a mean reward model from a data-dependent Gaussian distribution for each arm in each round to achieve  exploration}. 
Each sampled Gaussian mean reward model implies the injection of some Gaussian noise into the empirical estimate, and sampling in total $T$ Gaussian mean reward models for each arm 
 provides a privacy guarantee in the order of $\sqrt{T}$.

In this paper, we propose  a novel private bandit algorithm, DP-TS-UCB (presented in Algorithm~\ref{alg:private}), which does not require sampling a Gaussian mean reward model in each round, 
and is hence 
   more efficient at trading off privacy and regret.  
Theoretically, DP-TS-UCB  uncovers the connection between exploration mechanisms in TS-Gaussian and UCB1 \citep{auer2002finite}, which may be of independent interest. 


  Our proposed algorithm builds upon the insight that, for each arm $i$, the Gaussian distribution that models the mean reward of arm $i$ can only change when arm $i$ is pulled, as a new pull of arm $i$ indicates the arrival of new data associated with arm $i$. In other words, the Gaussian distribution stays the same in all rounds between two consecutive pulls of arm $i$. 
 Based on this insight, to avoid unnecessary Gaussian sampling, which increases privacy loss, DP-TS-UCB \textbf{sets a budget 
 $\phi$ for
 the  number of  Gaussian mean reward models that are allowed to draw from a   Gaussian distribution.}  
Among all the rounds between two consecutive pulls of arm $i$, DP-TS-UCB can only draw a Gaussian mean reward model in each of the first $\phi$ rounds. 
   If arm $i$ is  still not pulled after  $\phi$ rounds, DP-TS-UCB reuses the highest model value  among the previously sampled $\phi$ Gaussian mean reward models in the remaining rounds until arm $i$ is pulled again. 
Figure~\ref{fig:enter-label} presents a concrete example of how DP-TS-UCB works. 

   \begin{figure}[!ht]
    \centering
\includegraphics[trim=0 130 0 210,clip,width=0.5\textwidth]{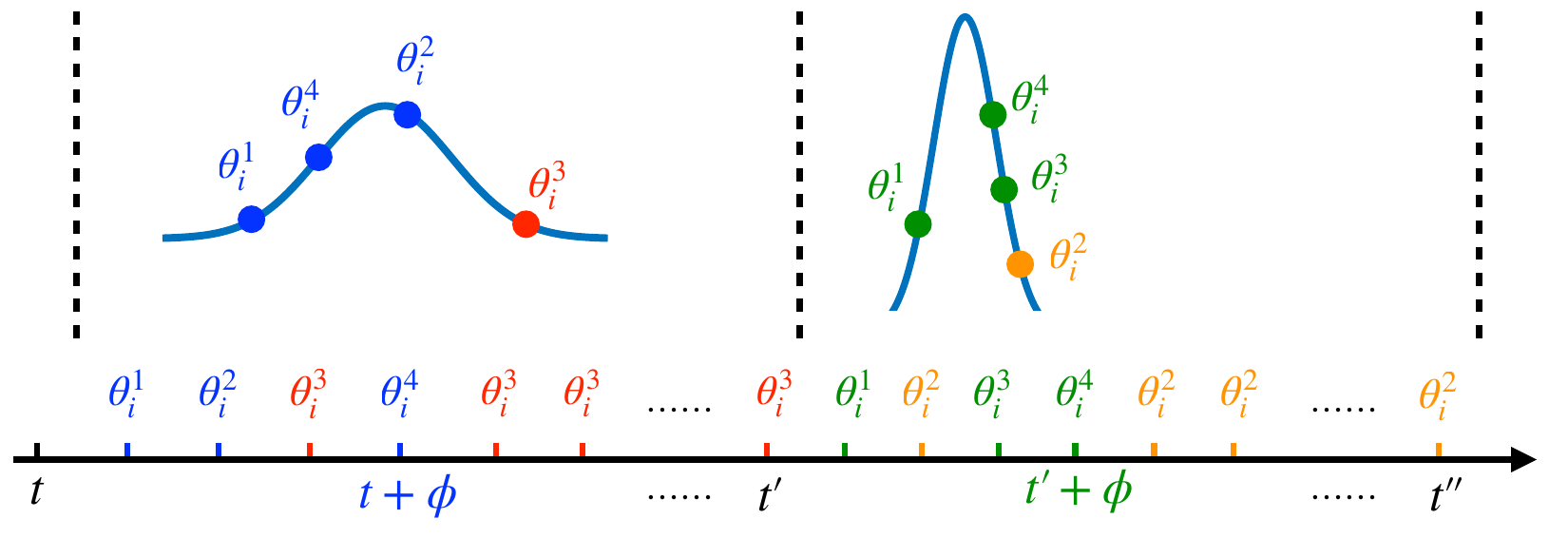}
    \caption{{\textbf{Cap the number of mean reward models sampled from a Gaussian distribution.}} Assume arm $i$ is pulled in rounds $t$, $t'$ and $t''$, and $\phi =4$. In each of the rounds $t+1, \dotsc, t+h, \dotsc, t+\phi$, DP-TS-UCB samples a Gaussian mean reward model {$\theta_i^{h} $} and uses it in the learning for arm $i$. In each of the  rounds $t+\phi+1, t+\phi+2, \dotsc, t'$, DP-TS-UCB reuses the highest model value  ${\theta_i^{3}} = \mathop{\max}_{h \in [\phi]} {\theta_i^{h}}$ among the previously sampled $\phi$ mean reward models in the learning for arm $i$. {Once a new Gaussian distribution is available (the Gaussian distribution located on the right side), DP-TS-UCB is allowed to draw $\phi$ Gaussian mean reward models again in each of the rounds $t'+1, t'+2, \dotsc, t'+\phi $.}} 
    \label{fig:enter-label}
\end{figure}
\begin{figure}[ht]
    \centering
    \includegraphics[trim=0 0 0 0,clip,width=0.5\textwidth]{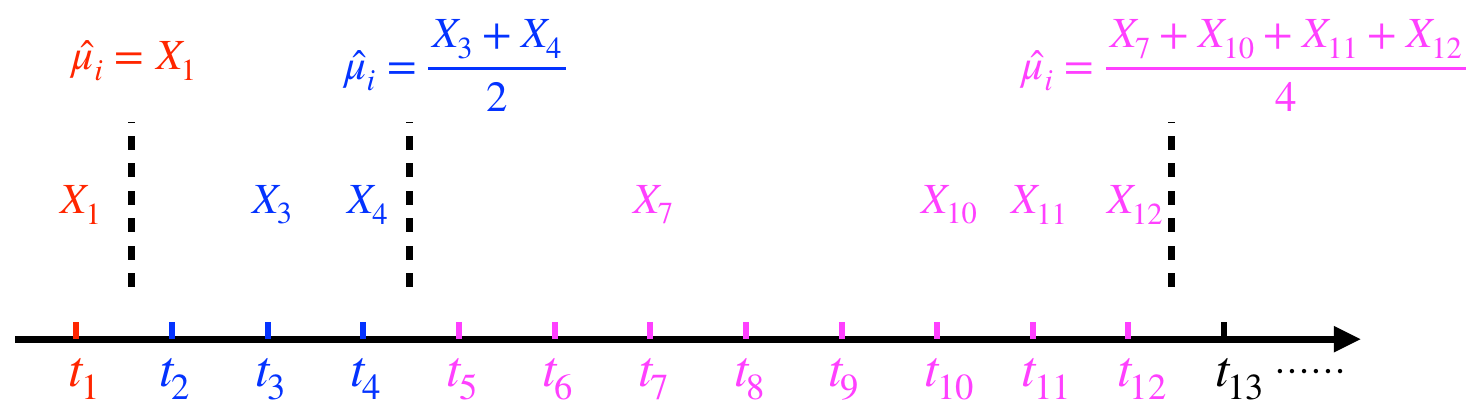}
    \caption{\textbf{Arm-specific epoch structure.} {The dashed lines partition rounds from $t_1$ to $t_{12}$ into three epochs.} 
    Assume arm $i$ is pulled in round $t_1$, then we compute its empirical mean as $\hat{\mu}_i = X_1$ at the end of round $t_1$ and arm $i$'s first epoch ends in round $t_1$. If arm $i$ is pulled in rounds  $t_3, t_4$ again, then we compute its empirical mean as $\hat{\mu}_i = (X_3+X_4)/2$ at the end of round $t_4$ and arm $i$'s second epoch ends in round $t_4$. It is important to  note that arm $i$'s empirical mean will not be updated at the end of round $t_3$ even though it is pulled in round $t_3$.} 
    \label{fig:arm_specific_epoch}
\end{figure}







To have a tight privacy guarantee,  
in addition to capping the number of Gaussian mean reward models, 
        we also need to limit 
      the number of times that a revealed observation can be used when computing empirical estimates. Similar to \citet{sajed2019optimal,hu2021near,azize2022privacy,hu2022near}, we use an \textbf{arm-specific epoch structure} to process the revealed observations. As   already discussed in these works, using this structure is the key to designing good private online learning algorithms. The key idea of this structure is to update the empirical estimate using  the most recent $2^{r}$ observations, where $r \ge 0$. Figure~\ref{fig:arm_specific_epoch} illustrates this structure for the first three epochs. 
      

\textbf{Preview of results.} DP-TS-UCB uses an input parameter $\alpha \in [0,1]$ to control the trade-off between privacy and regret, and the choice of $\phi = O ( T^{0.5(1-\alpha)}  \ln^{0.5(3-\alpha)}(T))$  depends on both $\alpha$ and the learning horizon $T$. 
Our technical Lemma~\ref{lemma boost} shows that  this choice of $\phi$ ensures sufficient exploration, that is, giving enough optimism, for the  rounds  when  sampling new Gaussian  mean reward models is not allowed. 
DP-TS-UCB is
$\tilde{O} (   T^{0.25(1-\alpha)}   )$-GDP (Theorem~\ref{thm:dp-dp})
and 
 achieves  $\sum_{i: \Delta_i >0} O ( \ln(\phi T \Delta_i^2)  \ln^{\alpha}(T)/\Delta_i )$ regret bounds (Theorem~\ref{thm: regret}), where $\Delta_i$ is the mean reward gap between the optimal arm and a sub-optimal arm $i$. For the case where $\alpha = 0$, DP-TS-UCB enjoys the optimal $\sum_{i: \Delta_i >0} O ( \ln(\phi T \Delta_i^2)  /\Delta_i )$ regret bounds and satisfies $\tilde{O} (   T^{0.25}   )$-GDP, which improves the previous $O(\sqrt{T} )$-GDP guarantee significantly. For the case where $\alpha =1$, DP-TS-UCB satisfies constant $\tilde{O} (  1   )$-GDP and achieves $\sum_{i: \Delta_i >0} O ( \ln(\phi T \Delta_i^2)  \ln(T)/\Delta_i)$ regret bounds. 

\section{Learning Problem}

In this section, we first present the learning problem of stochastic bandits and then we provide key knowledge related to differentially private online learning.

\subsection{Stochastic Bandits}
In a classical stochastic bandit problem, we have a fixed arm set $[K]$ of size $K$, and each arm $i \in [K]$ is associated with a fixed but unknown reward distribution $p_i$ with $[0,1]$ support. Let $\mu_i$ denote the mean of reward distribution $p_i$. Without loss of generality, we assume that the first arm is the unique optimal arm, i.e., $\mu_1 > \mu_i$ for all $i \ne 1$. Let $\Delta_i:= \mu_1 - \mu_i$ denote the mean reward gap. 
The learning protocol is in each round $t$, a reward vector $X_{t} := \left(X_1(t), X_2(t), \dotsc, X_K(t) \right)$ is generated, where each $X_{i}(t) \sim p_{i}$. Simultaneously,
the learning agent pulls an arm $i_t \in [K]$. At the end of the round, the learning agent receives a reward $X_{i_t}(t)$.
The goal of the learning agent is to pull arms sequentially to maximize the cumulative reward over $T$ rounds, or equivalently, minimize the \emph{(pseudo)-regret}, defined as
\begin{equation}
\label{regret def bandit}
\begin{array}{lll}
\mathcal{R}(T) &= & 
T \cdot \mu_1 - \mathbb{E} \left[\sum\limits_{t=1}^{T} \mu_{i_t}  \right] \quad,
\end{array}
\end{equation}
 where the expectation is taken over the pulled arm $i_t$.
The regret measures the expected cumulative mean reward loss between always pulling the optimal arm and the learning agent's actual pulled arms.

\subsection{Differential Privacy}
 Our DP  definition in the context of online learning 
  follows the one used in \citet{dwork2014algorithmic,sajed2019optimal, hu2021near,
hu2022near,azize2022privacy,ou2024thompsonsamplingdifferentiallyprivate}.  Let $X_{1:t} := \left(X_1, X_2, \dotsc, X_t \right)$ collect all the reward vectors up to round $t$.  Let $X'_{1:t}$ be a neighbouring sequence of $X_{1:t}$ which differs in at most one reward vector, say, in some round $\tau \le t$.
\begin{definition}[DP in  online learning]
An online learning algorithm  $\mathcal{A}$ is $(\varepsilon, \delta)$-DP if for any two neighbouring reward  sequences $X_{1:T}$ and $X'_{1:T}$, for any decision set $\mathcal{D}_{1:t} \subseteq [K]^t$, we have  $\mathbb{P} \left\{\mathcal{A}({X}_{1:t}) \in \mathcal{D}_{1:t}  \right\} \le e^{\varepsilon} \cdot \mathbb{P} \left\{\mathcal{A}({X}_{1:t}') \in \mathcal{D}_{1:t} \right\} + \delta$ holds for all $t \le T$ simultaneously.
\label{def: classic}
\end{definition}



Like \citet{ou2024thompsonsamplingdifferentiallyprivate}, we also perform our analysis using Gaussian differential privacy (GDP)  \citep{dong2022gaussian}, which is well suited to analyzing the composition of Gaussian mechanisms. We then  translate the GDP guarantee to the classical $(\varepsilon, \delta)$-DP guarantee by using the \emph{duality} between GDP and DP (Theorem~\ref{the: duality}).  Indeed, \citet{dong2022gaussian} show that GDP can be viewed as the primal privacy representation with its dual being an infinite collection of $(\varepsilon, \delta)$-DP guarantees.

To introduce  GDP, we first need to define trade-off functions:

\begin{definition}
   [Trade-off function \citep{dong2022gaussian}] For any two probability distributions $P$ and $Q$ on the same space, define the trade-off function $T(P,Q): [0,1] \rightarrow [0,1]$ as $T(P,Q)(x) = \mathop{\inf}_{\psi} \left\{\beta_{\psi} : \alpha_{\psi} \le x \right\}$, where $\alpha_{\psi} = \mathbb{E}_{P}[\psi]$, $\beta_{\psi} = 1- \mathbb{E}_Q[\psi]$, and the infimum is taken over all measurable rejection rules $\psi \in [0,1]$. 
\end{definition}


  Let $\Phi$ denote the cumulative distribution function (CDF) of the standard normal distribution $\mathcal{N}(0,1)$. To define GDP in the context of online learning, for any $\eta \ge 0$, 
  we let $G_{\eta}(x) :=  T \left(\mathcal{N}(0, 1),  \mathcal{N} \left(\eta,1 \right) \right)(x) = \Phi \left(\Phi^{-1}(1-x)-\eta \right)$ denote the trade-off function of two normal distributions. 

\begin{definition}
[$\eta$-GDP in online learning]  A randomized online learning algorithm $\mathcal{A}$ is $\eta$-GDP if for any two reward vector sequences $X_{1:T}$ and $X'_{1:T}$ differing in at most one vector, we have $T \left(\mathcal{A} \left(X_{1:t} \right),  \mathcal{A} \left(X'_{1:t} \right) \right)(x)  \ge  G_{\eta}(x)$ holds for all $x \in [0,1]$ and $t \le T$ simultaneously.
   
\label{def:GM}
\end{definition}

For easier comparison, we use the following theorem to convert an $\eta$-GDP guarantee to  $(\varepsilon, \delta)$-DP guarantees:
\begin{theorem}[Primal to dual \citep{dong2022gaussian}]
 A randomized algorithm is $\eta$-GDP if and only if it is $(\varepsilon, \delta(\varepsilon))$-DP for all $\varepsilon \ge 0$, where 
 
 $  \delta(\varepsilon) =\Phi \left(-\frac{\varepsilon}{\eta} + \frac{\eta}{2} \right) - e^{\varepsilon}  \Phi \left(-\frac{\varepsilon}{\eta} - \frac{\eta}{2} \right)$\quad.
 \label{the: duality}
\end{theorem}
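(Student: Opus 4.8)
The plan is to prove both directions by reducing the statement to the known duality between trade-off functions and the family of $(\varepsilon,\delta)$-DP guarantees. First I would recall the standard characterization that a mechanism $\mathcal{A}$ is $(\varepsilon,\delta)$-DP if and only if its trade-off function $T(\mathcal{A}(X_{1:t}), \mathcal{A}(X'_{1:t}))$ dominates the piecewise-linear function $f_{\varepsilon,\delta}(x) = \max\{0,\, 1-\delta-e^{\varepsilon}x,\, e^{-\varepsilon}(1-\delta-x)\}$ pointwise on $[0,1]$. Under this lens, $\eta$-GDP (Definition~\ref{def:GM}) is exactly the statement that every pairwise trade-off function dominates the single function $G_{\eta}$, so the theorem amounts to showing that $G_{\eta} \ge f_{\varepsilon,\delta(\varepsilon)}$ for every $\varepsilon \ge 0$ with the stated $\delta(\varepsilon)$, and conversely that the supremum $\sup_{\varepsilon \ge 0} f_{\varepsilon,\delta(\varepsilon)}$ recovers $G_{\eta}$.

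The key tool for the converse (and for pinning down the tight $\delta(\varepsilon)$) is convex duality. Trade-off functions are convex, continuous, and decreasing on $[0,1]$, so $G_{\eta}$ equals its own biconjugate by the Fenchel--Moreau theorem. Each $f_{\varepsilon,\delta}$ is, up to the clipping at $0$, a supporting line of slope $-e^{\varepsilon}$, and the smallest admissible intercept (hence the smallest $\delta$) is fixed by tangency to $G_{\eta}$. Concretely, I would use that $f$-DP for a convex trade-off function $f$ is equivalent to being $(\varepsilon,\delta(\varepsilon))$-DP for all $\varepsilon\ge 0$ with $\delta(\varepsilon)=1+f^{*}(-e^{\varepsilon})$, where $f^{*}(y)=\sup_{0\le x\le 1}\{yx-f(x)\}$ is the convex conjugate. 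This reduces the theorem to an explicit evaluation of $G_{\eta}^{*}(-e^{\varepsilon})$.

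It then remains to carry out that evaluation. Writing $u=\Phi^{-1}(1-x)$ and differentiating $G_{\eta}(x)=\Phi(u-\eta)$ via the chain rule gives $G_{\eta}'(x)=-\varphi(u-\eta)/\varphi(u)$, where $\varphi=\Phi'$ is the standard normal density; the Gaussian density ratio simplifies to $\varphi(u-\eta)/\varphi(u)=e^{\eta u-\eta^{2}/2}$, so $G_{\eta}'(x)=-e^{\eta u-\eta^{2}/2}$. Setting $G_{\eta}'(x)=-e^{\varepsilon}$ locates the maximizer through $\eta u-\eta^{2}/2=\varepsilon$, i.e. the tangent point $u^{*}=\varepsilon/\eta+\eta/2$, at which $x^{*}=1-\Phi(u^{*})=\Phi(-u^{*})$ and $G_{\eta}(x^{*})=\Phi(u^{*}-\eta)=\Phi(\varepsilon/\eta-\eta/2)$. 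Substituting into $\delta(\varepsilon)=1+(-e^{\varepsilon}x^{*}-G_{\eta}(x^{*}))$ and using the symmetry $1-\Phi(z)=\Phi(-z)$ yields $\delta(\varepsilon)=\Phi(-\varepsilon/\eta+\eta/2)-e^{\varepsilon}\Phi(-\varepsilon/\eta-\eta/2)$, as claimed.

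I expect the main obstacle to be the first reduction rather than the calculus: justifying that the infinite collection of $(\varepsilon,\delta(\varepsilon))$-DP guarantees is genuinely equivalent to the single lower bound $G_{\eta}$ requires the convexity and lower-semicontinuity of trade-off functions together with Fenchel--Moreau duality, and one must verify that the supremum defining $G_{\eta}^{*}$ is attained (so the tangency argument is exact) in order to conclude that $\delta(\varepsilon)$ is the tight value rather than merely an upper bound. Once this abstract equivalence is in place, the Gaussian-specific computation above is routine, driven entirely by the identity $\varphi(u-\eta)/\varphi(u)=e^{\eta u-\eta^{2}/2}$.
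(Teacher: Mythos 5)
This theorem is imported verbatim from \citet{dong2022gaussian}; the paper offers no proof of its own, so there is nothing internal to compare against. Your reconstruction matches the standard argument in that reference (their Propositions 2.5 and 2.12 and Corollary 2.13): reduce $f$-DP to domination of the piecewise-linear functions $f_{\varepsilon,\delta}$, invoke Fenchel--Moreau to get $\delta(\varepsilon)=1+f^{*}(-e^{\varepsilon})$, and evaluate the conjugate of $G_{\eta}$ by tangency. Your calculus is correct: $G_{\eta}'(x)=-\varphi(u-\eta)/\varphi(u)=-e^{\eta u-\eta^{2}/2}$ with $u=\Phi^{-1}(1-x)$, the tangent point $u^{*}=\varepsilon/\eta+\eta/2$ lies in $(0,\infty)$ for $\varepsilon\ge 0$ so $x^{*}=\Phi(-u^{*})$ is interior and the supremum is attained, and substitution gives exactly the stated $\delta(\varepsilon)$. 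The one detail you gloss over is that the equivalence between $f$-DP and the full family $\{(\varepsilon,1+f^{*}(-e^{\varepsilon}))\}_{\varepsilon\ge 0}$ requires $f$ to be a \emph{symmetric} trade-off function ($f=f^{-1}$), since $f_{\varepsilon,\delta}$ has two linear pieces, of slopes $-e^{\varepsilon}$ and $-e^{-\varepsilon}$, and the conjugate argument only handles the first directly; this holds here because $G_{\eta}$ is symmetric (the Gaussian location pair is exchangeable up to reflection), but it should be stated to make the ``only if'' direction airtight.
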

\textbf{Remark.}
Fix any $\varepsilon \ge 0$. We can also view   $\delta(\varepsilon) = \Phi \left(-\frac{\varepsilon}{\eta} + \frac{\eta}{2} \right) - e^{\varepsilon} \Phi \left(-\frac{\varepsilon}{\eta} - \frac{\eta}{2} \right)$ as an increasing function  of $\eta$. This means, for a fixed $\varepsilon$,   the smaller the GDP parameter $\eta$ is, the smaller the $\delta(\varepsilon)$ is after the translation. 
     


\section{Related Work}
There is a vast amount of literature on (non-private) stochastic bandit algorithms. 
We split them based on UCB-based versus Thompson Sampling-based, i.e., deterministic versus randomized exploration. Then, we discuss the most relevant algorithms for private stochastic bandits.

UCB-based algorithms~\citep{auer2002finite,audibert2007tuning,garivier2011kl,kaufmann2012bayesian,lattimore2018refining} usually conduct exploration in a deterministic way. The key idea is to construct confidence intervals centred on the empirical estimates. Then, the learning agent makes decisions based on the upper bounds of the confidence intervals. The widths of the confidence intervals control the exploration level. 
Thompson Sampling-based algorithms \citep{agrawalnear,kaufmann2012thompson,bian2022maillard,jin2021mots,jin2022finite,jin2023thompson} conduct exploration in a randomized way. 
The key idea is to use a sequence of well-chosen data-dependent distributions to model each arm's mean reward.  Then, the learning agent makes decisions by sampling random mean reward models from these distributions.   The spread of the data-dependent distributions controls the exploration level.
In addition to the aforementioned algorithms, we also have DMED \citep{honda2010asymptotically}, IMED  \citep{honda2015non},  elimination-style algorithm  \citep{auer2010ucb}, Non-parametric TS \citep{riou2020bandit}, and Generic Dirichlet Sampling \citep{baudry2021optimality}.
All these algorithms enjoy either  $\sum_{i : \Delta_i>0} O ( \ln(T)/\Delta_i )$ or    $\sum_{i : \Delta_i>0} O ( \ln(T)\Delta_i/{\text{KL}\left(\mu_i, \mu_i+ \Delta_i \right) })$ problem-dependent regret bounds, where 
$\text{KL}(a,b)$ denotes the KL-divergence between two Bernoulli distributions with parameters $a,b \in (0,1)$.

\citet{sajed2019optimal,azize2022privacy, hu2021near} developed optimal $(\varepsilon, 0)$-DP stochastic bandit algorithms by first 
 adding calibrated  Laplace noise to the empirical estimates to ensure $(\varepsilon, 0)$-DP. Then,  eliminating arms  and constructing data-dependent distributions based on noisy estimates can be viewed as post-processing which do not hurt privacy. 
Although \citet{hu2022near} proposed a private Thompson Sampling-based algorithm, it still follows the above recipe without leveraging the inherent randomness present in Thompson Sampling for privacy.

\citet{ou2024thompsonsamplingdifferentiallyprivate} connected  Thompson Sampling with Gaussian priors (we rename it as TS-Gaussian) \citep{agrawalnear} to the Gaussian privacy mechanism \citep{dwork2014algorithmic}  and Gaussian differential privacy \citep{dong2022gaussian}. The idea of TS-Gaussian is to use $\mathcal{N} \left(\hat{\mu}_{i,n_i}, 1/n_i \right)$ to model arm $i$'s mean reward, i.e., the mean of reward distribution $p_i$.  The centre  of the Gaussian distribution $\hat{\mu}_{i,n_i}$ is the empirical average of $n_i$ observations that are i.i.d. according to $p_i$. To decide which arm to pull, for each arm $i$ in each round, the learning agent samples a Gaussian mean reward model $\theta_i \sim \mathcal{N} \left(\hat{\mu}_{i,n_i}, 1/n_i \right)$. The learning agent pulls the arm with the highest mean reward model value.
\citet{ou2024thompsonsamplingdifferentiallyprivate} showed that TS-Gaussian satisfies $\sqrt{0.5T}$-GDP, before translating this GDP guarantee to  $(\varepsilon, \delta)$-DP guarantees with Theorem~\ref{the: duality}. 
Since there is no modification to  the original algorithm, 
the optimal $\sum_{i : \Delta_i>0} O ( \ln(T\Delta_i^2)/\Delta_i )$ problem-dependent regret bounds and the  near-optimal $O (\sqrt{KT\ln (K)} )$ worst-case regret bounds are preserved.
\citet{ou2024thompsonsamplingdifferentiallyprivate} also proposed Modified Thompson Sampling with Gaussian priors (we rename it as M-TS-Gaussian), 
which enables a privacy and regret trade-off. Compared to TS-Gaussian, the  modifications are pre-pulling each arm $b$ times and scaling the variance of the Gaussian distribution as $c/n_i$. 
They proved that M-TS-Gaussian  satisfies $\sqrt{T/(c(b+1))}$-GDP, and achieves  $bK + \sum_{i: \Delta_i >0} O (c \ln(T\Delta_i^2) /\Delta_i)$ problem-dependent regret bounds and $bK + O (c \sqrt{KT \ln K} )$ worst-case regret bounds. Table~\ref{tab:regret_privacy} summarizes the theoretical results of  TS-Gaussian and M-TS-Gaussian with different choices of $b,c$. 

The order of $\sqrt{T}$-GDP guarantee from TS-Gaussian and M-TS-Gaussian may not be tight when $T$ is large. 
There are two reasons resulting in this loose privacy guarantee: (1) sampling a Gaussian mean reward model in each round for each arm injects too much noise; (2) repeatedly using  the same observation to compute the empirical estimates creates too much privacy loss. In this work, we propose DP-TS-UCB, a novel private algorithm that does not require sampling a Gaussian mean reward  model in each round for each arm. The intuition is that once we are confident some arm is sub-optimal, we do not need to further explore it. To avoid using the same observation to compute the empirical estimates, 
we use the arm-specific epoch structure devised by \citet{hu2021near,azize2022privacy,hu2022near} to process the obtained observations. Using this structure ensures that
 each  observation can only be  used at most once for computing empirical estimates. %

 Regarding  lower bounds with a finite learning horizon $T$ for differentially private stochastic bandits, lower bounds exist  under the classical   $(\varepsilon, \delta)$-DP notion.  \citet{shariff2018differentially} established $\Omega (\sum_{i : \Delta_i>0} \ln(T)/\Delta_i  + K\ln(T)/\varepsilon)$ problem-dependent regret lower bound  and \citet{azize2022privacy} established an $\Omega (\sqrt{KT} + K/\varepsilon)$ minimax regret lower bound for $(\varepsilon, 0)$-DP. \citet{wang2024optimal} established an $\Omega( \sum_{i : \Delta_i>0} \ln(T)/\Delta_i  + \frac{K}{\varepsilon} \ln \frac{ (e^{\varepsilon}-1)T + \delta T}{(e^{\varepsilon}-1) + \delta T}     )$ problem-dependent regret lower bound for $(\varepsilon, \delta)$-DP. In this work, we do not provide any new lower bounds. Our theoretical results are compatible with these established lower bounds.

\begin{algorithm}[tb]
   \caption{DP-TS-UCB}
   \label{alg:private}
\begin{algorithmic}[1]
   \STATE \textbf{Input:} trade-off  parameter $\alpha \in [0,1]$, learning horizon $T$, and budget $\phi = c_0 T^{0.5(1-\alpha)}  \ln^{0.5(3-\alpha)}(T)$. 
    
     \STATE \label{alg:initialization}{\textbf{Initialization:}} (1) pull each arm $i$ once to initialize $n_i$ and $\hat{\mu}_{i, n_i}$, (2) set arm-specific epoch index $r_i \leftarrow 1$ and the number of unprocessed observations  $O_i \leftarrow 0$,  (3) set remaining Gaussian sampling budget $h_i \leftarrow \phi$ and the highest Gaussian mean reward model
${\text{MAX}}_i \leftarrow 0$.
    \FOR{$t = K+1,K+2,\dotsc, T$}
  \FOR {$i \in [K]$}

  \IF { $h_i \ge 1$} 
  
  \STATE \label{alg:phase_one}   
  {Set $\theta_{i}(t)  \leftarrow \theta^{(h_i)}_{i, n_i}$, where 
  $\theta^{(h_i)}_{i, n_i}  \sim \mathcal{N}\left(\hat{\mu}_{i,n_i}, \frac{\ln^{\alpha}(T)}{n_{i}} \right)$} {\color{blue} $\%$Mandatory TS-Gaussian} \label{phase: TS}
   \STATE Set $h_i \leftarrow h_i - 1 $,  ${\text{MAX}}_i \leftarrow \max\{{\text{MAX}}_i, \theta^{(h_i)}_{i, n_i} \}$

\ELSE
\STATE  {Set $\theta_{i}(t) \leftarrow {\text{MAX}}_i$}   {\color{blue} $\% $Optional UCB} \label{alg:phase_two}
  \ENDIF
\ENDFOR
\STATE Pull arm $i_t \in \arg \mathop {\max }_{i \in [K]} \theta_i(t)$, observe $X_{i_t}(t)$, and   set $O_{i_t} \leftarrow O_{i_t} + 1$  

 \IF{$O_{i_t} = 2^{r_{i_t}}$}
 \STATE Compute  $\hat{\mu}_{i_t,n_{i_t}}$, where $n_{i_t}= 2^{r_{i_t}}$
 \STATE Reset $h_{i_t} \leftarrow \phi $, ${\text{MAX}}_{i_t} \leftarrow 0$
 \STATE  Set $r_{i_t} \leftarrow r_{i_t} + 1$ and reset $O_{i_t} \leftarrow 0$.
     \ENDIF 
   \ENDFOR
\end{algorithmic}
\end{algorithm}

\section{DP-TS-UCB}
We present DP-TS-UCB and then provide its regret (Theorem~\ref{thm: regret}) and privacy (Theorems~\ref{thm:dp-dp} and~\ref{thm:dp-dp2}) guarantees.

\subsection{DP-TS-UCB Algorithm}
Algorithm~\ref{alg:private} presents the pseudo-code of DP-TS-UCB. Let $c_0 = \sqrt{2\pi e}$. We input trade-off parameter $\alpha \in [0,1]$ and learning horizon $T$, and then we compute the sampling budget $\phi = c_0 T^{0.5(1-\alpha)}  \ln^{0.5(3-\alpha)}(T) $.
 Let $n_i(t-1)$ denote the number of observations that are used to compute the empirical estimate $\hat{\mu}_{i, n_i(t-1)}$ at the end of round $t-1$.

\textbf{Initialize learning algorithm (Line~2).} There are several steps to initialize the learning algorithm. (1) We pull each arm $i \in [K]$ once to initialize 
each arm's empirical mean $\hat{\mu}_{i, n_i}$. Since the decisions in these rounds do not rely on any data, we do not have any privacy concerns. 
(2) As we  use  the {\emph{arm-specific epoch structure} (Figure~\ref{fig:arm_specific_epoch} describes the key ideas of this structure)} to process observations, we use $r_i$ to track arm $i$'s epoch progress and use $O_i$ to count the number of unprocessed observations in epoch $r_i$. We initialize $r_i = 1$ and $O_i = 0$. 
(3) Since we {\emph{can only draw at most $\phi$ mean reward models from each  Gaussian distribution}}, 
we use $h_i$ to count the  remaining Gaussian sampling budget at the end of round $t-1$, and $\text{MAX}_i$ to track the maximum value among these $\phi$ Gaussian mean reward models. Initially, we set $h_i = \phi$ and $\text{MAX}_i= 0$.

\textbf{Decide learning models (Line~4 to Line~11).}  Let $\theta_i(t)$ denote arm $i$'s learning model in round $t \ge K+1$. Each $\theta_i(t)$ can either be \emph{a new Gaussian mean reward model} or \emph{some Gaussian mean reward model already used before}. 
To decide which case fits arm $i$ in round $t$, we check the value of $h_i$ to see whether drawing a new Gaussian mean reward from $\mathcal{N} \left(\hat{\mu}_{i, n_i(t-1)},  \ln^{\alpha}(T)/n_i(t-1)\right)$ is allowed: if $h_i \ge 1$,  we sample a new mean reward model $ \theta_{i, n_i}^{(h_i)} \sim \mathcal{N} \left(\hat{\mu}_{i, n_i(t-1)},  \ln^{\alpha}(T)/n_i(t-1)\right)$ and use it in the learning, i.e., $\theta_i(t) =  \theta_{i, n_i}^{(h_i)} $;
if $h_i = 0$, we use $\theta_i(t) = \text{MAX}_i =\mathop{\max}_{h_i \in [\phi]} \theta^{(h_i)}_{i, n_i}$ in the learning as  we have all $\theta_{i,n_i}^{(1)}, \theta_{i,n_i}^{(2)}, \dotsc, \theta_{i,n_i}^{(\phi)}$ in hand already. 

 Our technical Lemma~\ref{lemma boost} below  shows that  the highest mean reward model 
 $\text{MAX}_i$ is analogous to  the upper confidence bound in UCB1~\citep{auer2002finite}. The usage of $\text{MAX}_i$ ensures sufficient exploration for the rounds when sampling new Gaussian mean reward models is not allowed.
 We can view DP-TS-UCB as a two-phase algorithm with a mandatory TS-Gaussian phase  and an optional UCB phase. Note that DP-TS-UCB itself does not explicitly construct upper confidence bounds; $\text{MAX}_i$ itself behaves like the upper confidence bound of arm $i$ in UCB1 in terms of achieving exploration. 
\begin{lemma}
\label{lemma boost}
Fix any observation number $s \ge 1$ and let $\theta_{i,s}^{(1)}, \dotsc, \theta_{i,s}^{(\phi)}$ be i.i.d. according to $\mathcal{N}\left(\hat{\mu}_{i,s}, \ln^{\alpha}(T)/s\right)$. We have $\mathbb{P} \left\{ \mathop{\max}_{h \in [\phi]} \theta^{(h)}_{i, s} \ge \mu_i \right\} \ge 1- O(1/T)$.
\end{lemma}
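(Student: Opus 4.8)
The plan is to condition on the $s$ observations that determine $\hat{\mu}_{i,s}$ and thereby reduce the claim to a statement about the maximum of i.i.d.\ Gaussians. Writing $\sigma^2 := \ln^{\alpha}(T)/s$ for the common variance, the models $\theta^{(1)}_{i,s},\dots,\theta^{(\phi)}_{i,s}$ are conditionally i.i.d.\ $\mathcal{N}(\hat{\mu}_{i,s},\sigma^2)$ given $\hat{\mu}_{i,s}$, so the complementary (bad) event has conditional probability $\mathbb{P}\{\max_{h\in[\phi]}\theta^{(h)}_{i,s} < \mu_i \mid \hat{\mu}_{i,s}\} = \Phi\left((\mu_i-\hat{\mu}_{i,s})/\sigma\right)^{\phi}$. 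The only regime that makes this probability large is when the empirical mean $\hat{\mu}_{i,s}$ falls well below the true mean $\mu_i$.

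First I would control that regime with a concentration bound: since the rewards lie in $[0,1]$ and $\mathbb{E}[\hat{\mu}_{i,s}]=\mu_i$, Hoeffding's inequality gives $\mathbb{P}\{\hat{\mu}_{i,s} \le \mu_i - \tau\} \le e^{-2s\tau^2} = 1/T$ for the choice $\tau := \sqrt{\ln(T)/(2s)}$. On the complementary event the standardized gap $(\mu_i-\hat{\mu}_{i,s})/\sigma$ is at most $u := \tau/\sigma = \frac{1}{\sqrt{2}}\ln^{(1-\alpha)/2}(T)$. The crucial observation here is that the $\sqrt{s}$ factors cancel, so $u$ does not depend on $s$ and a single threshold works uniformly across all epochs. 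Combining the two pieces yields $\mathbb{P}\{\max_{h\in[\phi]}\theta^{(h)}_{i,s} < \mu_i\} \le 1/T + \Phi(u)^{\phi}$, and it remains to prove $\Phi(u)^{\phi} = O(1/T)$. For this step I would apply a Gaussian anti-concentration bound to lower bound the tail mass $1-\Phi(u)=\Phi(-u)$ and then bound $\Phi(u)^{\phi} = (1-\Phi(-u))^{\phi}$ from above.

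The choice $\phi = \sqrt{2\pi e}\,T^{0.5(1-\alpha)}\ln^{0.5(3-\alpha)}(T)$ is calibrated precisely so that $\Phi(u)^{\phi}\le 1/T$ holds for every $\alpha\in[0,1]$, and I expect the endpoint $\alpha=1$ to be the main obstacle. When $\alpha<1$ we have $u\to\infty$ and the factor $T^{0.5(1-\alpha)}$ in $\phi$ dominates, so $\Phi(u)^{\phi}$ is smaller than $1/T$ with large exponential slack and the constant is immaterial. When $\alpha=1$, however, $u=1/\sqrt{2}$ is a fixed constant and $\phi$ grows only like $\ln T$, so $\Phi(u)^{\phi} = \exp(\phi\ln\Phi(u)) = T^{-\sqrt{2\pi e}\,(-\ln\Phi(1/\sqrt{2}))}$ sits exactly at the $1/T$ boundary; it is $O(1/T)$ only because $\sqrt{2\pi e}\cdot(-\ln\Phi(1/\sqrt{2})) > 1$. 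The delicate point is that one cannot afford the usual relaxation $(1-\Phi(-u))^{\phi}\le e^{-\phi\Phi(-u)}$ at this endpoint, since $\sqrt{2\pi e}\cdot\Phi(-1/\sqrt{2}) < 1$ would give only $T^{-0.99}$; the anti-concentration estimate must be retained in the sharper logarithmic form $-\ln\Phi(u)$, and the constant $\sqrt{2\pi e}$ must be chosen to clear the boundary with room to spare.
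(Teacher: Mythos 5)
Your plan is correct, and it follows the same skeleton as the paper's proof in Appendix B: condition on a Hoeffding event for $\hat{\mu}_{i,s}$, reduce to the probability that $\phi$ i.i.d.\ Gaussians all fall below a threshold a fixed number of standard deviations above the mean, and use the choice of $\phi$ to drive that probability down to $O(1/T)$. But your two refinements are not cosmetic — they are exactly what is needed to make the lemma true as stated with $c_0=\sqrt{2\pi e}$, and the paper's own write-up quietly fails at the point you flag. The paper uses the two-sided radius $\sqrt{\ln(T)/s}$, so its standardized threshold is $z=\ln^{(1-\alpha)/2}(T)$ (your $u$ times $\sqrt{2}$), and it then bounds $(1-p)^{\phi}\le e^{-\phi p}$ with the anti-concentration estimate $p\ge\frac{1}{\sqrt{2\pi}}\frac{z}{z^2+1}e^{-z^2/2}$. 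In the displayed chain the factor $\frac{\sqrt{\ln^{1-\alpha}(T)}}{\ln^{1-\alpha}(T)+1}$ is silently replaced by $\frac{\sqrt{\ln^{1-\alpha}(T)}}{\ln^{1-\alpha}(T)}$ under an ``$=$'' sign; that substitution goes in the wrong direction for an upper-bound chain, and it is precisely what makes the final exponent come out to $\ln(T)$ rather than $\ln(T)\cdot\frac{\ln^{1-\alpha}(T)}{\ln^{1-\alpha}(T)+1}$. At $\alpha=1$ the honest version of the paper's argument yields only $\Phi(1)^{\sqrt{2\pi e}\ln T}=T^{-\sqrt{2\pi e}(-\ln\Phi(1))}\approx T^{-0.71}$, which is not $O(1/T)$. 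Your one-sided Hoeffding at radius $\sqrt{\ln(T)/(2s)}$ brings the threshold down to $u=1/\sqrt{2}$, and retaining $-\ln\Phi(u)$ rather than relaxing to $1-\Phi(u)$ gives $\sqrt{2\pi e}\cdot(-\ln\Phi(1/\sqrt{2}))\approx 1.13>1$, so $\Phi(u)^{\phi}\le T^{-1.13}$; your observation that the relaxed form gives only $\approx T^{-0.99}$ is also numerically correct. The one piece you should still write out is the uniformity over $\alpha$: for $1-\alpha\gtrsim 1/\ln T$ the anti-concentration bound together with Fact~A.1 gives $\phi\cdot(1-\Phi(u))\gtrsim T^{(1-\alpha)/4}\ln T$, and for $\alpha$ within $O(1/\ln T)$ of $1$ one has $u=(1+o(1))/\sqrt{2}$ so the endpoint computation applies; with that split your argument is complete and, unlike the paper's, actually delivers the claimed $1-O(1/T)$.
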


\textbf{Make a decision and collect data (Line~12).} With all learning models $\theta_i(t)$ in hand, the learning agent pulls the arm $i_t \in \arg \mathop {\max }_{i \in [K]} \theta_i(t)$ with the highest model value, observes $X_{i_t}(t)$ and increments the unprocessed observation counter $O_{i_t}$ by one. 

\textbf{Process collected data (Line 13 to Line 17).} To control the number of times any observation can be used when computing the empirical mean, 
we only update the empirical mean of the pulled arm $i_t$ when the number of unprocessed observations $O_{i_t} = 2^{r_{i_t}}$. After the update, we reset $h_{i_t}$, $ O_{i_t}$  and $\text{MAX}_{i_t}$,  and increment the epoch progress $r_{i_t}$ by one.

\textbf{Remark on Algorithm~\ref{alg:private}.} 
We use  data collected in epoch $r_i-1$ in a differentially private manner to guide the future data collection in epoch $r_i$. We have a mandatory TS-Gaussian phase where drawing Gaussian mean reward models is allowed and an optional UCB phase where the agent can only reuse the best Gaussian mean reward model in the mandatory TS-Gaussian phase. Separating all the rounds belonging to epoch $r_i$ into two possible phases controls the cumulative injected noise (and privacy loss) regardless of the epoch length.

 \subsection{Regret Analysis of DP-TS-UCB}
 In this section, we provide a regret analysis of Algorithm~\ref{alg:private}. 
 \begin{theorem}   
\label{thm: regret}
 The problem-dependent regret bound of DP-TS-UCB with trade-off parameter $\alpha \in [0,1]$ is 
 
 $\sum_{i: \Delta_i >0} O ( \frac{\ln \left( T^{0.5(3-\alpha)}   \Delta_i^2 \right) \ln^{\alpha}(T)}{ \Delta_i} +\frac{(3-\alpha) \ln  \ln (T) \cdot \ln^{\alpha}(T)}{ \Delta_i})$.
 
 The worst-case regret bound  of DP-TS-UCB with trade-off  parameter $\alpha \in [0,1]$ is $O (\sqrt{KT} \ln^{0.5(1+\alpha)}(T))$.
\end{theorem}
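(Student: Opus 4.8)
The plan is to prove both statements from the pull-count decomposition $\mathcal{R}(T)=\sum_{i:\Delta_i>0}\Delta_i\,\mathbb{E}[N_i(T)]$, where $N_i(T)$ is the number of pulls of arm $i$; the whole argument reduces to establishing the per-arm bound $\Delta_i\,\mathbb{E}[N_i(T)] = O\!\left(\ln^{\alpha}(T)\,\ln(\phi T\Delta_i^2)/\Delta_i\right)$. Expanding $\ln(\phi T\Delta_i^2)=\ln(T^{0.5(3-\alpha)}\Delta_i^2)+\Theta((3-\alpha)\ln\ln T)$ via $\phi=c_0T^{0.5(1-\alpha)}\ln^{0.5(3-\alpha)}(T)$ then splits this into exactly the two problem-dependent terms in Theorem~\ref{thm: regret}. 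The exploration level is dictated by the scaled variance $\ln^{\alpha}(T)/n_i$ and by $\ln\phi=\Theta((1-\alpha)\ln T+(3-\alpha)\ln\ln T)$, which together produce the $\ln^{\alpha}(T)\cdot\ln(\phi T\Delta_i^2)$ order of the bound.

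First I would adopt the event decomposition of the Gaussian Thompson Sampling analyses of \citet{agrawalnear,jin2021mots}. Fix a sub-optimal arm $i$ and a threshold $x_i\in(\mu_i,\mu_1)$ near $\mu_1-\Delta_i/2$, and partition the rounds in which $i$ is selected by (i) whether its epoch empirical mean satisfies $\hat{\mu}_{i,n_i}\le x_i$ (concentration) and (ii) whether the optimal arm's model obeys $\theta_1(t)\ge x_i$ (optimism). A pull of $i$ forces the failure of (i) or (ii). Event (ii) is precisely where Lemma~\ref{lemma boost} enters: in the optional UCB phase $\theta_1(t)=\text{MAX}_1\ge\mu_1$ with probability $1-O(1/T)$, and in the mandatory TS phase the fresh Gaussian sample gives the same guarantee by standard anti-concentration, so optimism fails only with probability $O(1/T)$ per round and contributes an $O(1)$ additive term after summing over the horizon. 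Here the arm-specific epoch structure is essential: it makes the $\phi$ samples feeding $\text{MAX}_1$ i.i.d.\ from a fixed $\mathcal{N}(\hat{\mu}_{1,n_1},\ln^{\alpha}(T)/n_1)$, exactly the setting of Lemma~\ref{lemma boost}, and it guarantees each observation enters $\hat{\mu}_{i,n_i}$ at most once so that a single Chernoff/Hoeffding bound controls $\mathbb{P}\{\hat{\mu}_{i,n_i}>x_i\}$ per epoch.

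Next I would bound the pulls charged to the failure of concentration, i.e.\ rounds in which $\theta_i(t)>x_i$ while $\hat{\mu}_{i,n_i}\le x_i$. In the TS phase this is a Gaussian upper-tail estimate for $\mathcal{N}(\hat{\mu}_{i,n_i},\ln^{\alpha}(T)/n_i)$, and in the UCB phase the corresponding tail for the maximum of $\phi$ such samples; both become negligible once $n_i$ exceeds the threshold $\Theta(\ln^{\alpha}(T)\ln(\phi T\Delta_i^2)/\Delta_i^2)$, because there the effective confidence half-width---of order $\sqrt{(\ln^{\alpha}(T)/n_i)\,\ln(\phi T\Delta_i^2)}$, combining the $\ln\phi$ from the max over the budget with the $\ln(T\Delta_i^2)$ from integrating the tail over the horizon---drops below $\Delta_i/2$. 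Carrying out the per-epoch tail sums over the geometric $n_i=1,2,4,\dots$ and integrating the Gaussian tail against the deviation $\hat{\mu}_{i,n_i}-\mu_i$ yields the sharpened logarithm $\ln(\phi T\Delta_i^2)$ rather than a bare $\ln(\phi T)$, mirroring the $\ln(T\Delta_i^2)$ refinement known for TS-Gaussian, and gives the stated per-arm bound.

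The worst-case bound then follows by the standard thresholding argument: for any $\delta>0$,
\begin{equation*}
\mathcal{R}(T)\;\le\;\delta\,T+\sum_{i:\Delta_i>\delta}O\!\left(\frac{\ln^{\alpha+1}(T)}{\Delta_i}\right)\;\le\;\delta\,T+O\!\left(\frac{K\ln^{\alpha+1}(T)}{\delta}\right),
\end{equation*}
where $\sum_i N_i(T)\le T$ handles the small-gap arms; choosing $\delta=\Theta(\sqrt{K\ln^{\alpha+1}(T)/T})$ yields $O(\sqrt{KT}\,\ln^{0.5(1+\alpha)}(T))$. I expect the main obstacle to be the optimism step in the optional UCB phase: one must show that freezing $\text{MAX}_i$ for all rounds of an epoch after the budget $\phi$ is exhausted still supplies enough exploration through Lemma~\ref{lemma boost} without over-inflating sub-optimal arms, and that the choice $\phi=c_0T^{0.5(1-\alpha)}\ln^{0.5(3-\alpha)}(T)$ calibrates $\sqrt{2\ln\phi}$ to the UCB1 confidence width up to constants. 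Coupling the maximum of $\phi$ Gaussian samples to a deterministic confidence bound uniformly across both phases is the delicate quantitative core of the proof.
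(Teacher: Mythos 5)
Your overall decomposition (pull counts, concentration of $\hat{\mu}_{i,n_i}$ via the epoch structure, the $L_i=\Theta(\ln^{\alpha}(T)\ln(\phi T\Delta_i^2)/\Delta_i^2)$ threshold, and the final thresholding at $\Delta_*=\Theta(\sqrt{K\ln^{1+\alpha}(T)/T})$ for the worst-case bound) matches the paper's proof. But there is a genuine gap in your treatment of optimism for the optimal arm. You claim that ``in the mandatory TS phase the fresh Gaussian sample gives the same guarantee by standard anti-concentration, so optimism fails only with probability $O(1/T)$ per round and contributes an $O(1)$ additive term after summing over the horizon.'' This is false. A \emph{single} draw $\theta_1(t)\sim\mathcal{N}(\hat{\mu}_{1,n_1},\ln^{\alpha}(T)/n_1)$ falls below $\mu_1-\Delta_i/2$ with \emph{constant} probability (up to $1/2$ or more whenever $\hat{\mu}_{1,n_1}\le\mu_1-\Delta_i/2$, and close to $1/2$ whenever $n_1\gtrsim\ln^{\alpha}(T)/\Delta_i^2$ and $\hat{\mu}_{1,n_1}\approx\mu_1$). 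The anti-concentration bound only yields the $1-O(1/T)$ guarantee for the \emph{maximum} of $\phi=c_0T^{0.5(1-\alpha)}\ln^{0.5(3-\alpha)}(T)$ i.i.d.\ samples (Lemma~\ref{lemma boost}); that is precisely why $\phi$ is chosen polynomially large in $T$. Since you cannot control a priori how many rounds elapse while $n_1$ is small, a per-round union bound over the horizon cannot absorb the TS-phase optimism failures, and if your claim were true the entire Thompson Sampling regret analysis would collapse to a trivial UCB-style union bound.

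The paper closes this gap with the machinery you are missing: on the event $\mathcal{T}_1(t)$ that arm $1$ uses a fresh sample, it proves a conditional ratio inequality (Lemma~\ref{martingale lemma}, the analogue of Lemma~2.8 of Agrawal--Goyal) relating $\mathbb{P}\{i_t=i,\,\theta_i(t)\le\mu_i+0.5\Delta_i,\,\mathcal{T}_1(t)\mid\mathcal{F}_{t-1}\}$ to $\bigl(1/\mathbb{P}\{\theta_{1,n_1(t-1)}>\mu_1-0.5\Delta_i\mid\mathcal{F}_{t-1}\}-1\bigr)\cdot\mathbb{P}\{i_t=1\mid\mathcal{F}_{t-1}\}$, and then bounds $\mathbb{E}\bigl[1/\mathbb{P}\{\theta_{1,s}>\mu_1-0.5\Delta_i\mid\hat{\mu}_{1,s}\}-1\bigr]$ by a constant for small $s$ and by $O(1/(T\Delta_i^2))$ for $s\gtrsim\ln(T\Delta_i^2)\ln^{\alpha}(T)/\Delta_i^2$ (Lemma~\ref{UBC 2}). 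Only the rounds where arm $1$ is in the optional UCB phase are handled the way you describe, via Lemma~\ref{lemma boost} and a union bound giving $O(\ln T)$. Your proposal needs to split on $\mathcal{T}_1(t)$ versus $\overline{\mathcal{T}_1(t)}$ and supply the ratio argument for the former; as written, the TS-phase contribution is unbounded by your method.
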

Theorem~\ref{thm: regret} gives the following corollary immediately. 
\begin{corollary}
 DP-TS-UCB with trade-off  parameter $\alpha = 0$ achieves $\sum_{i: \Delta_i >0} O \left( \ln \left( T^{1.5} \Delta_i^2 \right) /\Delta_i\right) + O \left( \ln  \ln (T)/\Delta_i\right)$ problem-dependent regret bounds and $O (\sqrt{KT \ln(T)})$ worst-case regret bounds.
 \label{coro: regret}
\end{corollary}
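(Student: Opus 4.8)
The plan is to derive Corollary~\ref{coro: regret} directly from Theorem~\ref{thm: regret} by setting the trade-off parameter to $\alpha = 0$ and simplifying the two resulting regret expressions; no new argument beyond substitution is needed. For the problem-dependent bound, I substitute $\alpha = 0$ into $\sum_{i: \Delta_i >0} O \left( \frac{\ln \left( T^{0.5(3-\alpha)} \Delta_i^2 \right) \ln^{\alpha}(T)}{\Delta_i} + \frac{(3-\alpha)\ln\ln(T) \cdot \ln^{\alpha}(T)}{\Delta_i} \right)$ and use that $\ln^{\alpha}(T) = \ln^{0}(T) = 1$, that the exponent $0.5(3-\alpha)$ becomes $1.5$, and that the coefficient $(3-\alpha)=3$ is a fixed constant absorbed by the $O(\cdot)$ notation. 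This collapses the expression to $\sum_{i: \Delta_i >0} O \left( \ln \left( T^{1.5} \Delta_i^2 \right)/\Delta_i \right) + O \left( \ln\ln(T)/\Delta_i \right)$, exactly as claimed; the rewriting of a single $O$ of a sum of nonnegative terms as a sum of two $O$-terms is a standard and legitimate manipulation.

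For the worst-case bound, I substitute $\alpha = 0$ into $O \left( \sqrt{KT}\,\ln^{0.5(1+\alpha)}(T) \right)$. The exponent $0.5(1+\alpha)$ becomes $0.5$, so $\ln^{0.5(1+\alpha)}(T) = \sqrt{\ln(T)}$, and merging the two radicals gives $O \left( \sqrt{KT}\cdot\sqrt{\ln(T)} \right) = O \left( \sqrt{KT\ln(T)} \right)$, matching the stated bound.

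There is essentially no obstacle here, since the whole argument is a mechanical substitution followed by asymptotic simplification. The only point warranting a moment's attention is confirming that the factor $(3-\alpha)=3$ and the identity $\ln^{\alpha}(T)=1$ may legitimately be absorbed into the $O(\cdot)$ notation, which is immediate since both are constants independent of $T$, $K$, and the gaps $\Delta_i$.
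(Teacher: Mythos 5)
Your proposal is correct and matches the paper exactly: the paper derives Corollary~\ref{coro: regret} as an immediate consequence of Theorem~\ref{thm: regret} by setting $\alpha = 0$, which is precisely the substitution and simplification you carry out. No further commentary is needed.
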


\textbf{Discussion.}
DP-TS-UCB with parameter $\alpha =0 $ can be viewed as a problem-dependent optimal bandit algorithm with theoretical guarantees lying between TS-Gaussian \citep{agrawalnear} and UCB1 \citep{auer2002finite}; 
the $\sum_{i: \Delta_i >0} O \left( \ln \left( T^{1.5} \Delta_i^2 \right) /\Delta_i\right) + O \left( \ln  \ln (T)/\Delta_i\right)$ bound  
is better than the $\sum_{i: \Delta_i >0} O \left( \ln( T ) /\Delta_i\right)$ bound of UCB1, but it is slightly worse than the $\sum_{i: \Delta_i >0} O \left( \ln( T \Delta_i^2 ) /\Delta_i\right)$ bound of TS-Gaussian. 
DP-TS-UCB with parameter $\alpha =0 $ 
is not optimal in terms of regret guarantees, but it offers a constant GDP guarantee (see Corollary~\ref{coro} in Section~\ref{sec: privacy analysis}).


We sketch the proof of Theorem~\ref{thm: regret}. The full proof is deferred to Appendix~\ref{app: regret proof}. Since DP-TS-UCB lies in between TS-Gaussian and UCB1,  the regret analysis includes key ingredients extracted from both algorithms. 
\begin{proof}[Proof sketch of Theorem~\ref{thm: regret}]
Fix a sub-optimal arm $i$.  Let $L_i = O \left( \ln(\phi T \Delta_i^2 ) \ln^{\alpha}(T)/\Delta_i^2\right)$ indicate the number of observations needed to sufficiently observe  sub-optimal arm $i$. 
We know that the total regret accumulated from arm $i$ before $i$ is sufficiently observed is at most $L_i \cdot \Delta_i$. 
    By tuning $L_i$ properly, for all the rounds when  arm $i$ is observed sufficiently, the regret accumulated from arm $i$ can be upper bounded by
    \begin{equation}
        \begin{array}{l}
\sum_{t=K+1}^{T} \mathbb{P}  \left\{i_t = i, \theta_i(t) \le \mu_i + 0.5 \Delta_i \right\} \quad,
      \end{array}
      \label{eq: reg}
    \end{equation}
where $\theta_i(t)$ can either be a fresh Gaussian mean reward model (TS-Gaussian phase, Line~6) or the highest Gaussian mean model used before (UCB phase, Line~9). 

   We further decompose (\ref{eq: reg}) based on whether  the optimal arm $1$ is in TS-Gaussian phase (Line~6) or UCB phase (Line~9). 
Define $\mathcal{T}_1(t)$ as the event that the optimal arm $1$   uses a fresh Gaussian mean reward model in round $t$, i.e., in TS-Gaussian phase, and let $\overline{\mathcal{T}_1(t)}$ denote the complement. 
     We have (\ref{eq: reg}) decomposed as 
\begin{equation}
    \begin{array}{ll}
&\sum_{t=K+1}^{T} \mathbb{P}  \left\{i_t = i, \theta_i(t) \le \mu_i + 0.5 \Delta_i , \mathcal{T}_1(t) \right\} \\
+& \\
 & \sum_{t=K+1}^{T} \mathbb{P}  \left\{i_t = i, \theta_i(t) \le \mu_i + 0.5 \Delta_i , \overline{\mathcal{T}_1(t)} \right\}\quad, 
\end{array}
\end{equation}
where, generally, the first term will use the regret analysis  of TS-Gaussian in \citet{agrawalnear} and the second term will use Lemma~\ref{lemma boost}. In Appendix~\ref{app: regret proof},  we present an improved analysis of TS-Gaussian and show that the regret of the first term is at most $O \left( \ln(\phi T \Delta_i^2 ) \ln^{\alpha}(T)/\Delta_i\right)$. 
The second term uses a union bound and Lemma~\ref{lemma boost}, and is at most $ \sum_{t=K+1}^{T} \mathbb{P}  \left\{i_t = i, \theta_i(t) \le \mu_i + 0.5 \Delta_i , \overline{\mathcal{T}_1(t)} \right\} 
         \le \sum_{t=K+1}^{T} 
\mathbb{P}  \left\{ \theta_1(t) \le \mu_1 , \overline{\mathcal{T}_1(t)} \right\} \le O(\ln(T))$.\end{proof}

\subsection{Privacy Analysis of DP-TS-UCB} \label{sec: privacy analysis}
\begin{table*}[h]
\centering
\caption{Summary of privacy and regret guarantees}
\label{tab:regret_privacy}
\resizebox{0.9\textwidth}{!}{
\begin{tabular}{|l|l|l|l|l|}
\hline
   & \textbf{Regret bounds} & \textbf{GDP guarantees} \\ \hline
TS-G
\footnotesize{\citep{agrawalnear}}           & $O \left(K\ln(T \Delta^2)/\Delta \right)$    & $O (T^{0.5})$                 \\ \hline
M-TS-G   {\footnotesize \citep{ou2024thompsonsamplingdifferentiallyprivate}}          & $bK +  O( c K \ln(T \Delta^2)/\Delta)$    & $O(\sqrt{T/(c(b+1))})$            \\ \hline

M-TS-G {\footnotesize(tune $b\text{,}c = O(T^{\gamma})$\text{,}$\gamma > 0$)}            & $  O( KT^{\gamma}  \ln(T \Delta^2)/\Delta)$    & $O (T^{0.5-\gamma})$
 \\ \hline
M-TS-G {\footnotesize(tune $b\text{,} c = O \left(\ln^{\alpha}(T)\right)$) }           & $ O( K\ln^{\alpha}(T)  \ln(T \Delta^2)/\Delta)$    & $O (T^{0.5}/\ln^{\alpha}(T))$
   \\ \hline
DP-TS-UCB {\footnotesize(Algorithm~\ref{alg:private})}& $ O( K\ln \left( T^{0.5(3-\alpha)}   \Delta^2 \right) \ln^{\alpha}(T)/ \Delta+ K \ln  \ln (T)  \ln^{\alpha}(T)/ \Delta)$    & $ O ( T^{0.25(1-\alpha)} \ln^{0.75(1-\alpha)}(T))$
\\ \hline
DP-TS-UCB {\footnotesize(tune $\alpha = 0$)}& $O ( K\ln \left( T^{1.5} \Delta^2 \right) /\Delta + K \ln  \ln (T)/\Delta)$    & $\tilde{O}\left(T^{0.25}\right)$  \\ \hline
DP-TS-UCB {\footnotesize(tune $\alpha = 1$)}& $O ( K\ln \left( T   \Delta^2 \right) \ln(T)/\Delta )+K \ln  \ln (T)   \ln(T)/\Delta)$    & $O(1)$                      \\ \hline
\end{tabular}
}
\end{table*}
This section provides the privacy analysis of Algorithm~\ref{alg:private}.
\begin{theorem}
\label{thm:dp-dp}
DP-TS-UCB with trade-off parameter $\alpha \in [0,1]$ 
satisfies  $ \sqrt{2c_0 T^{0.5(1-\alpha)} \ln^{1.5(1-\alpha)}(T)}$-GDP.
\end{theorem}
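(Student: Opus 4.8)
The plan is to treat every data-dependent quantity in DP-TS-UCB as arising solely from the Gaussian samples $\theta^{(h)}_{i,n_i}$, bound the GDP of these samples epoch by epoch, and collapse the epoch-wise composition into a convergent geometric series by exploiting the doubling structure. First I would fix two neighbouring reward sequences differing in a single reward vector. The pulled arms (the argmax in Line~12) and the UCB-phase value $\text{MAX}_i$ (Line~9) are deterministic functions of the drawn samples, so by the post-processing property of GDP they contribute no additional privacy loss; it therefore suffices to bound the GDP of the full collection of samples drawn in the mandatory TS-Gaussian phase (Line~6).

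Next I would compute the contribution of a single epoch. In one epoch of arm $i$ the algorithm draws at most $\phi$ i.i.d.\ samples from $\mathcal{N}(\hat{\mu}_{i,n_i}, \ln^{\alpha}(T)/n_i)$, where $n_i$ is the number of observations feeding that epoch's empirical mean. Because the rewards lie in $[0,1]$ and the arm-specific epoch structure guarantees that each observation enters exactly one empirical mean, the sensitivity of $\hat{\mu}_{i,n_i}$ to a single reward change is $1/n_i$. The joint law of the $\phi$ samples is a Gaussian location family $\mathcal{N}(\hat{\mu}_{i,n_i}\mathbf{1}, (\ln^{\alpha}(T)/n_i)\, I_{\phi})$, whose trade-off function is exactly $G_{\eta}$ with $\eta = \sqrt{\phi}\cdot(1/n_i)/\sqrt{\ln^{\alpha}(T)/n_i} = \sqrt{\phi/(n_i\ln^{\alpha}(T))}$; equivalently, this is the composition of $\phi$ single-sample $(1/\sqrt{n_i\ln^{\alpha}(T)})$-GDP Gaussian mechanisms.

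I would then invoke the adaptive GDP composition theorem of \citet{dong2022gaussian}, so that the overall privacy parameter equals the square root of the sum of the squared per-epoch parameters. Since the successive means use $n_i \in \{1,2,4,\dots\}$, the squared parameters form the geometric series $\frac{\phi}{\ln^{\alpha}(T)}\sum_{r\ge 0} 2^{-r} \le \frac{2\phi}{\ln^{\alpha}(T)}$. Substituting $\phi = c_0 T^{0.5(1-\alpha)}\ln^{0.5(3-\alpha)}(T)$ and using the exponent identity $0.5(3-\alpha)-\alpha = 1.5(1-\alpha)$ then yields the claimed $\sqrt{2c_0\, T^{0.5(1-\alpha)}\ln^{1.5(1-\alpha)}(T)}$-GDP bound.

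The main obstacle is the adaptive, cascading dependence across epochs: the samples of one epoch decide which arms are pulled, hence which rewards populate later empirical means, so a single reward change can in principle propagate through the entire trajectory. What makes the epoch-wise composition legitimate despite this is the structural invariant enforced by the arm-specific epoch design---each reward is used in exactly one empirical mean---combined with the adaptive composition theorem, which lets me charge each epoch's samples only against the single mean they query. The geometric decay of the sensitivities $1/n_i$ induced by the doubling epoch lengths is precisely what keeps the composed parameter bounded by a constant multiple of $\phi/\ln^{\alpha}(T)$ rather than growing with the number of epochs.
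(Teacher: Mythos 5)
Your proposal is correct and reaches the same bound $\sqrt{2\phi/\ln^{\alpha}(T)}$, but the accounting is genuinely different from the paper's. The paper first localizes the privacy loss: by the arm-specific epoch structure the changed observation $X_j(\tau)$ enters exactly one empirical mean of one arm, so only that single epoch's Gaussian distribution differs between neighbouring sequences; within that epoch it composes the at most $\phi$ samples of the mandatory TS phase (each $\sqrt{1/\ln^{\alpha}(T)}$-GDP, using the worst case $n_j \ge 1$) to get $\sqrt{\phi/\ln^{\alpha}(T)}$, then treats the optional UCB phase as a second $\sqrt{\phi/\ln^{\alpha}(T)}$-GDP mechanism via post-processing and composes the two phases --- that composition is where its factor of $2$ comes from. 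You instead treat $\mathrm{MAX}_i$ and the argmax as free post-processing (no extra charge for the UCB phase), keep the exact per-epoch parameter $\sqrt{\phi/(n_i\ln^{\alpha}(T))}$, and recover the factor of $2$ from the geometric series $\sum_{r\ge 0}2^{-r}$ over the doubling epochs. Your route is conservative in a different place: you charge every epoch as if it contained the changed observation, whereas only one does (and a sharp version of either argument would in fact give $\sqrt{\phi/\ln^{\alpha}(T)}$, a $\sqrt{2}$ improvement on the stated theorem). What your version buys is a cleaner treatment of the UCB phase --- composing a deterministic function of already-released samples, as the paper does, is unnecessary --- and an argument that does not require identifying which epoch absorbs the changed reward; what the paper's version buys is that the bound is manifestly independent of the number of epochs without needing the geometric decay of the sensitivities. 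One point to make explicit if you write this up: the claim that "each reward is used in exactly one empirical mean" must be applied conditionally on the previous outputs in the adaptive composition framework, since which epoch consumes $X_j(\tau)$ is trajectory-dependent; your appeal to adaptive GDP composition handles this, but it deserves a sentence.
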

 Theorem~\ref{thm:dp-dp} gives the following corollary immediately.
\begin{corollary}
DP-TS-UCB with trade-off parameter $\alpha = 0$  satisfies $O \left(T^{0.25} \ln^{0.75}(T)\right)$-GDP; DP-TS-UCB with trade-off parameter $\alpha = 1$  satisfies $O \left(1\right)$-GDP. 
\label{coro}
\end{corollary}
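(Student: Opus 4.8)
The plan is to derive the corollary directly from Theorem~\ref{thm:dp-dp}, which establishes that DP-TS-UCB with any $\alpha \in [0,1]$ is $\eta(\alpha)$-GDP for $\eta(\alpha) = \sqrt{2 c_0 T^{0.5(1-\alpha)} \ln^{1.5(1-\alpha)}(T)}$. Since both special cases in the corollary correspond to concrete choices of $\alpha$, the corollary reduces to substituting $\alpha = 0$ and $\alpha = 1$ into this closed-form expression and simplifying, recalling that $c_0 = \sqrt{2\pi e}$ is an absolute constant that is absorbed into the $O(\cdot)$ notation.

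First I would handle the case $\alpha = 0$. Setting $\alpha = 0$ in $\eta(\alpha)$ gives $\eta(0) = \sqrt{2 c_0 T^{0.5} \ln^{1.5}(T)}$. Pulling the square root through the product yields $\eta(0) = \sqrt{2 c_0}\, T^{0.25} \ln^{0.75}(T)$. Because $\sqrt{2 c_0} = \sqrt{2 \sqrt{2\pi e}}$ does not depend on $T$, this is $O(T^{0.25} \ln^{0.75}(T))$, which is the claimed GDP guarantee for $\alpha = 0$. Next I would handle the case $\alpha = 1$. Setting $\alpha = 1$ forces both exponents to vanish, since $T^{0.5(1-1)} = T^0 = 1$ and $\ln^{1.5(1-1)}(T) = \ln^0(T) = 1$. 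Hence $\eta(1) = \sqrt{2 c_0}$, a quantity independent of $T$, which is $O(1)$; this establishes the constant GDP guarantee for $\alpha = 1$.

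The main obstacle here is essentially nonexistent: once Theorem~\ref{thm:dp-dp} is in hand, the corollary is pure arithmetic on the post-square-root exponents $0.25(1-\alpha)$ of $T$ and $0.75(1-\alpha)$ of $\ln(T)$. The only point requiring any care is the bookkeeping that $c_0$ is a fixed constant, so that it legitimately disappears into the asymptotic notation. All the genuine difficulty lives \emph{upstream} in proving Theorem~\ref{thm:dp-dp} itself, namely in bounding the total injected Gaussian noise across epochs and across the capped sampling budget $\phi$; the corollary merely reads off two endpoints of that bound.
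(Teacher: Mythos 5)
Your proposal is correct and matches the paper exactly: the paper states that Corollary~\ref{coro} follows immediately from Theorem~\ref{thm:dp-dp}, and the intended derivation is precisely your substitution of $\alpha=0$ and $\alpha=1$ into $\sqrt{2c_0 T^{0.5(1-\alpha)}\ln^{1.5(1-\alpha)}(T)}$ followed by absorbing the constant $\sqrt{2c_0}$ into the $O(\cdot)$ notation. Nothing further is needed.
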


\textbf{Discussion.} 
Together, Theorem~\ref{thm: regret} (regret guarantees) and Theorem~\ref{thm:dp-dp} (privacy guarantees) show that DP-TS-UCB is able to trade off privacy and regret. The privacy guarantee improves with the increase of trade-off parameter $\alpha$, at the cost of suffering more regret.

Table~\ref{tab:regret_privacy} summarizes privacy and regret  guarantees of TS-Gaussian \citep{agrawalnear}, M-TS-Gaussian \citep{ou2024thompsonsamplingdifferentiallyprivate}, and DP-TS-UCB.  From the results, even for the worst case, i.e., $\alpha = 0$,  DP-TS-UCB is still $\tilde{O} \left(T^{0.25}\right)$-GDP, which could be much better than the $O(\sqrt{T})$-GDP guarantee of  TS-Gaussian. 
Since DP-TS-UCB with $\alpha = 1$ achieves 
a constant GDP guarantee,  increasing learning horizon $T$ does not increase privacy cost.
M-TS-Gaussian pre-pulls each arm $b$ times and uses $c/n_i$ as the Gaussian variance. Generally, it achieves $bK + \sum_{i: \Delta_i >0} O( c \log(T \Delta_i^2)/\Delta_i)$ regret bounds and satisfies  $\sqrt{T/(c(b+1))}$-GDP. 
 By tuning $b,c = O\left(\ln^{\alpha}(T)\right)$, M-TS-Gaussian achieves $ \sum_{i: \Delta_i >0} O(\ln^{\alpha}(T) \log(T \Delta_i^2)/\Delta_i)$ regret bounds (almost the same as DP-TS-UCB's regret bounds), but satisfying $O (\sqrt{T}/\ln^{\alpha}(T) )$-GDP guarantees, which could be much worse than the $\tilde{O} \left(T^{0.25} \right)$-GDP guarantees of DP-TS-UCB. 
  By tuning $b, c=O \left(T^{\gamma}\right)$, where $\gamma > 0$, M-TS-Gaussian achieves  $ \sum_{i: \Delta_i >0} O(T^{\gamma}\log(T \Delta_i^2)/\Delta_i)$ regret bounds and satisfies $O (\sqrt{T^{1-2\gamma}})$-GDP. Although the GDP guarantee is improved to be in the order of $\sqrt{T^{1-2\gamma}}$, the regret bound may be  worse than DP-TS-UCB's bounds due to the existence of the $T^{\gamma}$ term. For example, when setting $\gamma = 0.25$, M-TS-Gaussian is $O(T^{0.25})$-GDP, but it has a $ \sum_{i: \Delta_i >0} O(T^{0.25}\log(T \Delta_i^2)/\Delta_i)$ regret bound, which will not be problem-dependent optimal.

Since the classical $(\varepsilon, \delta)$-DP notion is more interpretable, we translate GDP guarantee presented in Theorem~\ref{thm:dp-dp} into $(\varepsilon, \delta)$-DP guarantees by using Theorem~\ref{the: duality}. 
\begin{theorem}
\label{thm:dp-dp2}
DP-TS-UCB  is $(\varepsilon, \delta(\varepsilon))$-DP for all $\varepsilon \ge 0$, where $\delta(\varepsilon) = \Phi \left(-\frac{\varepsilon}{\sqrt{2 \phi}} + \frac{\sqrt{2 \phi}}{2} \right) - e^{\varepsilon} \cdot \Phi \left(-\frac{\varepsilon}{\sqrt{2 \phi}} - \frac{\sqrt{2 \phi}}{2} \right)$, where $\phi = c_0 T^{0.5(1-\alpha)}  \ln^{0.5(3-\alpha)}(T)$.
    \end{theorem}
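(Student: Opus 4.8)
The plan is to obtain the $(\varepsilon, \delta(\varepsilon))$-DP statement as an immediate consequence of the GDP bound in Theorem~\ref{thm:dp-dp} together with the primal-to-dual conversion in Theorem~\ref{the: duality}; the only genuine work is reconciling the two different expressions for the GDP parameter. First I would invoke Theorem~\ref{thm:dp-dp}, which certifies that DP-TS-UCB is $\eta_0$-GDP for $\eta_0 = \sqrt{2 c_0 T^{0.5(1-\alpha)} \ln^{1.5(1-\alpha)}(T)}$. The target statement is phrased in terms of $\eta := \sqrt{2\phi} = \sqrt{2 c_0 T^{0.5(1-\alpha)} \ln^{0.5(3-\alpha)}(T)}$, so I would first verify the elementary inequality $\eta_0 \le \sqrt{2\phi}$. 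The two quantities under the square root differ only in the exponent of the logarithmic factor, and since $1.5(1-\alpha) = 1.5 - 1.5\alpha \le 1.5 - 0.5\alpha = 0.5(3-\alpha)$ for every $\alpha \in [0,1]$, together with $\ln(T) \ge 1$, the inequality follows at once.

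Next I would upgrade this to $\sqrt{2\phi}$-GDP using monotonicity of the trade-off function in the GDP parameter. Recall $G_\eta(x) = \Phi\left(\Phi^{-1}(1-x) - \eta\right)$ is pointwise decreasing in $\eta$; hence being $\eta_0$-GDP (trade-off function $\ge G_{\eta_0}$) with $\eta_0 \le \sqrt{2\phi}$ implies trade-off function $\ge G_{\eta_0} \ge G_{\sqrt{2\phi}}$, so DP-TS-UCB is also $\sqrt{2\phi}$-GDP. Finally, I would apply Theorem~\ref{the: duality} with $\eta = \sqrt{2\phi}$, which yields exactly $(\varepsilon, \delta(\varepsilon))$-DP for all $\varepsilon \ge 0$ with $\delta(\varepsilon) = \Phi\left(-\frac{\varepsilon}{\sqrt{2\phi}} + \frac{\sqrt{2\phi}}{2}\right) - e^{\varepsilon}\, \Phi\left(-\frac{\varepsilon}{\sqrt{2\phi}} - \frac{\sqrt{2\phi}}{2}\right)$, as claimed. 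Equivalently, one can route through the Remark following Theorem~\ref{the: duality}: the tight $\eta_0$ already gives $(\varepsilon, \delta_0(\varepsilon))$-DP, and since $\delta(\cdot)$ is increasing in the GDP parameter and $\eta_0 \le \sqrt{2\phi}$, we have $\delta_0(\varepsilon) \le \delta(\varepsilon)$, so the stated (slightly looser) $\delta$ remains a valid certificate.

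Since each step is a one-line invocation of an already-established result, I do not anticipate a real obstacle. The one place to be careful is the direction of the monotonicity bound: passing from the tight parameter $\eta_0$ to the cleaner but larger $\sqrt{2\phi}$ trades tightness for a closed form expressed purely in terms of the budget $\phi$, and I must ensure I am \emph{weakening} rather than strengthening the guarantee, i.e.\ that a larger GDP parameter, and hence a larger $\delta(\varepsilon)$, still yields a valid $(\varepsilon,\delta)$-DP statement (a guarantee with larger $\delta$ follows from one with smaller $\delta$). The monotonicity remark after Theorem~\ref{the: duality} is precisely what licenses this, so invoking it explicitly removes any ambiguity.
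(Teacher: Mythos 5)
Your proposal is correct and takes essentially the same route as the paper, whose entire proof is the single line ``Directly using Theorem~\ref{the: duality} concludes the proof.'' In fact your version is more careful: Theorem~\ref{thm:dp-dp} certifies $\sqrt{2c_0 T^{0.5(1-\alpha)}\ln^{1.5(1-\alpha)}(T)}$-GDP while the statement of Theorem~\ref{thm:dp-dp2} is written with $\sqrt{2\phi}=\sqrt{2c_0 T^{0.5(1-\alpha)}\ln^{0.5(3-\alpha)}(T)}$, and your monotonicity step (checking $\eta_0\le\sqrt{2\phi}$, noting that a larger GDP parameter is a weaker guarantee, and that $\delta(\varepsilon)$ is increasing in $\eta$) is exactly the bridge the paper leaves implicit.
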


  \begin{proof}
   Directly using Theorem~\ref{the: duality} concludes the proof.
    \end{proof}

The proof for Theorem~\ref{thm:dp-dp} relies on the following composition theorem and post-processing theorem of GDP.
\begin{theorem}[GDP composition  \citep{dong2022gaussian}] The $m$-fold composition of $\eta_j$-GDP mechanisms is $\sqrt{\eta_1^2 + \dotsc + \eta^2_m}$-GDP.
    \label{Thm: GDP composition}
    \end{theorem}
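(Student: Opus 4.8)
The plan is to reduce the $m$-fold statement to the two-fold case by induction, since $\sqrt{\eta_1^2 + \cdots + \eta_m^2} = \sqrt{(\sqrt{\eta_1^2+\cdots+\eta_{m-1}^2})^2 + \eta_m^2}$, and then to carry out the two-fold composition entirely at the level of trade-off functions. The central object is the tensor product of trade-off functions: for distributions $P,Q$ and $P',Q'$, set $T(P,Q) \otimes T(P',Q') := T(P \times P', Q \times Q')$. I would organize the argument around two lemmas. First, an \emph{adaptive composition lemma}: if a mechanism $M_1$ is $f$-DP (its trade-off function dominates $f$) and, for every realized output $y$ of $M_1$, the follow-up mechanism $M_2(\cdot, y)$ is $g$-DP, then the composed mechanism $S \mapsto (M_1(S), M_2(M_1(S), S))$ is $(f \otimes g)$-DP. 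Second, an algebraic identity for Gaussian trade-off functions: $G_{\eta_1} \otimes G_{\eta_2} = G_{\sqrt{\eta_1^2 + \eta_2^2}}$. Granting both, the theorem follows: each $\eta_j$-GDP mechanism has trade-off function dominating $G_{\eta_j}$, the composition lemma shows the joint mechanism dominates $G_{\eta_1} \otimes \cdots \otimes G_{\eta_m}$, and repeated application of the identity collapses this to $G_{\sqrt{\eta_1^2 + \cdots + \eta_m^2}}$, which is exactly the definition of $\sqrt{\eta_1^2 + \cdots + \eta_m^2}$-GDP.

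The Gaussian identity is the clean part, and I would prove it first. Writing the two-fold tensor explicitly, $G_{\eta_1} \otimes G_{\eta_2} = T\big(\mathcal{N}(0,1)\times\mathcal{N}(0,1),\ \mathcal{N}(\eta_1,1)\times\mathcal{N}(\eta_2,1)\big) = T\big(\mathcal{N}(\mathbf 0, I_2),\ \mathcal{N}((\eta_1,\eta_2), I_2)\big)$. Because the standard bivariate Gaussian is rotationally invariant, the optimal testing boundary between two spherical Gaussians of equal covariance depends only on the Euclidean distance between their means; rotating so that the displacement $(\eta_1,\eta_2)$ aligns with a coordinate axis reduces the problem to testing $\mathcal{N}(0,1)$ against $\mathcal{N}(\|(\eta_1,\eta_2)\|, 1)$. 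Since $\|(\eta_1,\eta_2)\| = \sqrt{\eta_1^2 + \eta_2^2}$, this equals $G_{\sqrt{\eta_1^2+\eta_2^2}}$. Iterating through the inductive reduction then gives the $m$-fold identity.

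The substance of the proof is the adaptive composition lemma, which I expect to be the main obstacle. Two sub-issues arise. First, well-definedness of $\otimes$: one must verify that $T(P \times P', Q \times Q')$ depends only on the trade-off functions $T(P,Q)$ and $T(P',Q')$, not on the particular representing distributions; this rests on the fact that a testing problem is characterized, up to Blackwell sufficiency, by its trade-off function, and that the product problem combines the per-coordinate likelihood ratios. Second, handling adaptivity: the follow-up mechanism's distribution depends on the realized output $y_1$ of $M_1$, so the joint output distributions $P_{12}, Q_{12}$ do not literally factor as products. I would control this by conditioning on $y_1$, where the joint likelihood ratio factors as $\frac{dP_{12}}{dQ_{12}}(y_1, y_2) = \frac{dP_1}{dQ_1}(y_1)\cdot \frac{dP_{2\mid y_1}}{dQ_{2\mid y_1}}(y_2)$; since the conditional $g$-DP guarantee holds uniformly in $y_1$, a monotonicity/stochastic-dominance argument on the combined log-likelihood ratio shows the joint trade-off function is lower bounded by the non-adaptive product $f \otimes g$. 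Making this final step rigorous — showing adaptivity cannot beat the worst-case product coupling — is the delicate point, and I would close it using either the Neyman--Pearson characterization of trade-off functions or the primal--dual correspondence with the family of $(\varepsilon,\delta)$ guarantees in Theorem~\ref{the: duality}.
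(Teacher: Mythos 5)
The paper contains no proof of Theorem~\ref{Thm: GDP composition}: it is imported verbatim, with citation, from \citet{dong2022gaussian}, so there is no in-paper argument to compare against. Your proposal is in effect a reconstruction of the original proof from that reference, and the skeleton is the right one: reduce the $m$-fold claim to the two-fold case by induction (using associativity of the tensor product and the fact that the first $m-1$ stages can be bundled into a single mechanism), establish well-definedness of $f \otimes g$ up to Blackwell sufficiency, prove the Gaussian identity $G_{\eta_1} \otimes G_{\eta_2} = G_{\sqrt{\eta_1^2+\eta_2^2}}$, and extend from non-adaptive products to adaptive composition. Your argument for the Gaussian identity is complete and correct as sketched: by rotational invariance of $\mathcal{N}(\mathbf{0}, I_2)$ — equivalently, because the likelihood ratio depends only on the projection of the observation onto the mean-difference direction, which is a sufficient statistic — the two-dimensional problem is equivalent to testing $\mathcal{N}(0,1)$ against $\mathcal{N}\left(\sqrt{\eta_1^2+\eta_2^2},\,1\right)$. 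Note also that handling adaptivity is not an optional refinement here: in this paper's application the per-round Gaussian draws are adaptively chosen (the distribution sampled in each round depends on past pulls), so the adaptive form of the lemma is exactly what the privacy proof of Theorem~\ref{thm:dp-dp} needs.

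The one place where your write-up is a plan rather than a proof is the adaptive composition lemma, and you have honestly flagged it as such — but be aware that it is not a lemma en route to the theorem; it essentially \emph{is} the theorem (Theorem~3.2 of \citet{dong2022gaussian}), and all of the substance lives there. Your conditioning strategy is the right mechanism: the factorization $\frac{dP_{12}}{dQ_{12}}(y_1,y_2) = \frac{dP_1}{dQ_1}(y_1)\cdot\frac{dP_{2\mid y_1}}{dQ_{2\mid y_1}}(y_2)$ is valid, and the uniform-in-$y_1$ conditional guarantee is what makes adaptivity harmless. The gap is the final ``monotonicity/stochastic-dominance'' step: the joint likelihood ratio is a mixture over $y_1$ of conditional ratios, and stochastic dominance does not pass through mixtures for free; what one actually needs is a monotonicity property of $\otimes$ under trade-off domination ($g' \ge g$ implies $f \otimes g' \ge f \otimes g$) together with an argument that the optimal joint test, conditioned on $y_1$, cannot outperform the worst-case conditional test — which is where \citet{dong2022gaussian} invest a separate, nontrivial appendix argument. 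Your two suggested routes for closing it (Neyman--Pearson conditioning with convexity of $g$, or the primal--dual correspondence with the $(\varepsilon,\delta(\varepsilon))$ family of Theorem~\ref{the: duality}) are both viable, but as written the step remains a statement of intent. For the purposes of this paper, which uses the theorem as a black box, your outline is sound and matches the source's architecture; as a standalone proof it is incomplete at precisely that point.
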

\begin{theorem}[GDP Post-processing \citep{dong2022gaussian}]
 If a mechanism $\mathcal{A}$ is $\eta$-GDP, its post-processing is also $\eta$-GDP. 
    \label{thm: post-proc}
\end{theorem}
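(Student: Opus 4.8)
The plan is to reduce the claim to the \emph{data-processing inequality} for trade-off functions: any (possibly randomized) processing applied to the output of $\mathcal{A}$ can only make the two output distributions harder to distinguish, and hence can only raise the trade-off function. Concretely, fix any pair of neighbouring reward sequences $X_{1:t}$ and $X'_{1:t}$ and any $t \le T$, and write $P := \mathcal{A}(X_{1:t})$ and $Q := \mathcal{A}(X'_{1:t})$ for the laws of the mechanism's output on some space $\Omega$. I would model the post-processing as a Markov kernel $\mathrm{Proc}(\cdot \mid \omega)$ from $\Omega$ to a second space $\Omega'$ that does \emph{not} depend on the raw data, so that the post-processed laws are the pushforwards $\mathrm{Proc}(P)$ and $\mathrm{Proc}(Q)$. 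By Definition~\ref{def:GM} the hypothesis gives $T(P,Q) \ge G_\eta$, and the goal becomes $T(\mathrm{Proc}(P), \mathrm{Proc}(Q)) \ge G_\eta$.

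The key step is a lifting of tests. Given any rejection rule $\psi' : \Omega' \to [0,1]$ on the coarser space, I would define $\psi(\omega) := \mathbb{E}_{y \sim \mathrm{Proc}(\cdot \mid \omega)}[\psi'(y)]$, which is a rejection rule on $\Omega$. Applying Fubini (equivalently the tower property) gives $\alpha_\psi = \mathbb{E}_P[\psi] = \mathbb{E}_{\mathrm{Proc}(P)}[\psi']$ and $\beta_\psi = 1 - \mathbb{E}_Q[\psi] = 1 - \mathbb{E}_{\mathrm{Proc}(Q)}[\psi']$. Thus every test against the post-processed output corresponds to a test against the raw output achieving \emph{identical} type-I and type-II errors.

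I would then finish by a monotonicity-of-infimum argument. The identities above show that the set of error pairs $(\alpha,\beta)$ attainable against $(\mathrm{Proc}(P), \mathrm{Proc}(Q))$ is contained in the set attainable against $(P,Q)$; since the trade-off function is an infimum of $\beta$ over the feasible region $\{\alpha \le x\}$, enlarging the feasible region can only decrease the infimum, so $T(P,Q)(x) \le T(\mathrm{Proc}(P), \mathrm{Proc}(Q))(x)$ for every $x \in [0,1]$. Chaining with $T(P,Q) \ge G_\eta$ yields $T(\mathrm{Proc}(P), \mathrm{Proc}(Q)) \ge G_\eta$. Because the neighbouring pair and the index $t \le T$ were arbitrary, the post-processed mechanism satisfies Definition~\ref{def:GM}, i.e.\ it is $\eta$-GDP.

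The main obstacle I anticipate is technical rather than conceptual: representing an arbitrary randomized post-processing as a data-independent Markov kernel and justifying the interchange of expectations (measurability of $\psi$, applicability of Fubini) so that the error-preserving identities hold verbatim. One must also keep the direction of the trade-off inequality straight, since it is the \emph{enlargement} of the feasible set of tests on the finer space that forces the infimum downward; confusing this direction is the easiest way to get the bound backwards. Once these points are settled, the conclusion is immediate from Definition~\ref{def:GM}.
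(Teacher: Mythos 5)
The paper gives no proof of Theorem~\ref{thm: post-proc} at all: it is imported verbatim from \citet{dong2022gaussian}, where the post-processing property of trade-off-function privacy is established by precisely the argument you give, namely the data-processing inequality for trade-off functions. Your proof is correct and essentially identical to the canonical one — the lifting $\psi(\omega) := \mathbb{E}_{y \sim \mathrm{Proc}(\cdot \mid \omega)}\left[\psi'(y)\right]$ preserves both error rates by Fubini, so every test feasible against $\left(\mathrm{Proc}(P), \mathrm{Proc}(Q)\right)$ is matched by one against $(P,Q)$, forcing $T\left(\mathrm{Proc}(P), \mathrm{Proc}(Q)\right)(x) \ge T(P,Q)(x) \ge G_{\eta}(x)$ for all $x \in [0,1]$, and quantifying over neighbouring sequences and $t \le T$ gives $\eta$-GDP under Definition~\ref{def:GM}.
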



\begin{proof}[Proof of Theorem~\ref{thm:dp-dp}] Fix any two neighbouring reward sequences  $X_{1:T} = \left(X_1, \dotsc, X_{\tau} \dotsc, X_T \right)$ and $X'_{1:T} = \left(X_1,  \dotsc, X'_{\tau}, \dotsc X_T \right)$, where
 the complete reward vector in round $\tau$ is changed. Under the bandit feedback model, this change only  impacts the empirical mean of the arm pulled in round $\tau$, that is arm $i_{\tau}$. Name $i_{\tau} = j$: based on the arm-specific epoch structure (Figure~\ref{fig:arm_specific_epoch}),
  the observation $X_{j}(\tau)$ will only be used once for computing the empirical mean of arm $j$ at the end of some future round,  which is the last round of some   epoch $r_j -1$ associated with arm $j$.

We have one Gaussian distribution constructed using $X_{j}(\tau)$ at the beginning of epoch $r_j$. 
If arm $j$ only has the mandatory TS-Gaussian phase in epoch $r_j$, we draw at most
 $\phi$ Gaussian mean reward models from that constructed Gaussian distribution. From Lemma~5 of \citet{ou2024thompsonsamplingdifferentiallyprivate}, we know DP-TS-UCB is $\sqrt{1/\ln^{\alpha}(T)}$-GDP in each round in the mandatory TS-Gaussian phase.  From Theorem~\ref{Thm: GDP composition}, we know the GDP composition over at most $\phi$ rounds is $\sqrt{\phi  /\ln^{\alpha}(T)}$-GDP. Note that $X_{j}(\tau)$ will not be used to construct Gaussian distributions starting from epoch $r_j +1$ to the end of learning due to the usage of arm-specific epoch structure, i.e., we abandon $X_{j}(\tau)$ at the end of epoch $r_j$.

If arm $j$  has both the mandatory TS-Gaussian phase and the optional UCB phase in epoch $r_j$, for the mandatory TS-Gaussian phase, DP-TS-UCB is $\sqrt{\phi  /\ln^{\alpha}(T)}$-GDP; for the optional UCB phase, DP-TS-UCB is also $\sqrt{\phi  /\ln^{\alpha}(T)}$-GDP, as by post-processing Theorem~\ref{thm: post-proc}, the maximum $\text{MAX}_j $  of $\phi$ Gaussian mean reward models  is $\sqrt{\phi /\ln^{\alpha}(T) }$-GDP.  
Composing the privacy guarantees in these two phases concludes the proof. 
\end{proof}

\section{Experimental Results}

\begin{figure}[h]
    \centering
    \subfigure[The impact of learning horizon $T$ and trade-off parameter $\alpha$ on the regret by the end of round $T$.]{
    \includegraphics[width=0.9\linewidth]{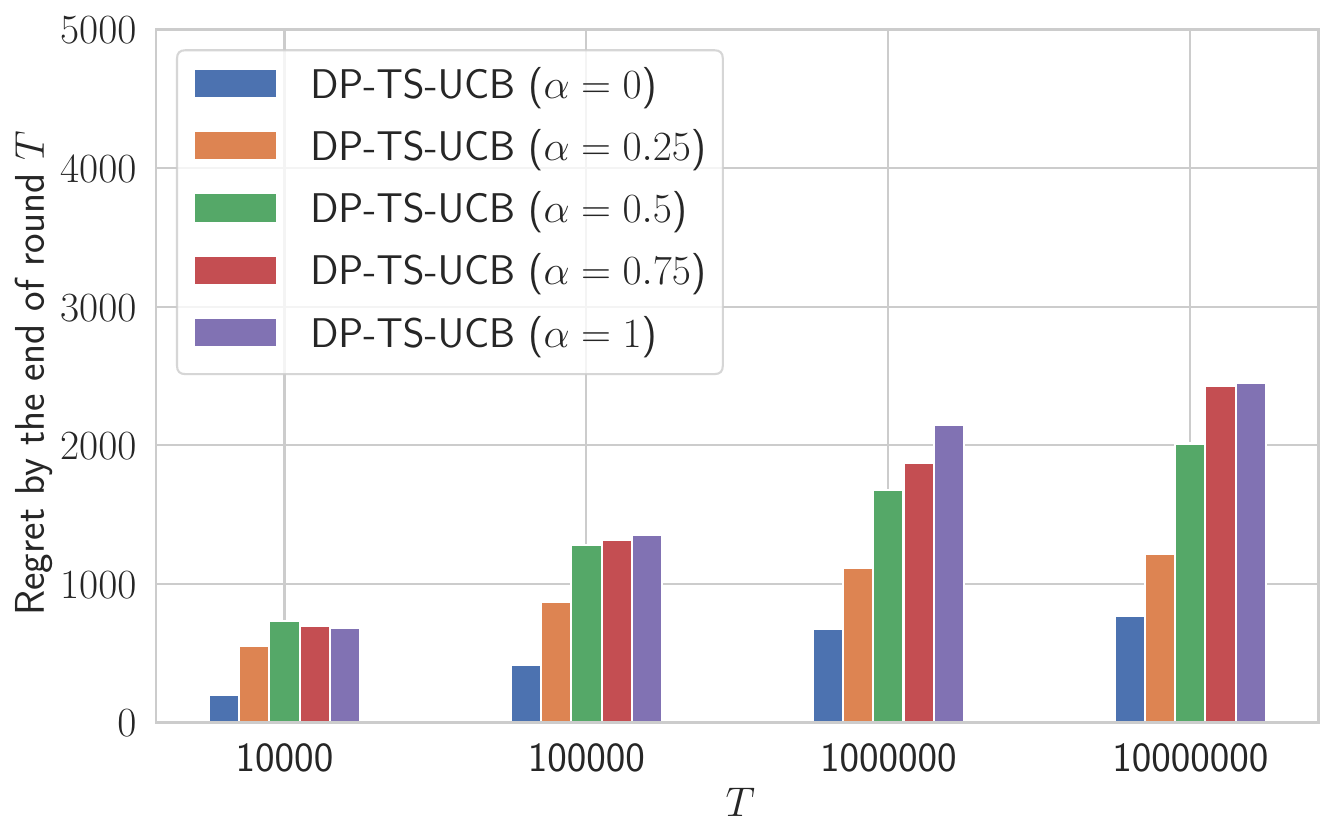}
    \label{fig:varyingAlphaT}
    }
    \subfigure[The impact of learning horizon $T$ and trade-off parameter $\alpha$ on the GDP parameter $\eta$ by the end of round $T$.]{
        \includegraphics[width=0.9\linewidth]{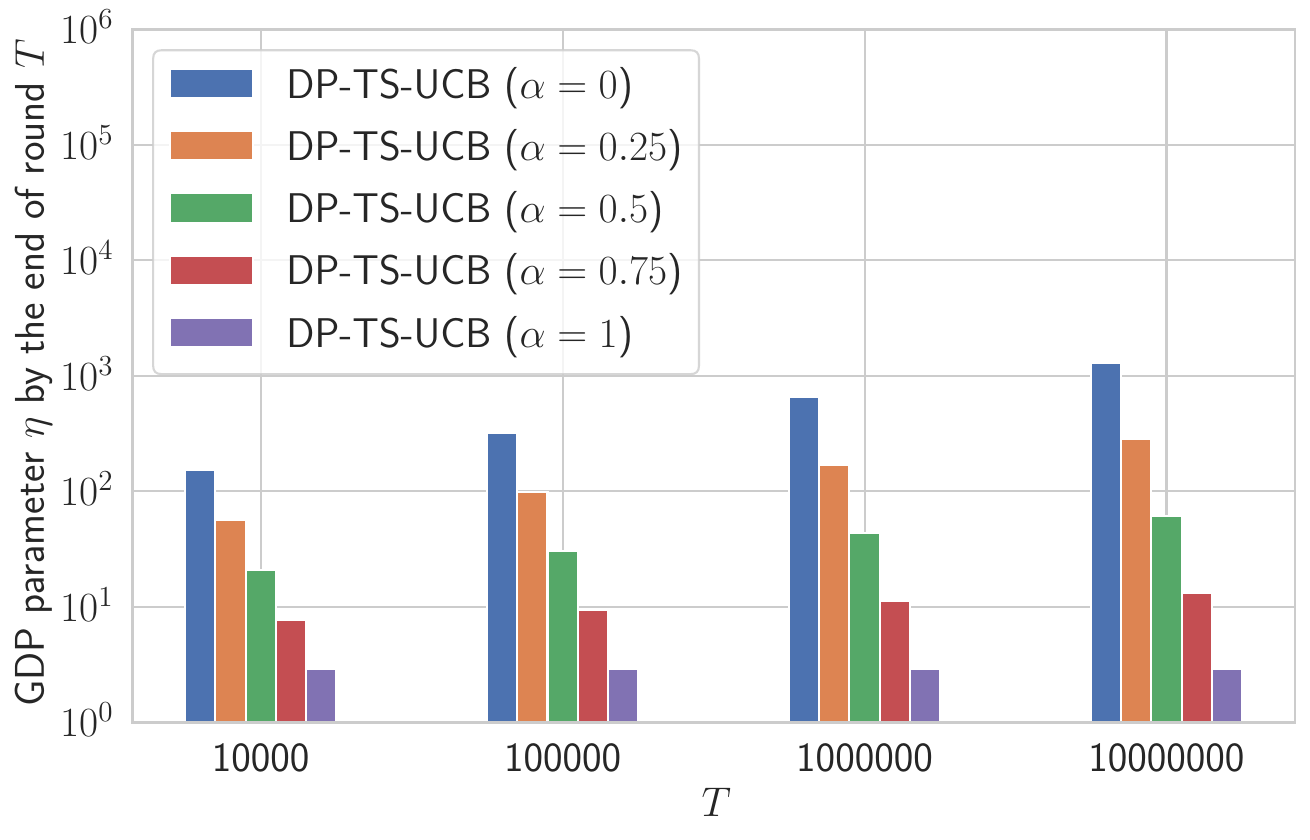}
        \label{fig:eta_varyingAlpha}
    }
    \caption{DP-TS-UCB's privacy  vs regret with different   $\alpha$ and $T$.} \label{fig:dp_ts_ucb_privacy_vs_regret}
\end{figure}

The setup consists of five arms with Bernoulli rewards. We set the mean rewards as $[0.95, 0.75, 0.55, 0.35, 0.15]$. We first analyze DP-TS-UCB's privacy and regret across different values of $\alpha$ and $T$. Then, we compare DP-TS-UCB with M-TS-Gaussian~\citep{ou2024thompsonsamplingdifferentiallyprivate} from two perspectives: (1) \textbf{Privacy cost under equal regret}; (2) \textbf{Regret under equal privacy guarantee}.  
We also compare with $(\varepsilon, 0)$-DP algorithms, including DP-SE~\citep{sajed2019optimal}, Anytime-Lazy-UCB~\citep{hu2021near}, and Lazy-DP-TS~\citep{hu2022near} for $\varepsilon = 0.5$, which can be found in Appendix~\ref{app:epsi}. 
All the experimental results are an average of $20$ independent runs on a MacBook Pro with M1 Max and 32GB RAM.

\subsection{Privacy and Empirical Regret  of DP-TS-UCB with Different Values of $\alpha$ and $T$}

The performance of DP-TS-UCB in terms of the privacy guarantees and regret across different values of  $\alpha$  and time horizons  $T$ are shown in Figure~\ref{fig:dp_ts_ucb_privacy_vs_regret}. The results reveal a tradeoff between regret minimization and privacy preservation: increasing  $\alpha$  leads to a stronger privacy guarantee, reflected in a lower GDP parameter  $\eta$, but at the cost of higher regret. However, when  $\alpha = 1$, the privacy guarantee becomes constant, meaning that increasing  $T$  no longer deteriorates the privacy protection of DP-TS-UCB.

\subsection{Privacy and Empirical Regret Comparison under the Same Theoretical Regret Bound}
Since DP-TS-UCB with parameter $\alpha$ and M-TS-Gaussian with parameters $b=0$, $c = 5\ln^{\alpha}(T)$) share the same theoretical regret bound, we now present empirical regret and privacy guarantees for different values of $\alpha = \{0, 0.25, 0.5, 0.75, 1 \}$. We set $T = 10^6$. Figure~\ref{fig:fixregret_varyingalpha} shows that DP-TS-UCB incurs lower empirical regret than M-TS-Gaussian, whereas 
Figure~\ref{fig:varyingalpha_eta} shows that DP-TS-UCB achieves better privacy. 

\begin{figure}[h]
    \centering
    \subfigure[Regret by the end of round $T$ of DP-TS-UCB and M-TS-Gaussian with different parameters.]{
        \includegraphics[width=0.9\linewidth]{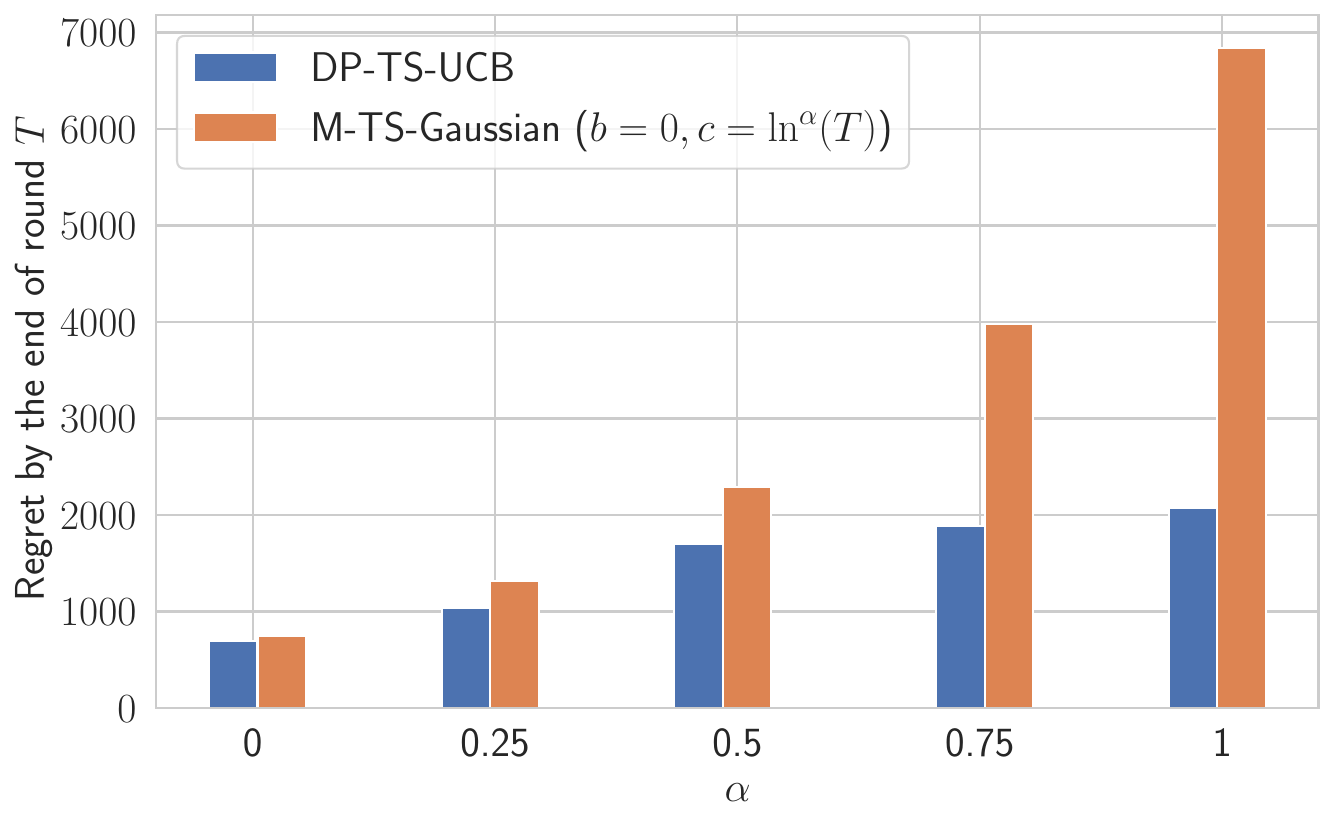}
        \label{fig:fixregret_varyingalpha}
    }
    \hfill
    \subfigure[GDP parameter $\eta$ by the end of round $T$ of DP-TS-UCB and M-TS-Gaussian  with different parameters.]{
        \includegraphics[width=0.9\linewidth]{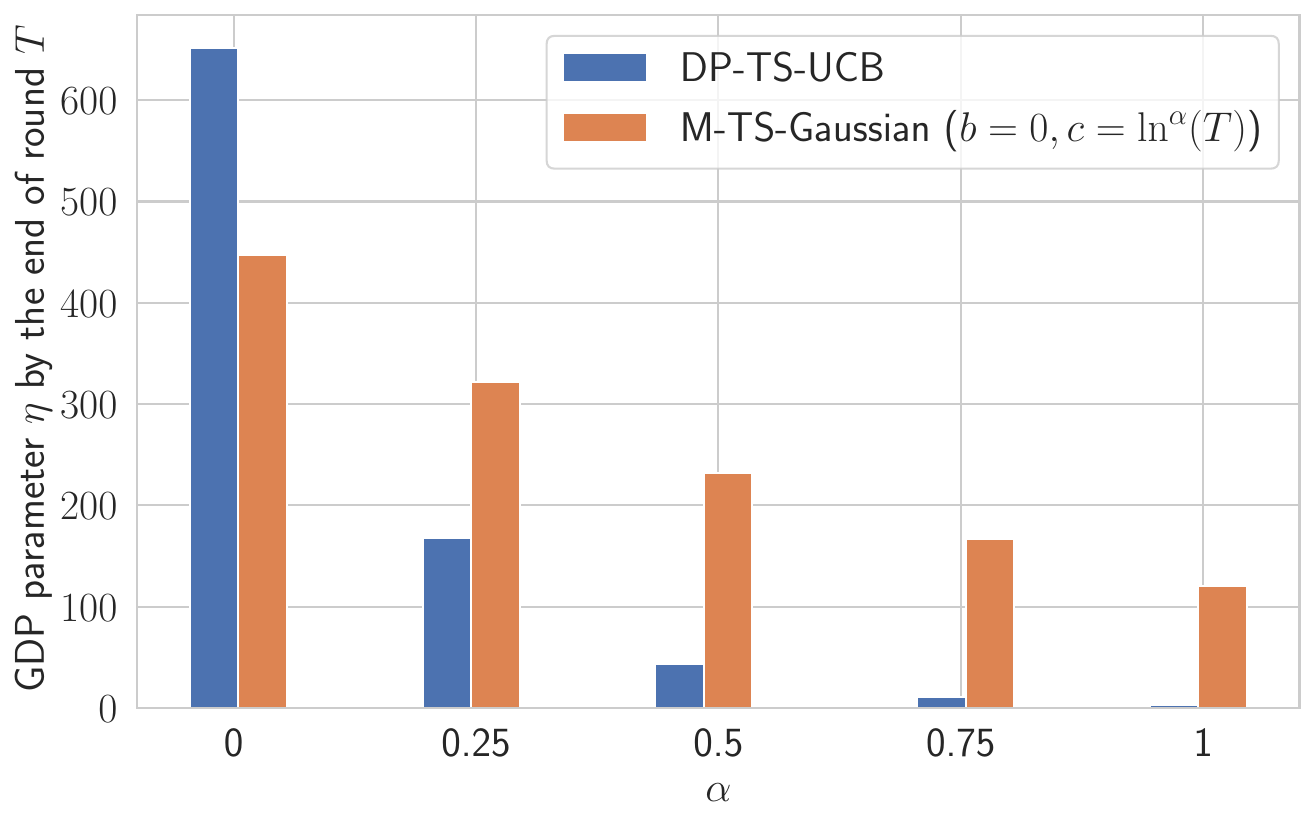}
        \label{fig:varyingalpha_eta}
    }
    \caption{The performance of DP-TS-UCB and M-TS-Gaussian under the same theoretical regret bound.}
    \label{fig:main-fig}
\end{figure}

\subsection{Empirical Regret Comparison  under the Same Privacy Guarantee}\label{sc:privacy}
M-TS-Gaussian satisfies a $\sqrt{T/(c(b+1))}$-GDP guarantee, while DP-TS-UCB satisfies $\sqrt{2c_0 T^{0.5(1-\alpha)} \ln^{1.5(1-\alpha)}(T)}$-GDP. Thus, we let $c = \sqrt{\frac{1}{2c_0 (b+1)}T^{0.5(1+\alpha)} \ln^{-1.5(1-\alpha)}T}$ for any $b$ of M-TS-Gaussian to ensure the same privacy guarantees as DP-TS-UCB. 
We compare their empirical regret over $T = 10^6$ rounds under two privacy settings determined by $\alpha$ for both algorithms. For each $\alpha$, we select $b$ from $\{0, 1, 500, 1000, 2000, 5000, 100000\}$ to minimize regret of  M-TS-Gaussian (see Appendix~\ref{appx:mtsg_pars}).  

\textbf{$\sqrt{2c_0 T^{0.5} \ln^{1.5} T}$-GDP Guarantee ($\alpha = 0$).}  
The optimal M-TS-Gaussian parameters are $b = 1$ and $c = 1.18$. As shown in Figure~\ref{fig:tregret_Bernoulli_without_epsi}, M-TS-Gaussian slightly outperforms DP-TS-UCB, but the empirical regret gap is small.  

\textbf{$\sqrt{2c_0}$-GDP Guarantee ($\alpha = 1$).}  
For this setting, the best M-TS-Gaussian parameters are $b = 2000$ and $c = 60.46$. However, DP-TS-UCB achieves lower regret, significantly outperforming M-TS-Gaussian, as shown in Figure~\ref{fig:c0regret_Bernoulli_without_epsi}.

\begin{figure}[h]
    \centering
    \subfigure[Empirical regret for  $O(\sqrt{T^{0.5} \ln^{1.5} T})$ -GDP  ($\alpha = 0$).]{
        \includegraphics[width=0.9\linewidth]{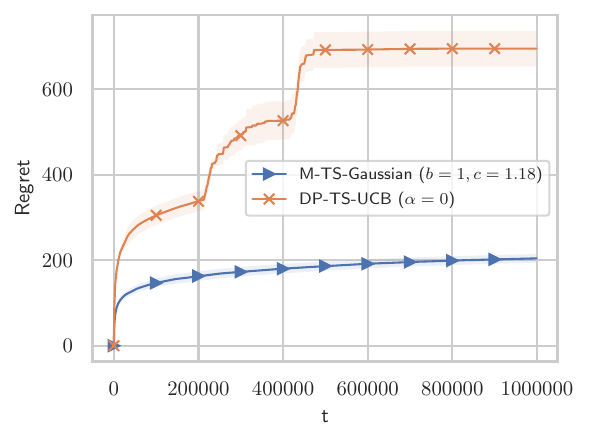}
        \label{fig:tregret_Bernoulli_without_epsi}
    }
    \subfigure[Empirical regret for $\sqrt{2c_0}$-GDP ($\alpha = 1$).]{
        \includegraphics[width=0.9\linewidth]{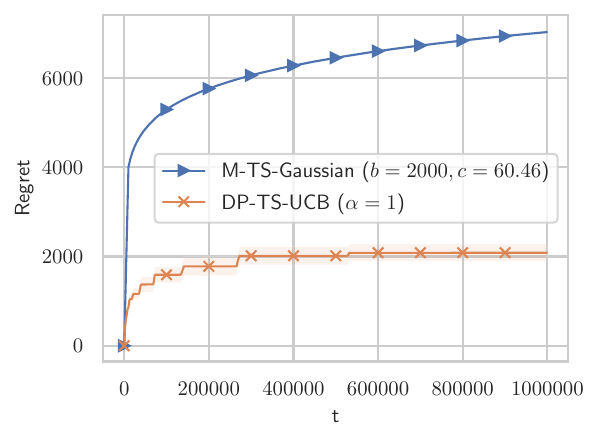}
        \label{fig:c0regret_Bernoulli_without_epsi}
    }
    \caption{The regret of DP-TS-UCB and M-TS-Gaussian under the same privacy guarantee with $\alpha = 0$ and $\alpha = 1$.}
    \label{fig:regret_same_privacy}
\end{figure}

\section{Conclusion}
This paper presents a novel private stochastic bandit algorithm DP-TS-UCB (Algorithm~\ref{alg:private}) by leveraging the connection between exploration mechanisms in TS-Gaussian and UCB1. 
We first 
show that DP-TS-UCB satisfies $\tilde{O}( T^{0.25(1-\alpha)})$-GDP and then we translate this GDP guarantee to the classical $(\varepsilon, \delta)$-DP guarantees by using duality between these two privacy notions. 
  Corollary~\ref{coro: regret} and Corollary~\ref{coro} show that  DP-TS-UCB with parameter $\alpha =0$  achieves the optimal $O (K \ln(T)/\Delta)$ problem-dependent regret bounds and the near-optimal $O (\sqrt{KT \ln T} )$ worst-case regret bounds, and satisfies $\tilde{O}\left(T^{0.25} \right)$-GDP. This privacy guarantee could be much better than the $O(\sqrt{T})$-GDP guarantees achieved by TS-Gaussian and M-TS-Gaussian of \citet{ou2024thompsonsamplingdifferentiallyprivate}.  We conjecture that our privacy improvement is at the cost of the anytime property of the learning algorithm and the worst-case regret bounds. Note that both TS-Gaussian and M-TS-Gaussian are anytime and 
  achieve $O (\sqrt{KT \ln K} )$ worst-case regret bounds, whereas our DP-TS-UCB is not anytime and  achieves only $O (\sqrt{KT \ln T} )$ worst-case regret bounds. 
If we know the maximum mean reward gap $\Delta_{\max} = \mathop{\max}_{i \in [K]}\Delta_i$ in advance, by slightly modifying the theoretical analysis, we know a better choice of $\phi$ should be  the one depending on $\Delta_{\max}$. Tuning $\phi$ that depends on $\Delta_{\max}$ will provide problem-dependent GDP guarantees. 
 This intuition motivates us to develop  private algorithms that achieve problem-dependent GDP guarantees as the main future work. 

\section*{Acknowledgements}
Bingshan Hu is grateful for the funding support from the Natural Sciences and Engineering Resource Council of Canada (NSERC),
the Canada CIFAR AI chairs program,
and the UBC Data Science Institute. 
Zhiming Huang would like to acknowledge funding support from NSERC and the British Columbia Graduate Scholarship. Tianyue H. Zhang is grateful for the support from Canada CIFAR AI chairs program and Samsung Electronics Co., Limited. Mathias Lécuyer is grateful for the support of NSERC with reference number RGPIN-2022-04469.
Nidhi Hegde would like to acknowledge funding support from the Canada CIFAR AI Chairs program.
 \section*{Impact Statement}
Privacy-preserving sequential decision-making is important in modern interactive machine learning systems, particularly in bandit learning and its general variant reinforcement learning (RL). Our work contributes to this field by proposing a novel differentially private bandit algorithm that connects classical algorithms in the RL community and DP community.  
Understanding the interplay between  decision-making algorithms like Thompson Sampling, and privacy mechanisms and notions is fundamental to advance the deployment of RL algorithms using sensitive data. 

\bibliography{example_paper}

\begin{thebibliography}{26}
\providecommand{\natexlab}[1]{#1}
\providecommand{\url}[1]{\texttt{#1}}
\expandafter\ifx\csname urlstyle\endcsname\relax
  \providecommand{\doi}[1]{doi: #1}\else
  \providecommand{\doi}{doi: \begingroup \urlstyle{rm}\Url}\fi

\bibitem[Agrawal \& Goyal(2017)Agrawal and Goyal]{agrawalnear}
Agrawal, S. and Goyal, N.
\newblock {N}ear-optimal regret bounds for {T}hompson {S}ampling.
\newblock \url{http://www.columbia.edu/~sa3305/papers/j3-corrected.pdf}, 2017.

\bibitem[Audibert et~al.(2007)Audibert, Munos, and Szepesv{\'a}ri]{audibert2007tuning}
Audibert, J.-Y., Munos, R., and Szepesv{\'a}ri, C.
\newblock Tuning bandit algorithms in stochastic environments.
\newblock In \emph{International conference on algorithmic learning theory}, pp.\  150--165. Springer, 2007.

\bibitem[Auer \& Ortner(2010)Auer and Ortner]{auer2010ucb}
Auer, P. and Ortner, R.
\newblock {UCB} revisited: Improved regret bounds for the stochastic multi-armed bandit problem.
\newblock \emph{Periodica Mathematica Hungarica}, 61\penalty0 (1-2):\penalty0 55--65, 2010.

\bibitem[Auer et~al.(2002)Auer, Cesa-Bianchi, and Fischer]{auer2002finite}
Auer, P., Cesa-Bianchi, N., and Fischer, P.
\newblock Finite-time analysis of the multi-armed bandit problem.
\newblock \emph{Machine learning}, 47:\penalty0 235--256, 2002.

\bibitem[Azize \& Basu(2022)Azize and Basu]{azize2022privacy}
Azize, A. and Basu, D.
\newblock When privacy meets partial information: A refined analysis of differentially private bandits.
\newblock \emph{Advances in Neural Information Processing Systems}, 35:\penalty0 32199--32210, 2022.

\bibitem[Baudry et~al.(2021)Baudry, Saux, and Maillard]{baudry2021optimality}
Baudry, D., Saux, P., and Maillard, O.-A.
\newblock From optimality to robustness: Adaptive re-sampling strategies in stochastic bandits.
\newblock \emph{Advances in Neural Information Processing Systems}, 34:\penalty0 14029--14041, 2021.

\bibitem[Bian \& Jun(2022)Bian and Jun]{bian2022maillard}
Bian, J. and Jun, K.-S.
\newblock Maillard sampling: {B}oltzmann exploration done optimally.
\newblock In \emph{International Conference on Artificial Intelligence and Statistics}, pp.\  54--72. PMLR, 2022.

\bibitem[Dong et~al.(2022)Dong, Roth, and Su]{dong2022gaussian}
Dong, J., Roth, A., and Su, W.~J.
\newblock Gaussian differential privacy.
\newblock \emph{Journal of the Royal Statistical Society Series B: Statistical Methodology}, 84\penalty0 (1):\penalty0 3--37, 2022.

\bibitem[Dwork et~al.(2014)Dwork, Roth, et~al.]{dwork2014algorithmic}
Dwork, C., Roth, A., et~al.
\newblock The algorithmic foundations of differential privacy.
\newblock \emph{Foundations and Trends{\textregistered} in Theoretical Computer Science}, 9\penalty0 (3--4):\penalty0 211--407, 2014.

\bibitem[Garivier \& Capp{\'e}(2011)Garivier and Capp{\'e}]{garivier2011kl}
Garivier, A. and Capp{\'e}, O.
\newblock The {KL-UCB} algorithm for bounded stochastic bandits and beyond.
\newblock In \emph{Proceedings of the 24th annual conference on learning theory}, pp.\  359--376. JMLR Workshop and Conference Proceedings, 2011.

\bibitem[Honda \& Takemura(2010)Honda and Takemura]{honda2010asymptotically}
Honda, J. and Takemura, A.
\newblock An asymptotically optimal bandit algorithm for bounded support models.
\newblock In \emph{COLT}, pp.\  67--79. Citeseer, 2010.

\bibitem[Honda \& Takemura(2015)Honda and Takemura]{honda2015non}
Honda, J. and Takemura, A.
\newblock Non-asymptotic analysis of a new bandit algorithm for semi-bounded rewards.
\newblock \emph{J. Mach. Learn. Res.}, 16:\penalty0 3721--3756, 2015.

\bibitem[Hu \& Hegde(2022)Hu and Hegde]{hu2022near}
Hu, B. and Hegde, N.
\newblock {N}ear-optimal {T}hompson {S}ampling-based algorithms for differentially private stochastic bandits.
\newblock In \emph{Uncertainty in Artificial Intelligence}, pp.\  844--852. PMLR, 2022.

\bibitem[Hu et~al.(2021)Hu, Huang, and Mehta]{hu2021near}
Hu, B., Huang, Z., and Mehta, N.~A.
\newblock Near-optimal algorithms for private online learning in a stochastic environment.
\newblock \emph{arXiv preprint arXiv:2102.07929}, 2021.

\bibitem[Jin et~al.(2021)Jin, Xu, Shi, Xiao, and Gu]{jin2021mots}
Jin, T., Xu, P., Shi, J., Xiao, X., and Gu, Q.
\newblock {MOTS}: {M}inimax optimal {T}hompson {S}ampling.
\newblock In \emph{International Conference on Machine Learning}, pp.\  5074--5083. PMLR, 2021.

\bibitem[Jin et~al.(2022)Jin, Xu, Xiao, and Anandkumar]{jin2022finite}
Jin, T., Xu, P., Xiao, X., and Anandkumar, A.
\newblock Finite-time regret of {T}hompson {S}ampling algorithms for exponential family multi-armed bandits.
\newblock \emph{Advances in Neural Information Processing Systems}, 35:\penalty0 38475--38487, 2022.

\bibitem[Jin et~al.(2023)Jin, Yang, Xiao, and Xu]{jin2023thompson}
Jin, T., Yang, X., Xiao, X., and Xu, P.
\newblock {T}hompson {S}ampling with less exploration is fast and optimal.
\newblock 2023.

\bibitem[Kaufmann et~al.(2012{\natexlab{a}})Kaufmann, Capp{\'e}, and Garivier]{kaufmann2012bayesian}
Kaufmann, E., Capp{\'e}, O., and Garivier, A.
\newblock On {Bayesian} upper confidence bounds for bandit problems.
\newblock In \emph{Artificial intelligence and statistics}, pp.\  592--600. PMLR, 2012{\natexlab{a}}.

\bibitem[Kaufmann et~al.(2012{\natexlab{b}})Kaufmann, Korda, and Munos]{kaufmann2012thompson}
Kaufmann, E., Korda, N., and Munos, R.
\newblock {T}hompson {S}ampling: {A}n asymptotically optimal finite-time analysis.
\newblock In \emph{Algorithmic Learning Theory: 23rd International Conference, ALT 2012, Lyon, France, October 29-31, 2012. Proceedings 23}, pp.\  199--213. Springer, 2012{\natexlab{b}}.

\bibitem[Lattimore(2018)]{lattimore2018refining}
Lattimore, T.
\newblock Refining the confidence level for optimistic bandit strategies.
\newblock \emph{The Journal of Machine Learning Research}, 19\penalty0 (1):\penalty0 765--796, 2018.

\bibitem[Mishra \& Thakurta(2015)Mishra and Thakurta]{mishra2015nearly}
Mishra, N. and Thakurta, A.
\newblock {(Nearly)} optimal differentially private stochastic multi-arm bandits.
\newblock In \emph{Proceedings of the Thirty-First Conference on Uncertainty in Artificial Intelligence}, pp.\  592--601, 2015.

\bibitem[Ou et~al.(2024)Ou, Medina, and Cummings]{ou2024thompsonsamplingdifferentiallyprivate}
Ou, T., Medina, M.~A., and Cummings, R.
\newblock Thompson sampling itself is differentially private, 2024.
\newblock URL \url{https://arxiv.org/abs/2407.14879}.

\bibitem[Riou \& Honda(2020)Riou and Honda]{riou2020bandit}
Riou, C. and Honda, J.
\newblock Bandit algorithms based on {Thompson Sampling} for bounded reward distributions.
\newblock In \emph{Algorithmic Learning Theory}, pp.\  777--826. PMLR, 2020.

\bibitem[Sajed \& Sheffet(2019)Sajed and Sheffet]{sajed2019optimal}
Sajed, T. and Sheffet, O.
\newblock An optimal private stochastic-mab algorithm based on optimal private stopping rule.
\newblock In \emph{International Conference on Machine Learning}, pp.\  5579--5588. PMLR, 2019.

\bibitem[Shariff \& Sheffet(2018)Shariff and Sheffet]{shariff2018differentially}
Shariff, R. and Sheffet, O.
\newblock Differentially private contextual linear bandits.
\newblock \emph{Advances in Neural Information Processing Systems}, 31, 2018.

\bibitem[Wang \& Zhu(2024)Wang and Zhu]{wang2024optimal}
Wang, S. and Zhu, J.
\newblock Optimal learning policies for differential privacy in multi-armed bandits.
\newblock \emph{Journal of Machine Learning Research}, 25\penalty0 (314):\penalty0 1--52, 2024.

\end{thebibliography}
\bibliographystyle{icml2025}

\newpage
\appendix
\onecolumn
The appendix is organized as follows.
\begin{enumerate}
    \item Useful facts are provided in Appendix~\ref{app: facts};
    \item Proofs  for Lemma~\ref{lemma boost} is presented in Appendix~\ref{app: lemma proof};
    \item Proofs for Lemma~\ref{UBC 2} is presented in Appendix~\ref{lemma old proof};
    \item Proofs for Theorem~\ref{thm: regret} is presented in Appendix~\ref{app: regret proof};
    \item Additional experimental results are presented in Appendix~\ref{app: more exp}.
    
\end{enumerate}

\section{Useful facts} \label{app: facts}
\begin{fact}
    For any $T > e^3$, for any $\alpha \in [0,1]$, we have
  $\ln^{1-\alpha} (T) \le (1-\alpha)\ln(T) +1$.
    \label{Fact: calculus}
\end{fact}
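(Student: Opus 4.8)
The plan is to reduce the inequality to a single clean application of the weighted arithmetic–geometric mean inequality (equivalently, Young's inequality). First I would substitute $\beta := 1-\alpha$ and $x := \ln(T)$. Since $\alpha \in [0,1]$ we have $\beta \in [0,1]$, and since $T > e^3$ we have $x > 3 > 0$. In this notation the claim $\ln^{1-\alpha}(T) \le (1-\alpha)\ln(T)+1$ becomes exactly
\begin{equation*}
x^{\beta} \le \beta x + 1 .
\end{equation*}

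The key step is to view $x^{\beta}$ as a weighted geometric mean of $x$ and $1$ with weights $\beta$ and $1-\beta$: namely $x^{\beta} = x^{\beta}\cdot 1^{\,1-\beta}$. The weighted AM–GM inequality, valid for nonnegative $x,1$ and weights $\beta, 1-\beta$ summing to $1$, then gives
\begin{equation*}
x^{\beta} = x^{\beta}\cdot 1^{\,1-\beta} \le \beta \cdot x + (1-\beta)\cdot 1 = \beta x + 1 - \beta \le \beta x + 1,
\end{equation*}
where the last inequality uses $\beta \ge 0$. Substituting back $\beta = 1-\alpha$ and $x = \ln(T)$ recovers the stated bound, and in fact shows it holds for every $T > 1$, so the hypothesis $T > e^3$ is only a convenient sufficient condition.

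There is essentially no hard part here; the only thing to get right is identifying the correct tool and the correct choice of arguments ($a = x$, $b = 1$) in AM–GM. If one prefers a self-contained argument that avoids invoking Young's inequality, I would instead define $f(\beta) := \beta x + 1 - x^{\beta}$ and differentiate twice in $\beta$: since $f''(\beta) = -\ln^2(x)\, x^{\beta} \le 0$, the function $f$ is concave on $[0,1]$, and because $f(0) = 0$ and $f(1) = 1$, concavity forces $f(\beta) \ge (1-\beta)f(0) + \beta f(1) = \beta \ge 0$ throughout $[0,1]$, giving the same conclusion.
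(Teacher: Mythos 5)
Your proof is correct, and it takes a genuinely different route from the paper's. After the substitution $\beta = 1-\alpha$, $x=\ln(T)$, you reduce the claim to $x^{\beta}\le \beta x + 1$ and dispatch it with weighted AM--GM applied to $x^{\beta}\cdot 1^{1-\beta}$, which even yields the slightly stronger bound $x^{\beta}\le \beta x + (1-\beta)$; your backup argument (concavity of $\beta\mapsto \beta x + 1 - x^{\beta}$ via the second derivative, plus the endpoint values $0$ and $1$) is also sound. The paper instead works directly with $f(\alpha)=(1-\alpha)\ln(T)+1-\ln^{1-\alpha}(T)$, computes the first derivative, locates the interior critical point $\alpha = \ln(\ln(\ln(T)))/\ln(\ln(T))$, and argues $f$ is increasing then decreasing with $f(0)=1$ and $f(1)=0$. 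Your approach buys two things: it avoids the critical-point bookkeeping entirely, and it exposes that the hypothesis $T>e^{3}$ is superfluous (any $T>1$ suffices), whereas the paper's argument implicitly needs $\ln(\ln(T))>1$ for its critical point to make sense. The paper's approach buys nothing extra here beyond being self-contained calculus; if anything, your AM--GM one-liner is the cleaner proof.
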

\begin{proof}
    Let function $f(\alpha) = (1-\alpha)\ln(T) +1-\ln^{1-\alpha} (T)$, where variable $ \alpha \in [0,1]$. 
        Then, we have $f'(\alpha) = -\ln(T)+\ln^{1-\alpha}(T)\ln(\ln(T))$.
    It is not hard to verify that $f'\left( \frac{\ln(\ln(\ln(T)))}{\ln(\ln(T))} \right)=0$. 
        The fact that  $f'(\alpha) \ge 0$ when $\alpha \in \left[0, \frac{\ln(\ln(\ln(T)))}{\ln(\ln(T))} \right]$ gives  $f(\alpha) \ge f(0) = 1 >0$ for any  $\alpha \in \left[0, \frac{\ln(\ln(\ln(T)))}{\ln(\ln(T))} \right]$. Similarly, the fact that 
   $f'(\alpha) \le 0$ when $\alpha \in \left[ \frac{\ln(\ln(\ln(T)))}{\ln(\ln(T))},1 \right]$ gives
   $f(\alpha) \ge f(1) = 0$ for any  $\alpha \in \left[ \frac{\ln(\ln(\ln(T)))}{\ln(\ln(T))},1 \right]$. Therefore, we have $f(\alpha) \ge 0$ for any $\alpha \in [0,1]$. 
\end{proof}
\begin{fact}[Hoeffding's inequality]
  Let $X_1, X_2, \dotsc, X_n$ be $n$ independent random variables with support $[0,1]$. Let ${\mu}_{1:n} = \frac{1}{n}\sum_{i=1}^{n}X_i$. Then, for any $a >0$, we have
   $ \mathbb{P} \left\{\left|{\mu}_{1:n} -\mathbb{E}\left[{\mu}_{1:n}\right]\right|  \ge a  \right\} \le 2e^{-2na^2}$.
    \label{Hoeffding}
\end{fact}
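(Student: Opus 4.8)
The plan is to bound the two tails of $\mu_{1:n} - \mathbb{E}[\mu_{1:n}]$ separately by the Chernoff (exponential moment) method and then combine them with a union bound. First I would introduce the centred summands $Y_k := X_k - \mathbb{E}[X_k]$, so that $\mu_{1:n} - \mathbb{E}[\mu_{1:n}] = \frac{1}{n}\sum_{k=1}^{n} Y_k$, and observe that the event in question is contained in $\{\mu_{1:n} - \mathbb{E}[\mu_{1:n}] \ge a\} \cup \{\mu_{1:n} - \mathbb{E}[\mu_{1:n}] \le -a\}$. It therefore suffices to bound the upper tail, since the lower tail follows by applying the same argument to $-X_k$, and a union bound over the two tails supplies the factor of $2$.

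For the upper tail I would fix $\lambda > 0$ and apply Markov's inequality to $\exp(\lambda \sum_k Y_k)$, which together with independence of the $X_k$ gives
\begin{equation}
\mathbb{P}\left\{ \sum_{k=1}^{n} Y_k \ge na \right\} \le e^{-\lambda n a} \, \mathbb{E}\left[ e^{\lambda \sum_k Y_k} \right] = e^{-\lambda n a} \prod_{k=1}^{n} \mathbb{E}\left[ e^{\lambda Y_k} \right] \quad.
\end{equation}
The crux of the argument --- and the step I expect to be the main obstacle --- is \emph{Hoeffding's lemma}: for each $k$, the variable $Y_k$ is zero-mean and supported on an interval of width $1$ (since $X_k \in [0,1]$), which forces $\mathbb{E}[e^{\lambda Y_k}] \le e^{\lambda^2/8}$. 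I would establish this by setting $\psi(\lambda) := \ln \mathbb{E}[e^{\lambda Y_k}]$ and noting $\psi(0) = 0$ and $\psi'(0) = \mathbb{E}[Y_k] = 0$; the second derivative $\psi''(\lambda)$ is the variance of $Y_k$ under the exponentially tilted measure, and as the variance of a random variable confined to an interval of width $1$ it is at most $1/4$. A second-order Taylor expansion then yields $\psi(\lambda) \le \lambda^2/8$, i.e.\ the claimed moment bound. This calculus lemma is the only nontrivial ingredient; everything else is bookkeeping.

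Combining the two displays gives $\mathbb{P}\{\sum_k Y_k \ge na\} \le \exp(-\lambda n a + n\lambda^2/8)$ for every $\lambda > 0$. I would finish by optimizing the exponent over $\lambda$: the minimizer is $\lambda = 4a$, which makes the exponent equal to $-2na^2$ and hence bounds the upper tail by $e^{-2na^2}$. Adding the identical lower-tail bound obtained from the symmetric argument yields the stated $2e^{-2na^2}$, completing the proof.
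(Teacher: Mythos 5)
Your proof is correct: the Chernoff bound combined with Hoeffding's lemma (with the tilted-measure variance bound of $1/4$ for a variable on an interval of width $1$, and the optimal choice $\lambda = 4a$ giving the exponent $-2na^2$) is the standard argument, and the union bound over the two tails correctly supplies the factor of $2$. The paper states this result as a known fact without proof, so there is nothing to compare against beyond noting that your derivation is the textbook one and checks out.
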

\begin{fact}[Concentration and anti-concentration bounds of Gaussian distributions] For a Gaussian distributed random variable $Z$ with mean $\mu$ and variance $\sigma^2$, for any $z > 0$, we have
\begin{equation}
\begin{array}{l}
 \mathbb{P} \left\{Z > \mu + z \sigma \right\} \le \frac{1}{2}e^{- \frac{z^2}{2}}, \quad \mathbb{P} \left\{Z < \mu - z \sigma \right\} \le \frac{1}{2}e^{- \frac{z^2}{2}}\quad,
 \end{array}
    \label{Fact 2}
\end{equation}
and 
\begin{equation}
\begin{array}{l}
  \mathbb{P} \left\{Z > \mu + z \sigma \right\} \ge \frac{1}{\sqrt{2\pi}} \frac{z}{z^2+1} e^{- \frac{z^2}{2}} \quad.
  \end{array}
     \label{Fact 1}
\end{equation}
\end{fact}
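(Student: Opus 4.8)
The plan is to reduce both displays to the standard normal upper-tail function and then bound it by two elementary monotonicity arguments. Write $\overline{\Phi}(z) := 1 - \Phi(z) = \int_z^{\infty} \tfrac{1}{\sqrt{2\pi}} e^{-t^2/2}\,dt$ and let $W := (Z-\mu)/\sigma \sim \mathcal{N}(0,1)$. By this standardization, $\mathbb{P}\{Z > \mu + z\sigma\} = \mathbb{P}\{W > z\} = \overline{\Phi}(z)$, and by the symmetry of $\mathcal{N}(0,1)$ also $\mathbb{P}\{Z < \mu - z\sigma\} = \mathbb{P}\{W < -z\} = \overline{\Phi}(z)$. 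Hence both inequalities in (\ref{Fact 2}) follow from the single upper bound $\overline{\Phi}(z) \le \tfrac{1}{2} e^{-z^2/2}$, and (\ref{Fact 1}) is exactly the lower bound $\overline{\Phi}(z) \ge \tfrac{1}{\sqrt{2\pi}} \tfrac{z}{z^2+1} e^{-z^2/2}$.

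For the upper bound I would define $g(z) := \tfrac{1}{2} e^{-z^2/2} - \overline{\Phi}(z)$ on $[0,\infty)$. A direct computation gives $g(0)=0$ and $\lim_{z\to\infty} g(z) = 0$ (both terms decay), while $g'(z) = e^{-z^2/2}\big(\tfrac{1}{\sqrt{2\pi}} - \tfrac{z}{2}\big)$ changes sign exactly once, being positive on $[0,\sqrt{2/\pi})$ and negative on $(\sqrt{2/\pi},\infty)$. Thus $g$ rises from $0$ and then falls back toward $0$, so $g(z) \ge 0$ for all $z \ge 0$, which is the claimed upper bound.

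For the lower bound I would define $h(z) := \overline{\Phi}(z) - \tfrac{1}{\sqrt{2\pi}}\tfrac{z}{z^2+1} e^{-z^2/2}$ on $(0,\infty)$ and show $h$ is strictly decreasing with $\lim_{z\to\infty} h(z)=0$. Differentiating and using $\overline{\Phi}'(z) = -\tfrac{1}{\sqrt{2\pi}} e^{-z^2/2}$, the derivative of the second term contributes a factor $\tfrac{1-2z^2-z^4}{(z^2+1)^2}$; after collecting terms over the common denominator $(z^2+1)^2$ the numerator telescopes to the constant $(z^2+1)^2 + (1 - 2z^2 - z^4) = 2$, giving the clean expression $h'(z) = -\tfrac{2}{\sqrt{2\pi}}\,\tfrac{e^{-z^2/2}}{(z^2+1)^2} < 0$. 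Since $h$ is decreasing to the limit $0$ at infinity, $h(z) > 0$ for every finite $z > 0$, establishing (\ref{Fact 1}).

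The routine parts are the two derivative computations; the one genuine thing to get right --- and the step I expect to be the main obstacle --- is the algebra in $h'(z)$, specifically verifying that the $(z^2+1)^2$ numerator collapses to the constant $2$, since that cancellation is exactly what makes $h'$ sign-definite and drives the whole argument. I would write the standard normal density explicitly as $\tfrac{1}{\sqrt{2\pi}}e^{-t^2/2}$ throughout rather than abbreviating it, so as not to clash with the sampling budget $\phi$ used elsewhere in the paper.
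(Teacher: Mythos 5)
Your proposal is correct, and every computation in it checks out: the standardization step is valid; $g'(z)=e^{-z^2/2}\bigl(\tfrac{1}{\sqrt{2\pi}}-\tfrac{z}{2}\bigr)$ with the single sign change at $z=\sqrt{2/\pi}$, together with $g(0)=0$ and $g(z)\to 0$, does force $g\ge 0$; and the algebra you flagged as the one delicate point is right, since $(z^2+1)^2+(1-2z^2-z^4)=2$, giving $h'(z)=-\tfrac{2}{\sqrt{2\pi}}\,\tfrac{e^{-z^2/2}}{(z^2+1)^2}<0$, which with $h(z)\to 0$ yields $h>0$ on $(0,\infty)$.

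There is nothing in the paper to compare against: the paper states this as a Fact in the appendix and gives no proof, treating both bounds as classical (they are the standard Gaussian tail bound and the Mills-ratio-type anti-concentration bound). Your derivation is essentially the textbook one — the lower bound is usually obtained exactly by your monotone-comparison trick, i.e., observing that $\tfrac{z}{z^2+1}\,\tfrac{1}{\sqrt{2\pi}}e^{-z^2/2}$ has derivative $-\tfrac{1}{\sqrt{2\pi}}e^{-z^2/2}\bigl(1-\tfrac{2}{(z^2+1)^2}\bigr)$, so that the difference $h$ has the sign-definite derivative you computed — and it makes the paper's appendix self-contained where the authors chose to cite folklore instead. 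One cosmetic remark: for the upper bound you do not actually need the limit at infinity on the increasing piece; $g(0)=0$ with $g'\ge 0$ handles $[0,\sqrt{2/\pi}\,]$, and the decay to $0$ is only needed on the decreasing piece, which is exactly how you used it.
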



\section{Proofs for Lemma~\ref{lemma boost}} \label{app: lemma proof}

\begin{proof}[Proof of Lemma~\ref{lemma boost}]

Let  $\mathcal{E}_{i,s}^{\mu}$ denote the event that 
$\left| \hat{\mu}_{i,s} - \mu_i \right| \le \sqrt{\ln(T )/s} $ holds. Let $ \overline{\mathcal{E}_{i,s}^{\mu} }$ denote the complement.  We have
      \begin{equation}
        \begin{array}{lll}
             \mathbb{P} \left\{ \mathop{\max}_{h \in [\phi]} \theta^{(h)}_{i, s} \le \mu_i \right\}  
         &  \le &\mathbb{P} \left\{\mathcal{E}_{i,s}^{\mu} \right\} \mathbb{P} \left\{ \mathop{\max}_{h \in [\phi]} \theta^{(h)}_{i, s} \le \mu_i \mid \mathcal{E}_{i,s}^{\mu} \right\} + \mathbb{P} \left\{ \overline{\mathcal{E}_{i,s}^{\mu} }\right\}  \\
            &  \le &\mathop{\prod}\limits_{h \in [\phi]} \mathbb{P} \left\{ \theta^{(h)}_{i, s} \le \hat{\mu}_{i,s} + \sqrt{\ln(T)/s} \mid \mathcal{E}_{i,s}^{\mu} \right\} + 2 e^{-2 \ln(T)}  \\
          &  = &\mathop{\prod}\limits_{h \in [\phi]} \left( 1-\mathbb{P}\left\{ \theta_{1,s}^{(h)} > \hat{\mu}_{1,s} + \sqrt{\ln^{1-\alpha}(T )\ln^{\alpha}(T )/s} \mid \mathcal{E}_{i,s}^{\mu} \right\} \right)+2/T^2 \\
           &      \le^{(a)} & \mathop{\prod}\limits_{h \in [\phi]} \left(1 - \frac{1}{\sqrt{2\pi}} \cdot \frac{\sqrt{\ln^{1-\alpha}(T)}}{\ln^{1-\alpha}(T)+1} e^{-0.5 { \ln^{1-\alpha}(T)}}  \right) + 2/T^2 \\
                 &       \le^{(b)} &  \left(1 - \frac{1}{\sqrt{2\pi}} \cdot \frac{\sqrt{\ln^{1-\alpha}(T)}}{\ln^{1-\alpha}(T)+1} e^{-0.5 {\left((1-\alpha)\ln(T)+1 \right)}}  \right)^{\phi} +2/T^2\\
                   &      = &  \left(1 - \frac{1}{\sqrt{2\pi e}} \cdot \frac{\sqrt{\ln^{1-\alpha}(T)}}{\ln^{1-\alpha}(T)} e^{-0.5 \left((1-\alpha)\ln(T) \right)}  \right)^{\phi} +2/T^2\\
                     &       \le^{(c)} & e^{- \phi / \sqrt{2\pi e} \cdot \frac{1}{\sqrt{\ln^{1-\alpha}(T)}} \cdot \frac{1}{T^{0.5(1-\alpha)}}} +2/T^2 \\
                       &     = & e^{- \sqrt{2\pi e}T^{0.5(1-\alpha)}  \ln^{0.5(3-\alpha)}(T)  / \sqrt{2\pi e} \cdot \frac{1}{\sqrt{\ln^{1-\alpha}(T)}} \cdot \frac{1}{T^{0.5(1-\alpha)}}} +2/T^2 \\
         &  \le & 3/T,
           
        \end{array}
    \end{equation}
   where step $(a)$ uses the anti-concentration bound shown in (\ref{Fact 1}), step (b) uses $\ln^{1-\alpha}(T) \le (1-\alpha)\ln(T)+1 $ shown in Fact~\ref{Fact: calculus}, and step (c) uses $(1-x) \le e^{-x}$.
    \end{proof}

\section{Proofs for Lemma~\ref{UBC 2}} \label{lemma old proof}

For the case where $\alpha = 0$, Lemma~\ref{UBC 2} below is an improved version of Lemma~2.13 in \citet{agrawalnear} and our new results imply both improved problem-dependent and problem-independent regret bounds for Algorithm~2 in \citet{agrawalnear}. Assume $T\Delta_i^2 > e$. 
\begin{lemma}\label{UBC 2}

Let $\theta_{1,s} \sim \mathcal{N}\left(\hat{\mu}_{1,s}, \ \frac{\ln^{\alpha}(T)}{s}\right)$. Then, for any integer $s \ge 1$, we have
\begin{equation}
\begin{array}{lll}
\mathbb{E}_{\hat{\mu}_{1,s}}\left[\frac{1}{\mathbb{P} \left\{ \theta_{1,s} > \mu_1 \mid \hat{\mu}_{1,s} \right\} }-1   \right] &\le& 12.34 \quad.
\end{array}
\label{WWW1}
\end{equation}
Also, for any integer $s \ge \frac{4(1+\sqrt{2})^2 \ln(T \Delta_i^2) \ln^{\alpha}(T)}{\Delta_i^2}$, we have
\begin{equation}
\begin{array}{lll}
\mathbb{E}_{\hat{\mu}_{1,s}}\left[\frac{1}{\mathbb{P} \left\{ \theta_{1,s} > \mu_1 - \frac{\Delta_i}{2}\mid \hat{\mu}_{1,s} \right\} }-1   \right] &\le& \frac{72}{T\Delta_i^2}\quad.
\end{array}
\label{WWW11}
\end{equation}
\end{lemma}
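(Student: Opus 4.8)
The plan is to treat both displays with a single reduction. Write $\sigma_s := \sqrt{\ln^{\alpha}(T)/s}$ for the sampling standard deviation and condition on $\hat{\mu}_{1,s}$. Since $\theta_{1,s}$ is then Gaussian, for a threshold $y$ (equal to $\mu_1$ in \eqref{WWW1} and to $\mu_1-\Delta_i/2$ in \eqref{WWW11}) we have $\mathbb{P}\{\theta_{1,s}>y\mid\hat{\mu}_{1,s}\}=1-\Phi(g)$ with $g:=(y-\hat{\mu}_{1,s})/\sigma_s$. Hence the integrand equals $\tfrac{1}{1-\Phi(g)}-1=\tfrac{\Phi(g)}{1-\Phi(g)}$, and the whole problem reduces to bounding $\mathbb{E}_{\hat{\mu}_{1,s}}\big[\Phi(g)/(1-\Phi(g))\big]$, where the only randomness is in $\hat{\mu}_{1,s}$, the mean of $s$ i.i.d.\ $[0,1]$-valued rewards with expectation $\mu_1$.

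First I would split on the sign of $g$, i.e.\ on whether the empirical mean lies above or below the threshold. When $g\le 0$, we have $1-\Phi(g)\ge 1/2$, so $\Phi(g)/(1-\Phi(g))\le 2\Phi(g)\le e^{-g^2/2}$ by the concentration bound \eqref{Fact 2}; when $g>0$ I would instead use $\Phi(g)/(1-\Phi(g))\le 1/(1-\Phi(g))$ and the anti-concentration bound \eqref{Fact 1} to get $1/(1-\Phi(g))\le \sqrt{2\pi}\,\tfrac{g^2+1}{g}e^{g^2/2}$, handling the harmless range $g\in(0,1]$ (where this estimate is loose) with a crude constant. The key structural observation is that $g=(\mu_1-\hat{\mu}_{1,s})\sqrt{s/\ln^{\alpha}(T)}$ for \eqref{WWW1}, so the anti-concentration growth $e^{g^2/2}$ is run against a Hoeffding decay: by Fact~\ref{Hoeffding}, $\mathbb{P}\{\mu_1-\hat{\mu}_{1,s}>x\}\le e^{-2sx^2}$, and the net exponent is negative precisely because $s\sigma_s^2=\ln^{\alpha}(T)\ge 1$.

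To produce a number I would integrate by parts (layer-cake) against the tail of $\hat{\mu}_{1,s}$: substituting $u=(\mu_1-\hat{\mu}_{1,s})/\sigma_s$, the $g>0$ contribution collapses to $\int_0^\infty \mathrm{poly}(u)\,e^{-u^2(2\ln^{\alpha}(T)-1/2)}\,du$, a convergent integral bounded by an absolute constant once $\ln^{\alpha}(T)\ge 1$; together with the $\le 1$ contribution from $g\le 0$ this gives the uniform constant $12.34$ of \eqref{WWW1}. For \eqref{WWW11} the threshold drops by $\Delta_i/2$, so reaching $g>0$ now requires a deviation of at least $\Delta_i/2$, contributing an extra $e^{-2s(\Delta_i/2)^2}=e^{-s\Delta_i^2/2}$, which under $s\ge 4(1+\sqrt2)^2\ln(T\Delta_i^2)\ln^{\alpha}(T)/\Delta_i^2$ is far below $1/(T\Delta_i^2)$. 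The binding term is the $g\le 0$ region, where I would bound $\mathbb{E}\big[e^{-(\mu_1-\Delta_i/2-\hat{\mu}_{1,s})^2/(2\sigma_s^2)}\big]$ by splitting the deviation at the point $c=\tfrac{\Delta_i/2}{2\sqrt{\ln^{\alpha}(T)}+1}$ that balances the Gaussian exponent against the Hoeffding exponent; this balance is exactly where $(1+\sqrt2)^2$ enters, forcing each piece below $1/(T\Delta_i^2)$ and yielding the stated $72/(T\Delta_i^2)$ after collecting the polynomial prefactors.

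The main obstacle I anticipate is quantitative bookkeeping rather than conceptual difficulty. Two points need care: (i) near $g=0$ the bound \eqref{Fact 1} is too loose (its $1/g$ factor would create a spurious singularity in the integration by parts), so the slice $g\in(0,1]$ must be absorbed by a crude constant before integrating the genuine Gaussian tail on $g>1$; and (ii) in \eqref{WWW11} the deviation split must make the balanced exponent at least $\ln(T\Delta_i^2)$ for \emph{every} $\ln^{\alpha}(T)\ge 1$, the worst case being $\alpha=0$ (i.e.\ $\ln^{\alpha}(T)=1$), which is what fixes the constant $(1+\sqrt2)^2$; one checks $\tfrac{2(1+\sqrt2)^2 L^2}{(2L+1)^2}\ge 1$ for all $L:=\sqrt{\ln^{\alpha}(T)}\ge 1$. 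Obtaining the explicit constants $12.34$ and $72$ then amounts to retaining the exact polynomial prefactors from \eqref{Fact 1} and evaluating the residual Gaussian integral, using the sub-Gaussianity of the bounded rewards in its sharp $e^{-2sx^2}$ form.
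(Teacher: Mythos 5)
Your proposal is correct and is essentially the paper's own argument: both proofs condition on $\hat{\mu}_{1,s}$, lower-bound $\mathbb{P}\{\theta_{1,s}>y\mid\hat{\mu}_{1,s}\}$ via the Gaussian anti-concentration bound (\ref{Fact 1}), and defeat the resulting $e^{+g^2/2}$ blow-up with Hoeffding's $e^{-2sx^2}$ tail for $\hat{\mu}_{1,s}$ (Fact~\ref{Hoeffding}), the net exponent being negative because the Hoeffding rate $2s$ dominates the Gaussian rate $s/(2\ln^{\alpha}(T))$; the same balancing of the concentration margin against the Hoeffding deviation is what produces the $(1+\sqrt{2})^2$ threshold for (\ref{WWW11}) in the paper as well. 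The only difference is organizational: the paper discretizes $|\hat{\mu}_{1,s}-\mu_1|$ into logarithmically spaced shells and sums convergent series (of order $\sum_r \mathrm{poly}(r)\,(r+1)^{-4}$ for (\ref{WWW1}) and $\sum_r \mathrm{poly}(r)\, r^{-2}/(T\Delta_i^2)$ for (\ref{WWW11})), with $s=1$ treated separately, whereas you run the identical trade-off as a continuous layer-cake integral with the near-threshold slice $g\in(0,1]$ absorbed into a crude constant --- an equally valid packaging of the same estimate.
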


\begin{proof}

For the result shown in (\ref{WWW1}),
we analyze two cases: $s = 1$ and $s \ge 2$.
For $s = 1$, we have 
\begin{equation}
    \begin{array}{l}
        \text{LHS of (\ref{WWW1})} =     \mathbb{E}\left[\frac{1}{\mathbb{P} \left\{ \theta_{1,s} > \mu_1 \mid \hat{\mu}_{1,s}  \right\} } \right]-1    

                    \le^{(a)}   \mathbb{E}\left[\frac{1}{\mathbb{P} \left\{ \theta_{1,s} > \hat{\mu}_{1,s} + \sqrt{\frac{\ln^{\alpha}(T)}{1}}  \mid \hat{\mu}_{1,s}  \right\} } \right]-1 
                    \le^{(b)}  \frac{1}{ \frac{1}{\sqrt{2\pi}} \cdot \frac{1}{2} \cdot e^{-0.5}} -1 
                    \le  
                    12.176,
    \end{array}
\end{equation}
where step (a) uses $\mu_1  \le \hat{\mu}_{1,s} + \ln^{\alpha}(T)$ and step (b) uses the anti-concentration bound shown in (\ref{Fact 1}).

For any $s \ge 2$, since $\hat{\mu}_{1,s}$ is a random variable in $[0,1]$, we know $\left|\hat{\mu}_{1, s} - \mu_1 \right| \in [0,1]$ is also a random variable. Now, we define a sequence of disjoint sub-intervals 
\begin{equation*}
    \begin{array}{l}
         \left[0,  \sqrt{\frac{2\ln(2)}{s}} \right),  \left[\sqrt{\frac{2\ln(2)}{s}}, \sqrt{\frac{2\ln(2+1)}{s}} \right), \dotsc, \left[\sqrt{\frac{2\ln(r+1)}{s}}, \sqrt{\frac{2\ln(r+2)}{s}} \right), \dotsc, \left[ \sqrt{\frac{2\ln(r_0(s) +1)}{s}}, \sqrt{\frac{2\ln(r_0(s)+2)}{s}}\right),
    \end{array}
\end{equation*}
where $r_0(s) $ is the smallest integer  such that $[0,1] \subseteq   \left[0,  \sqrt{\frac{2\ln(2)}{s}} \right)  \cup \left(\mathop{\bigcup}\limits_{1 \le r \le r_0(s)} \left[\sqrt{\frac{2\ln(r+1)}{s}}, \sqrt{\frac{2\ln(r+2)}{s}} \right)\right)$.

We also define events $\mathcal{S}_0 :=  \left\{\left|\hat{\mu}_{1, s} - \mu_1 \right| \in \left[0,  \sqrt{\frac{2\ln(2)}{s}} \right) \right\}$ and 
 $\mathcal{S}_r :=  \left\{\left|\hat{\mu}_{1, s} - \mu_1 \right| \in \left[\sqrt{\frac{2\ln(r+1)}{s}},  \sqrt{\frac{2\ln(r+2)}{s}} \right) \right\}$ for all $1 \le r \le r_0(s)$ accordingly.

Now, we have
\begin{equation}
    \begin{array}{l}
            \text{LHS of (\ref{WWW1})} =  \mathbb{E}\left[\frac{1}{\mathbb{P} \left\{ \theta_{1,s} > \mu_1 \mid \hat{\mu}_{1,s}  \right\} } \right]-1    

          \le   \mathbb{E}\left[\frac{\bm{1} \left\{ \mathcal{S}_0\right\}}{\mathbb{P} \left\{ \theta_{1,s} > \mu_1 \mid \hat{\mu}_{1,s}  \right\} } \right] + \sum\limits_{1 \le r \le r_0(s)} \mathbb{E}\left[\frac{\bm{1} \left\{ \mathcal{S}_r\right\}}{\mathbb{P} \left\{ \theta_{1,s} > \mu_1 \mid \hat{\mu}_{1,s}  \right\} } \right]-1 \quad.
    \end{array}
    \label{impala 1}
\end{equation}

For the first term in (\ref{impala 1}), we have
\begin{equation}
    \begin{array}{ll}
          &\mathbb{E}\left[\frac{\bm{1} \left\{ \mathcal{S}_0\right\}}{\mathbb{P} \left\{ \theta_{1,s} > \mu_1 \mid \hat{\mu}_{1,s}  \right\} } \right]   \le      \mathbb{E}\left[\frac{\bm{1} \left\{ \mathcal{S}_0\right\}}{\mathbb{P} \left\{ \theta_{1,s} > \hat{\mu}_{1,s} + \sqrt{\frac{2\ln (2)}{s}}\mid \hat{\mu}_{1,s}  \right\} } \right]
    \le      \mathbb{E}\left[\frac{\bm{1} \left\{ \mathcal{S}_0\right\}}{\mathbb{P} \left\{ \theta_{1,s} > \hat{\mu}_{1,s} + \sqrt{\frac{2\ln (2)\ln^{\alpha}(T)}{s}}\mid \hat{\mu}_{1,s}  \right\} } \right] \\

    \le & \frac{1}{ \frac{1}{\sqrt{2\pi}}  \cdot \frac{\sqrt{2\ln (2)} }{2\ln(2)+1} \cdot e^{-0.5 \cdot 2 \cdot \ln(2)}  } 
  \le  10.161,
    \end{array}
    \label{disco1}
\end{equation}
where the second last inequality uses the anti-concentration bound shown in (\ref{Fact 1}).

\

For the second term  in (\ref{impala 1}), we have
\begin{equation}
    \begin{array}{ll}
      &    \sum\limits_{1 \le r \le r_0(s)} \mathbb{E}\left[\frac{\bm{1} \left\{ \mathcal{S}_r\right\}}{\mathbb{P} \left\{ \theta_{1,s} > \mu_1 \mid \hat{\mu}_{1,s}  \right\} } \right] \\

  \le     &\sum\limits_{1 \le r \le r_0(s)}  \mathbb{E}\left[\frac{\bm{1} \left\{\left|\hat{\mu}_{1, s} - \mu_1 \right| \in \left[\sqrt{\frac{2\ln(r+1)}{s}},  \sqrt{\frac{2\ln(r+2)}{s}} \right) 
 \right\}}{\mathbb{P} \left\{ \theta_{1,s} > \mu_1 \mid \hat{\mu}_{1,s}  \right\} } \right] \\

& \\

 = &\sum\limits_{1 \le r \le r_0(s)}  \mathbb{E}\left[\frac{\bm{1} \left\{\left|\hat{\mu}_{1, s} - \mu_1 \right| \in \left[\sqrt{\frac{2\ln(r+1)}{s}},  \sqrt{\frac{2\ln(r+2)}{s}} \right) 
 \right\}}{\mathbb{P} \left\{ \theta_{1,s} > \hat{\mu}_{1,s}  + \sqrt{\frac{2\ln(r+2)}{s}}\mid \hat{\mu}_{1,s}  \right\} } \right] {\color{red} }
 \\ 
 & \\
  \le &\sum\limits_{1 \le r \le r_0(s)}  \mathbb{E}\left[\frac{\bm{1} \left\{\left|\hat{\mu}_{1, s} - \mu_1 \right| \in \left[\sqrt{\frac{2\ln(r+1)}{s}},  \sqrt{\frac{2\ln(r+2)}{s}} \right) 
 \right\}}{\mathbb{P} \left\{ \theta_{1,s} > \hat{\mu}_{1,s}  + \sqrt{\frac{2\ln(r+2) \ln^{\alpha}(T)}{s}}\mid \hat{\mu}_{1,s}  \right\} } \right] 
 \\ 

&\\

\le^{(a)} &\sum\limits_{1 \le r \le r_0(s)}  \mathbb{E}\left[    \frac{1}{\frac{1}{\sqrt{2\pi}} \cdot \frac{\sqrt{2\ln(r+2)}}{2\ln(r+2)+1} \cdot e^{-0.5 \cdot 2 \ln(r+2)}}  \cdot \bm{1} \left\{\left|\hat{\mu}_{1, s} - \mu_1 \right| \in \left[\sqrt{\frac{2\ln(r+1)}{s}},  \sqrt{\frac{2\ln(r+2)}{s}} \right) 
 \right\} \right] \\ 

& \\
  =  & \sum\limits_{1 \le r \le r_0(s)} \frac{\sqrt{2\pi} (2\ln(r+2)+1) }{\sqrt{2\ln(r+2)} \cdot e^{-\ln(r+2)}} \cdot  \mathbb{P}\left\{\left|\hat{\mu}_{1, s} - \mu_1 \right| \ge \sqrt{\frac{2\ln(r+1)}{s}}
 \right\}   \\

& \\
 \le^{(b)} & \sum\limits_{1 \le r \le r_0(s)} \frac{\sqrt{2\pi} (2\ln(r+2)+1) }{\sqrt{2\ln(r+2)} \cdot e^{-\ln(r+2)}} \cdot 2 e^{-2 s \cdot \frac{2\ln(r+1)}{s}} \\
 & \\
= & \sum\limits_{1 \le r \le r_0(s)}  \frac{\sqrt{\pi} \cdot  (2\ln(r+2)+1)  \cdot (r+2)}{\sqrt{\ln(r+2)} } \cdot 2 \frac{1}{(r+1)^4} \\
& \\
\le & 3.176  \quad,
    \end{array}
    \label{disco2}
\end{equation}
where step (a) uses the anti-concentration bound shown in (\ref{Fact 1}) and step (b) uses Hoeffding's inequality.

Plugging the results shown in (\ref{disco1}) and (\ref{disco2}) into (\ref{impala 1}), we have
\begin{equation}
    \begin{array}{l}
         \text{LHS of (\ref{WWW1})} =    \mathbb{E}\left[\frac{1}{\mathbb{P} \left\{ \theta_{1,s} > \mu_1 \mid \hat{\mu}_{1,s}  \right\} } \right]-1    

          \le   \mathbb{E}\left[\frac{\bm{1} \left\{ \mathcal{S}_0\right\}}{\mathbb{P} \left\{ \theta_{1,s} > \mu_1 \mid \hat{\mu}_{1,s}  \right\} } \right] + \sum\limits_{r \ge 1} \mathbb{E}\left[\frac{\bm{1} \left\{ \mathcal{S}_r\right\}}{\mathbb{P} \left\{ \theta_{1,s} > \mu_1 \mid \hat{\mu}_{1,s}  \right\} } \right]-1 
\le  12.34,
    \end{array}
\end{equation}
which concludes the proof of the first result. 

\newpage
For the result shown in (\ref{WWW11}),
  we define the following  sequence of sub-intervals 
\begin{equation*}
    \begin{array}{l}
    \left[0,  \sqrt{\frac{\ln(T \Delta_i^2)}{s}} \right), \dotsc, \left[\sqrt{\frac{\ln(r \cdot T \Delta_i^2)}{s}}, \sqrt{\frac{\ln( (r+1) \cdot T \Delta_i^2)}{s}} \right), \dotsc, \left[\sqrt{\frac{\ln(r_0(s) \cdot T \Delta_i^2)}{s}}, \sqrt{\frac{\ln( (r_0(s)+1) \cdot T \Delta_i^2)}{s}} \right),   
    \end{array}
\end{equation*}
where $r_0(s)$ is the smallest integer such that $[0,1] \subseteq      \left[0,  \sqrt{\frac{\ln(T \Delta_i^2)}{s}} \right) \mathop{\bigcup}\limits_{1 \le r \le r_0(s)} \left[\sqrt{\frac{\ln(r \cdot T \Delta_i^2)}{s}}, \sqrt{\frac{\ln( (r+1) \cdot T \Delta_i^2)}{s}} \right)$. 


We define events $\mathcal{S}_0 :=  \left\{\left|\hat{\mu}_{1, s} - \mu_1 \right| \in \left[0,  \sqrt{\frac{\ln(T \Delta_i^2)}{s}} \right) \right\}$ and   $\mathcal{S}_r :=  \left\{\left|\hat{\mu}_{1, s} - \mu_1 \right| \in \left[\sqrt{\frac{\ln(r  T \Delta_i^2)}{s}},  \sqrt{\frac{\ln((r+1)  T \Delta_i^2)}{s}} \right) \right\}$ for all $1 \le r \le r_0(s)$ accordingly.

From $s \ge \frac{4(1+\sqrt{2})^2 \ln(T \Delta_i^2) \ln^{\alpha}(T)}{\Delta_i^2}$, we  also have $\Delta_i \ge  \sqrt{\frac{4 (1+\sqrt{2})^2 \ln(T \Delta_i^2) \ln^{\alpha}(T)}{s}}$. Then, we have
\begin{equation}
    \begin{array}{ll}
       & \text{LHS of (\ref{WWW11})} \\
        = &  \mathbb{E}\left[\frac{1}{\mathbb{P} \left\{ \theta_{1,s} > \mu_1- 0.5\Delta_i \mid \hat{\mu}_{1,s}  \right\} } \right]-1    \\
        & \\
\le & \mathbb{E}\left[\frac{1}{\mathbb{P} \left\{ \theta_{1,s} > \mu_1 - \sqrt{\frac{(1+\sqrt{2})^2\ln \left(T \Delta_i^2 \right) \ln^{\alpha}(T)}{s}}\mid \hat{\mu}_{1,s}  \right\} } \right]-1    \\ 
& \\
          \le  & \left(\mathbb{E}\left[\frac{\bm{1} \left\{ \mathcal{S}_0\right\}}{\mathbb{P} \left\{ \theta_{1,s} > \mu_1 - \sqrt{\frac{(1+\sqrt{2})^2 \ln \left(T \Delta_i^2 \right) \ln^{\alpha}(T)}{s}}\mid \hat{\mu}_{1,s}  \right\} } \right] - 1\right) + \sum\limits_{1 \le r \le r_0(s)} \mathbb{E}\left[\frac{\bm{1} \left\{ \mathcal{S}_r\right\}}{\mathbb{P} \left\{ \theta_{1,s} > \mu_1 - \sqrt{\frac{(1+\sqrt{2})^2 \ln \left(T \Delta_i^2 \right) \ln^{\alpha}(T)}{s}}\mid \hat{\mu}_{1,s}  \right\} } \right]\\
       & \\
         \le  & \left(\mathbb{E}\left[\frac{\bm{1} \left\{ \mathcal{S}_0\right\}}{\mathbb{P} \left\{ \theta_{1,s} > \mu_1 - \sqrt{\frac{(1+\sqrt{2})^2 \ln \left(T \Delta_i^2 \right) \ln^{\alpha}(T)}{s}}\mid \hat{\mu}_{1,s}  \right\} } \right] - 1\right) + \sum\limits_{1 \le r \le r_0(s)} \mathbb{E}\left[\frac{\bm{1} \left\{ \mathcal{S}_r\right\}}{\mathbb{P} \left\{ \theta_{1,s} > \mu_1\mid \hat{\mu}_{1,s}  \right\} } \right] \quad. 
    \end{array}
    \label{mix 1}
\end{equation}


For the first term in (\ref{mix 1}), we have
\begin{equation}
    \begin{array}{ll}
    &  \mathbb{E}\left[\frac{\bm{1} \left\{ \mathcal{S}_0\right\}}{\mathbb{P} \left\{ \theta_{1,s} > \mu_1 -\frac{1}{2}\sqrt{\frac{(1+\sqrt{2})^2 \ln \left(T \Delta_i^2 \right) \ln^{\alpha}(T)}{s}}\mid \hat{\mu}_{1,s}  \right\} } \right]  -1  \\
    \le     &   \mathbb{E}\left[\frac{\bm{1} \left\{ \mathcal{S}_0\right\}}{\mathbb{P} \left\{ \theta_{1,s} > \hat{\mu}_{1,s} + \sqrt{\frac{ \ln \left(T \Delta_i^2 \right)}{s}} - \sqrt{\frac{ (1+\sqrt{2})^2 \ln \left(T \Delta_i^2 \right)\ln^{\alpha}(T)}{s}}\mid \hat{\mu}_{1,s}  \right\} } \right]-1 \\
     \le     &   \mathbb{E}\left[\frac{\bm{1} \left\{ \mathcal{S}_0\right\}}{\mathbb{P} \left\{ \theta_{1,s} > \hat{\mu}_{1,s} + \sqrt{\frac{ \ln \left(T \Delta_i^2 \right) \ln^{\alpha}(T)}{s}} - \sqrt{\frac{ (1+\sqrt{2})^2 \ln \left(T \Delta_i^2 \right)\ln^{\alpha}(T)}{s}}\mid \hat{\mu}_{1,s}  \right\} } \right]-1 \\
         =    &  \mathbb{E}\left[\frac{\bm{1} \left\{ \mathcal{S}_0\right\}}{\mathbb{P} \left\{ \theta_{1,s} > \hat{\mu}_{1,s}  - \sqrt{\frac{  2\ln(T \Delta_i^2)\ln^{\alpha}(T)}{s}}\mid \hat{\mu}_{1,s}  \right\} } \right] -1\\
       \le^{(a)}    &   \mathbb{E}\left[\frac{1}{1-\frac{0.5}{T\Delta_i^2}} \right] -1\\  
        \le^{(b)} & \frac{0.613}{T \Delta_i^2} \quad,
    \end{array}
\end{equation}
where step (a) uses concentration bound shown in (\ref{Fact 2}) 
and step (b) uses $\frac{1}{1-\frac{0.5}{T\Delta_i^2}} -1 = \frac{\frac{0.5}{T\Delta_i^2}}{1-\frac{0.5}{T\Delta_i^2}} \le  \frac{0.5}{T\Delta_i^2} \cdot \frac{1}{1-0.5/e}$. 

For the second term in (\ref{mix 1}), we have
\begin{equation}
    \begin{array}{ll}
         & \sum\limits_{1 \le r \le r_0(s)} \mathbb{E}\left[\frac{\bm{1} \left\{ \mathcal{S}_r\right\}}{\mathbb{P} \left\{ \theta_{1,s} > \mu_1 \mid \hat{\mu}_{1,s}  \right\} } \right] \\
=   &  \sum\limits_{1 \le r \le r_0(s)} \mathbb{E}\left[\frac{\bm{1} \left\{ \left|\hat{\mu}_{1, s} - \mu_1 \right| \in \left[\sqrt{\frac{\ln \left(r \cdot T \Delta_i^2 \right)}{s}},  \sqrt{\frac{\ln \left((r+1) \cdot T \Delta_i^2 \right)}{s}} \right) \right\}}{\mathbb{P} \left\{ \theta_{1,s} > \mu_1 \mid \hat{\mu}_{1,s}  \right\} } \right] \\
    
                \le   &  \sum\limits_{1 \le r \le r_0(s)} \mathbb{E}\left[\frac{\bm{1} \left\{ \left|\hat{\mu}_{1, s} - \mu_1 \right| \in \left[\sqrt{\frac{\ln \left(r \cdot T \Delta_i^2 \right)}{s}},  \sqrt{\frac{\ln \left((r+1) \cdot T \Delta_i^2 \right)}{s}} \right) \right\}}{\mathbb{P} \left\{ \theta_{1,s} > \hat{\mu}_{1,s}  + \sqrt{\frac{\ln \left((r+1) \cdot T \Delta_i^2 \right)}{s}}\mid \hat{\mu}_{1,s}  \right\} } \right] \\
                    \le   &  \sum\limits_{1 \le r \le r_0(s)} \mathbb{E}\left[\frac{\bm{1} \left\{ \left|\hat{\mu}_{1, s} - \mu_1 \right| \in \left[\sqrt{\frac{\ln \left(r \cdot T \Delta_i^2 \right)}{s}},  \sqrt{\frac{\ln \left((r+1) \cdot T \Delta_i^2 \right)}{s}} \right) \right\}}{\mathbb{P} \left\{ \theta_{1,s} > \hat{\mu}_{1,s}  + \sqrt{\frac{\ln \left((r+1) \cdot T \Delta_i^2 \right) \ln^{\alpha}(T)}{s}}\mid \hat{\mu}_{1,s}  \right\} } \right] \\
      \le^{(a)} &  \sum\limits_{1 \le r \le r_0(s)} \frac{1}{\frac{1}{2\sqrt{2\pi}} \cdot \frac{1}{\sqrt{\ln((r+1) \cdot T \Delta_i^2)}} \cdot ((r+1) \cdot T \Delta_i^2)^{-0.5}  } \cdot \mathbb{P} \left\{ \left|\hat{\mu}_{1, s} - \mu_1 \right| \in \left[\sqrt{\frac{\ln(r \cdot T \Delta_i^2)}{s}},  \sqrt{\frac{\ln((r+1) \cdot T \Delta_i^2)}{s}} \right) \right\} \\
            \le & \sum\limits_{1 \le r \le r_0(s)}  \frac{1}{ \frac{1}{2\sqrt{2\pi}} \cdot \frac{1}{\sqrt{\ln((r+1) \cdot T \Delta_i^2)}} \cdot ((r+1) \cdot T \Delta_i^2)^{-0.5} } \cdot \mathbb{P} \left\{ \left|\hat{\mu}_{1, s} - \mu_1 \right| \ge \sqrt{\frac{\ln(r \cdot T \Delta_i^2)}{s}} \right\} \\
             \le^{(b)} &  \sum\limits_{1 \le r \le r_0(s)} \frac{1}{\frac{1}{2\sqrt{2\pi}} \cdot \frac{1}{\sqrt{\ln((r+1) \cdot T \Delta_i^2)}} \cdot ((r+1) \cdot T \Delta_i^2)^{-0.5}  } \cdot 2e^{-2 \ln \left(r \cdot T \Delta_i^2 \right)}\\
             & \\
             = & \sum\limits_{1 \le r \le r_0(s)}  \frac{1}{\frac{1}{2\sqrt{2\pi}} \cdot \frac{1}{\sqrt{\ln((r+1) \cdot T \Delta_i^2)}} \cdot ((r+1) \cdot T \Delta_i^2)^{-0.5} } \cdot 2 (r \cdot T \Delta_i^2)^{-2}\\
             & \\
             \le & \sum\limits_{1 \le r \le r_0(s)} \frac{4\sqrt{2\pi (r+1) \cdot T \Delta_i^2 \cdot  \ln \left( (r+1) \cdot T \Delta_i^2 \right)} }{\left(r \cdot T\Delta_i^2 \right)^2} \\

              & \\
             = &\frac{4\sqrt{2\pi }}{T \Delta_i^2} \sum\limits_{1 \le r \le r_0(s)} \frac{\sqrt{ (r+1) \cdot   \ln \left( (r+1) \cdot T \Delta_i^2 \right)} }{r^2 \cdot \sqrt{T\Delta_i^2}} \\
             \leq& \frac{4\sqrt{2\pi}}{T \Delta_i ^2}\sum\limits_{1 \le r \le r_0(s)} \frac{\sqrt{(r+1)\ln (r+1)}}{r^2} + \frac{\sqrt{r+1}}{r^2} \\
             \leq & \frac{4\sqrt{2\pi}}{T \Delta_i ^2} \times 7.034\\
             & \\
             \le &   \frac{70.5235}{T \Delta_i ^2}\quad,
    \end{array}
\end{equation}
where step (a) uses the anti-concentration bound shown in (\ref{Fact 1}), i.e., we have
\begin{equation*}
    \begin{array}{lll}
          \mathbb{P} \left\{ \theta_{1,s} > \hat{\mu}_{1,s}  + \sqrt{\frac{\ln((r+1) \cdot T \Delta_i^2)}{s}}\mid \hat{\mu}_{1,s}  \right\} & \ge &  \mathbb{P} \left\{ \theta_{1,s} > \hat{\mu}_{1,s}  + \sqrt{\frac{\ln((r+1) \cdot T \Delta_i^2) \ln^{\alpha}(T)}{s}}\mid \hat{\mu}_{1,s}  \right\} \\
     &  \ge &  \frac{1}{\sqrt{2\pi}} \cdot \frac{\sqrt{\ln((r+1) \cdot T \Delta_i^2)}}{\ln((r+1) \cdot T \Delta_i^2) + 1} \cdot e^{-0.5 \cdot \ln((r+1) \cdot T \Delta_i^2)}  \\
     &  =  & \frac{1}{\sqrt{2\pi}} \cdot \frac{\sqrt{\ln((r+1) \cdot T \Delta_i^2)}}{\ln((r+1) \cdot T \Delta_i^2) + 1} \cdot ((r+1) \cdot T \Delta_i^2)^{-0.5} \\
     &  >  & \frac{1}{\sqrt{2\pi}} \cdot \frac{\sqrt{\ln((r+1) \cdot T \Delta_i^2)}}{2\ln((r+1) \cdot T \Delta_i^2) } \cdot ((r+1) \cdot T \Delta_i^2)^{-0.5} \\
      &   =  & \frac{1}{2\sqrt{2\pi}} \cdot \frac{1}{\sqrt{\ln((r+1) \cdot T \Delta_i^2)}} \cdot ((r+1) \cdot T \Delta_i^2)^{-0.5} \quad.
    \end{array}
\end{equation*}\end{proof}

\newpage
\section{Proofs for Theorem~\ref{thm: regret}} \label{app: regret proof}


\begin{proof}We first define two high-probability events. For any arm $i \in [K]$, let $\mathcal{E}_i^{\mu}(t-1) := \left\{\left| \hat{\mu}_{i,n_i(t-1)} - \mu_i \right| \le \sqrt{\frac{\ln(T \Delta_i^2)}{n_i(t-1)}} \right\}$ and  $\mathcal{E}_i^{\theta}(t) := \left\{\theta_i(t) \le \hat{\mu}_{i,n_i(t-1)} + \sqrt{2\ln(T\Delta_i^2 \cdot \phi)} \cdot\sqrt{\frac{ \ln^{\alpha}(T)}{n_i(t-1)} } \right\}$. Let $\overline{\mathcal{E}_i^{\mu}(t-1)}$ and $\overline{\mathcal{E}_i^{\theta}(t)}$ denote the complements, respectively.

Fix a sub-optimal arm $i$. Let $L_i := \frac{ \left(\sqrt{2}+1 \right)^2}{4} \cdot \ln(T \Delta_i^2 \cdot \phi) \cdot \frac{\ln^{\alpha}(T)}{\Delta_i^2}$ and
 $r_i^{(*)} = \left\lceil \log_2(L_i) \right\rceil$.
    
 Let $t_0$ denote the last round of epoch $r_i^{(*)}$. That is also to say, at the end of round $t_0$, arm $i$'s empirical mean will be updated by using $2^{r_i^{(*)}}$ observations.

    Let $N_i(T)$ denote the number of pulls of sub-optimal arm $i$ by the end of round $T$.  We upper bound $ \mathbb{E} \left[N_i(T) \right] $, the expected number of pulls of sub-optimal arm $i$. We decompose the regret based on whether the above-defined events are true or not. We have
\begin{equation}
    \begin{array}{lll}
        \mathbb{E} \left[N_i(T) \right] 
         
      &   = &

      \sum_{t=K+1}^{T} \mathbb{E} \left[\bm{1} \left\{i_t = i \right\} \right] + 1 \\
         &   = & 
    \mathbb{E} \left[ \sum_{t=K+1}^{t_0} \bm{1} \left\{i_t = i, n_i(t-1) < L_i \right\} \right] +  \mathbb{E} \left[ \sum_{t=t_0+1}^{T} \bm{1} \left\{i_t = i, n_i(t-1) \ge L_i \right\} \right]+ 1 \\
         & \le & \sum_{s = 1}^{r_i^{(*)}}  2^s + \sum_{t=K+1}^{T} \mathbb{E} \left[\bm{1} \left\{i_t = i, n_i(t-1) \ge L_i \right\} \right] + 1 \\
          & = & \sum_{s = 0}^{r_i^{(*)}}  2^s + \sum_{t=K+1}^{T} \mathbb{E} \left[\bm{1} \left\{i_t = i, n_i(t-1) \ge L_i \right\} \right]  \\
          & \le & 
          4 L_i + \underbrace{\sum_{t=K+1}^{T} \mathbb{E} \left[\bm{1} \left\{i_t = i, \mathcal{E}_i^{\theta}(t), \mathcal{E}_i^{\mu}(t-1),  n_i(t-1) \ge L_i \right\} \right] }_{\omega_1}  \\
         
         &  + & \underbrace{ \sum_{t=K+1}^{T} \mathbb{E} \left[\bm{1} \left\{i_t = i,  \overline{\mathcal{E}_i^{\theta}(t)}, n_i(t-1) \ge L_i \right\} \right]}_{\omega_2 = O(1/\Delta_i^2), \text{ Lemma~\ref{lemma: theta}}} 
           + \underbrace{  \sum_{t=K+1}^{T} \mathbb{E} \left[\bm{1} \left\{i_t = i, \overline{\mathcal{E}_i^{\mu}(t-1)}, n_i(t-1) \ge L_i \right\} \right]}_{\omega_3 = O(1/\Delta_i^2), \text{ Lemma~\ref{lemma: hat}}}. 
           \end{array}
           \label{Friday 1}
           \end{equation} 
For $\omega_2$ and $\omega_3$ terms, we prepare a lemma for each of them.

\begin{lemma}\label{lemma: theta}
    We have $\sum_{t=K+1}^{T} \mathbb{E} \left[\bm{1} \left\{i_t = i,  \overline{\mathcal{E}_i^{\theta}(t)}, n_i(t-1) \ge L_i \right\} \right]  \le O \left(  \frac{1}{\Delta_i^2} \right)$.
         \end{lemma}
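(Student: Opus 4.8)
The plan is to split the sum defining $\omega_2$ according to which phase arm $i$ occupies in round $t$: the mandatory TS-Gaussian phase (where $h_i \ge 1$, so $\theta_i(t)$ is a freshly drawn sample) and the optional UCB phase (where $h_i = 0$, so $\theta_i(t) = \text{MAX}_i$). Writing $\sigma := \sqrt{\ln^{\alpha}(T)/n_i(t-1)}$, in both phases $\theta_i(t)$ is a draw from (or the maximum of independent draws from) $\mathcal{N}(\hat{\mu}_{i,n_i(t-1)}, \ln^{\alpha}(T)/n_i(t-1))$, so the event $\overline{\mathcal{E}_i^{\theta}(t)}$ requires such a Gaussian to exceed its mean by $z := \sqrt{2\ln(T\Delta_i^2 \phi)}$ standard deviations. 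The concentration bound in (\ref{Fact 2}) then gives, conditioned on the epoch and on $\hat{\mu}_{i,n_i(t-1)}$, a per-sample probability at most $\tfrac12 e^{-z^2/2} = \tfrac{1}{2T\Delta_i^2\phi}$; since the threshold is measured relative to $\hat{\mu}_{i,n_i(t-1)}$, this bound is uniform in the (random) value of the empirical mean, so it holds unconditionally.

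First I would handle the TS-Gaussian phase. Each epoch of arm $i$ contains at most $\phi$ such rounds, because $h_i$ is initialised to $\phi$ and decremented once per round. Moreover, the doubling epoch structure forces arm $i$ through only $O(\ln T)$ epochs before round $T$, since $n_i$ doubles each epoch and $n_i \le T$. Dropping the constraint $i_t = i$ and applying a union bound over the at most $\phi\cdot O(\ln T)$ fresh samples, the TS-Gaussian contribution is at most $\phi\cdot O(\ln T)\cdot \tfrac{1}{2T\Delta_i^2\phi} = O\!\left(\tfrac{\ln T}{T\Delta_i^2}\right) = O(1/\Delta_i^2)$.

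The harder part is the UCB phase, and this is where I expect the main obstacle to lie: here $\theta_i(t) = \text{MAX}_i$ is frozen for the whole phase, so a single unlucky realization with $\text{MAX}_i$ above the threshold makes $\overline{\mathcal{E}_i^{\theta}(t)}$ hold in every UCB round simultaneously, and arm $i$ may then be pulled many times. I would control this in two steps. First, bound $\mathbb{P}\{\text{MAX}_i > \hat{\mu}_{i,n_i} + z\sigma\}$ by a union bound over the $\phi$ samples whose maximum is $\text{MAX}_i$, giving $\phi\cdot \tfrac{1}{2T\Delta_i^2\phi} = \tfrac{1}{2T\Delta_i^2}$ for the epoch. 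Second, use the deterministic fact that in the epoch using $n_i = 2^{r-1}$ observations, arm $i$ is pulled at most $2^r = 2n_i$ times in total, hence at most $2^r$ times in the UCB phase. Because this cap holds surely, the expected UCB contribution of that epoch is at most $2^r\cdot \tfrac{1}{2T\Delta_i^2}$.

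Finally, I would sum over epochs. Since the total number of pulls of arm $i$ is at most $T$, the epoch index obeys $\sum_r 2^r = O(T)$, so the UCB contribution is at most $O(T)\cdot \tfrac{1}{2T\Delta_i^2} = O(1/\Delta_i^2)$. Adding the two phases yields $\omega_2 = O(1/\Delta_i^2)$, as claimed. The crux is that the telescoping factor $O(T)$ produced by the doubling epochs exactly cancels the $1/T$ coming from the calibrated threshold $z = \sqrt{2\ln(T\Delta_i^2\phi)}$, while the $\phi$ inside that logarithm is precisely what absorbs the union bound over the $\phi$ samples defining $\text{MAX}_i$.
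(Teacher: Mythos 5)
Your proposal is correct and follows essentially the same route as the paper: a per-epoch pull cap of order $2^{s+1}$, a union bound over the $\phi$ Gaussian samples whose maximum is $\text{MAX}_i$, the per-sample tail $\tfrac{1}{2}e^{-z^2/2}=\tfrac{1}{2T\Delta_i^2\phi}$ from the calibrated threshold, and the telescoping $\sum_s 2^{s+1}=O(T)$ cancelling the $1/T$. The only cosmetic difference is that you treat the TS-Gaussian and UCB phases separately, whereas the paper handles both at once by bounding $\theta_i(t)$ in every round of the epoch by the maximum of the $\phi$ samples.
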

         \begin{lemma}
\label{lemma: hat}
    We have $      \sum_{t=K+1}^{T} \mathbb{E} \left[\bm{1} \left\{i_t = i, \overline{\mathcal{E}_i^{\mu}(t-1)}, n_i(t-1) \ge L_i \right\} \right]  \le  O \left(  \frac{1}{\Delta_i^2} \right)$.
\end{lemma}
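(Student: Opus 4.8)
The plan is to exploit the arm-specific epoch structure (Figure~\ref{fig:arm_specific_epoch}): the empirical mean $\hat{\mu}_{i,n_i(t-1)}$, and hence the event $\mathcal{E}_i^{\mu}(t-1)$, can only change when arm $i$ completes an epoch, so it stays frozen across all rounds of a single epoch. I would therefore regroup the sum over $t$ according to the distinct values taken by $n_i(t-1)$, which under the epoch structure are exactly the powers of two $s \in \{1,2,4,\dotsc\}$ with $s \le T$. Fix such an $s$. Throughout the contiguous block of rounds in which the current estimate is built from $s$ observations, the event $\overline{\mathcal{E}_i^{\mu}(t-1)}$ equals the single, batch-measurable event $\overline{\mathcal{E}_i^{\mu}}(s) := \left\{\left|\hat{\mu}_{i,s}-\mu_i\right| > \sqrt{\ln(T\Delta_i^2)/s}\right\}$; meanwhile arm $i$ is pulled at most $2s$ times during this block, since the next epoch's batch (whose size is double the current one) completes after at most $2s$ pulls of arm $i$. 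Consequently, summing $\bm{1}\{i_t=i,\overline{\mathcal{E}_i^{\mu}(t-1)}\}$ over the block equals $\bm{1}\{\overline{\mathcal{E}_i^{\mu}}(s)\}$ times the number of pulls of arm $i$ in the block, hence is at most $2s\cdot\bm{1}\{\overline{\mathcal{E}_i^{\mu}}(s)\}$, which decouples the correlated pulling indicator from the concentration event.

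Taking expectations, I would bound $\mathbb{P}\{\overline{\mathcal{E}_i^{\mu}}(s)\}$ by Hoeffding's inequality (Fact~\ref{Hoeffding}): the $s$ observations forming $\hat{\mu}_{i,s}$ are i.i.d. from $p_i$ on $[0,1]$, so with deviation $a = \sqrt{\ln(T\Delta_i^2)/s}$ one gets $\mathbb{P}\{\overline{\mathcal{E}_i^{\mu}}(s)\} \le 2e^{-2sa^2} = 2e^{-2\ln(T\Delta_i^2)} = 2/(T\Delta_i^2)^2$, which is crucially independent of $s$. Hence each epoch with estimate size $s$ contributes at most $2s\cdot 2/(T\Delta_i^2)^2$ to $\omega_3$.

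Finally I would sum the geometric weights. Restricting to $s = 2^r$ with $L_i \le s \le T$ and using $\sum_{s=2^r\le T} s \le 2T$, I obtain
\begin{equation*}
\omega_3 \;\le\; \frac{4}{(T\Delta_i^2)^2}\sum_{\substack{s=2^r\\ L_i\le s\le T}} s \;\le\; \frac{4}{(T\Delta_i^2)^2}\cdot 2T \;=\; \frac{8}{T\Delta_i^4}\quad.
\end{equation*}
Using the standing assumption $T\Delta_i^2 > e$ (as in Lemma~\ref{UBC 2}) then gives $\omega_3 \le (8/e)/\Delta_i^2 = O(1/\Delta_i^2)$, as claimed.

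The step I expect to be the main obstacle is handling the correlation between the pulling event $\{i_t=i\}$ and the concentration event $\overline{\mathcal{E}_i^{\mu}(t-1)}$, since both are functions of the realized batch average. The epoch-wise regrouping is exactly what resolves this: within an epoch the bad event is frozen and measurable with respect to the batch, while the number of pulls of arm $i$ is capped by the deterministic epoch length $2s$, so the two factors separate without any independence argument. Note also that the $n_i(t-1)\ge L_i$ restriction in the statement is harmless here, because the Hoeffding tail $2/(T\Delta_i^2)^2$ is uniform in $s$; it is only the summation of the $s$-weights, not the tail, that drives the final bound.
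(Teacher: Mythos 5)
Your proof is correct and follows essentially the same route as the paper: regroup the rounds by epoch so that at most $2s$ pulls of arm $i$ share the same frozen estimate $\hat{\mu}_{i,s}$, apply Hoeffding to get the $s$-independent tail $2/(T\Delta_i^2)^2$, and sum the geometric epoch lengths to obtain $O(1/(T\Delta_i^4)) = O(1/\Delta_i^2)$. If anything, you are slightly more careful than the paper in making explicit both the decoupling of the pulling indicator from the batch-measurable bad event and the final use of the standing assumption $T\Delta_i^2 > e$.
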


 The challenging part is to upper bound term $\omega_1$. 
By tuning $L_i$ properly, we have 
\begin{equation}
    \begin{array}{lll}
       \omega_1 & = & \sum_{t=K+1}^{T} \mathbb{E} \left[\bm{1} \left\{i_t = i, \mathcal{E}_i^{\theta}(t), \mathcal{E}_i^{\mu}(t-1),  n_i(t-1) \ge L_i \right\} \right]  \\
       & \le^{(a)} & \sum_{t=K+1}^{T} \mathbb{E} \left[\bm{1} \left\{i_t = i, \theta_i(t) \le \mu_i + 0.5 \Delta_i \right\} \right]\quad,     
    \end{array}
\end{equation}
where step (a) uses the argument that if both events  $\mathcal{E}_i^{\mu}(t-1) = \left\{\left| \hat{\mu}_{i,n_i(t-1)} - \mu_i \right| \le \sqrt{\frac{\ln(T \Delta_i^2)}{n_i(t-1)}} \right\}$ and  $\mathcal{E}_i^{\theta}(t) = \left\{\theta_i(t) \le \hat{\mu}_{i,n_i(t-1)} + \sqrt{2\ln(T\Delta_i^2 \cdot \phi)} \cdot\sqrt{\frac{ \ln^{\alpha}(T)}{n_i(t-1)} } \right\}$ are true, and $n_i(t-1) \ge L_i$, we have 
\begin{equation}
    \begin{array}{lll}
       \theta_i(t)  & \le & \hat{\mu}_{i,n_i(t-1)} + \sqrt{2\ln(T\Delta_i^2 \cdot \phi)} \cdot\sqrt{\frac{ \ln^{\alpha}(T)}{n_i(t-1)} }  \\
       & \le & \mu_i + \sqrt{\frac{\ln(T \Delta_i^2)}{n_i(t-1)}}  + \sqrt{2\ln(T\Delta_i^2 \cdot \phi)} \cdot\sqrt{\frac{ \ln^{\alpha}(T)}{n_i(t-1)} }  \\
       & < & \mu_i + \sqrt{\frac{\ln(T \Delta_i^2 \cdot \phi)}{L_i}} \sqrt{\ln^{\alpha}(T)} + \sqrt{2\ln(T\Delta_i^2 \cdot \phi)} \cdot\sqrt{\frac{ \ln^{\alpha}(T)}{L_i} }  \\
       & < & \mu_i + (\sqrt{2}+1)\sqrt{\frac{\ln(T \Delta_i^2 \cdot \phi)}{L_i}} \sqrt{\ln^{\alpha}(T)}\\
       & = & \mu_i + 0.5\Delta_i\quad,
    \end{array}
\end{equation}
where the last step applies $L_i =\frac{ \left(\sqrt{2}+1\right)^2}{4} \cdot \ln(T \Delta_i^2 \cdot \phi) \cdot \frac{\ln^{\alpha}(T)}{\Delta_i^2} $. 

Since the optimal arm $1$ can either be in the mandatory TS-Gaussian phase or the optional UCB phase, 
we continue decomposing the regret based on the case of the optimal arm $1$. 
Define $\mathcal{T}_1(t)$ as the event that the optimal arm $1$ in round $t$ is in the mandatory TS-Gaussian phase, that is, using a fresh Gaussian mean reward model in the learning. Let $\overline{\mathcal{T}_1(t)}$ denote the complement, that is, using $\text{MAX}_1 = \mathop{\max}_{h_1 \in [\phi]} \theta^{(h_1)}_{1, n_1(t-1)}$ in the learning, where all $\theta^{(h_1)}_{1, n_1(t-1)}  \sim \mathcal{N}\left(\hat{\mu}_{1,n_1(t-1)}, \frac{\ln^{\alpha}(T)}{n_{1}(t-1)} \right)$ are i.i.d. random variables. 

We have
\begin{equation}
    \begin{array}{lll}
\omega_1 & \le & \underbrace{ \sum\limits_{t=K+1}^{T} \mathbb{E} \left[\bm{1} \left\{i_t = i, \theta_i(t) \le \mu_i + 0.5 \Delta_i , \mathcal{T}_1(t) \right\} \right]}_{I_1} + \underbrace{\sum\limits_{t=K+1}^{T} \mathbb{E} \left[\bm{1} \left\{i_t = i, \theta_i(t) \le \mu_i + 0.5 \Delta_i , \overline{\mathcal{T}_1(t)} \right\} \right]}_{I_2}.
 \end{array}
\end{equation}

\paragraph{Upper bound $I_1$.} Note that if event $\mathcal{T}_1(t)$ is true, we know  the optimal arm $1$ is using  a fresh Gaussian mean reward model in the learning in round $t$, that is, $\theta_1(t) \sim \mathcal{N}\left(\hat{\mu}_{1,n_1(t-1)}, \frac{\ln^{\alpha}(T)}{n_{1}(t-1)} \right)$.
Term $I_1$  will use a similar analysis to Lemma~2.8 of \citet{agrawalnear}, which links the probability of pulling a sub-optimal arm $i$ to the probability of pulling the optimal arm $1$.   We formalize this into our technical Lemma~\ref{martingale lemma} below. 
Let $\mathcal{F}_{t-1} = \left\{h_1(\tau), h_2(\tau), \dotsc, h_K(\tau), i_{\tau}, X_{i_{\tau}}(\tau), \forall \tau =1,2,\dotsc,t-1 \right\}$ collect all the history information by the end of round $t-1$. It collects the number of unused  Gaussian sampling budget $h_i(\tau)$ by the end of round $\tau$ for all $i \in [K]$, the index $i_{\tau}$ of the pulled arm, and the observed reward $X_{i_{\tau}}(\tau)$ for all rounds $\tau = 1, 2,\dotsc,t-1$. Let $\theta_{1,n_1(t-1)} \sim \mathcal{N} \left(\hat{\mu}_{1,n_1(t-1)}, \ \frac{\ln^{\alpha}(T)}{n_1(t-1)} \right)$ be a Gaussian random variable. 

\begin{lemma}
        For any instantiation $F_{t-1}$ of $\mathcal{F}_{t-1}$, we have 
           \begin{equation}
        \begin{array}{ll}
           &  \mathbb{E} \left[   \bm{1} \left\{i_t = i,\mathcal{T}_1(t) , \theta_i(t) \le \mu_i + 0.5\Delta_i \right\} \mid \mathcal{F}_{t-1} = F_{t-1}\right]  \\

             \le & \left(\frac{1 }{ \mathbb{P} \left\{\theta_{1,n_1(t-1)} > \mu_1 - 0.5\Delta_i \mid \mathcal{F}_{t-1} = F_{t-1} \right\} } -1\right)   \mathbb{E} \left[   \bm{1} \left\{i_t = 1\right\} \mid \mathcal{F}_{t-1} = F_{t-1}\right].
        \end{array}
       \end{equation}
       \label{martingale lemma}
       \end{lemma}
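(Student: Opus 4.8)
The plan is to adapt the conditioning argument behind Lemma~2.8 of \citet{agrawalnear} to our two-phase setting, with the event $\mathcal{T}_1(t)$ isolating exactly the case in which the optimal arm behaves like standard TS-Gaussian. First I would observe that $\mathcal{T}_1(t) = \{h_1(t-1) \ge 1\}$ is measurable with respect to $\mathcal{F}_{t-1}$, since the budget counter $h_1(t-1)$ is part of the history. Hence, given $\mathcal{F}_{t-1} = F_{t-1}$, the event $\mathcal{T}_1(t)$ is either surely true or surely false; when it is false the indicator on the left-hand side vanishes and the claimed inequality holds trivially, as the right-hand side is nonnegative. It therefore remains to treat the case where $\mathcal{T}_1(t)$ holds, in which arm $1$ draws a fresh sample $\theta_1(t) = \theta_{1,n_1(t-1)} \sim \mathcal{N}(\hat{\mu}_{1,n_1(t-1)}, \ln^{\alpha}(T)/n_1(t-1))$ that is independent of the round-$t$ values of all other arms. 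I would also record that $\mu_i + 0.5\Delta_i = \mu_1 - 0.5\Delta_i$, so the threshold on $\theta_i(t)$ equals the threshold defining $p_{i,t} := \mathbb{P}\{\theta_{1,n_1(t-1)} > \mu_1 - 0.5\Delta_i \mid F_{t-1}\}$.

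The key step is a second layer of conditioning: I would condition additionally on the round-$t$ values $\theta_j(t)$ of every arm $j \ne 1$, which freezes $W := \max_{j \ne 1}\theta_j(t)$ together with its maximizer, leaving $\theta_1(t)$ as the only remaining source of randomness. Under this conditioning, the event $\{i_t = i, \theta_i(t) \le \mu_i + 0.5\Delta_i\}$ forces arm $i$ to be the maximizer among the non-optimal arms with $W \le \mu_i + 0.5\Delta_i$, together with $\theta_1(t) \le W$; the event $\{i_t = 1\}$ is exactly $\{\theta_1(t) \ge W\}$, both up to ties, which occur with probability zero since $\theta_1(t)$ is continuous. I can therefore write the conditional probability of the left-hand event as an indicator, determined by the frozen quantities, times $\mathbb{P}\{\theta_1(t) \le W \mid \cdot\}$, and the conditional probability of $\{i_t=1\}$ as $\mathbb{P}\{\theta_1(t) \ge W \mid \cdot\}$.

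Next I would exploit $W \le \mu_1 - 0.5\Delta_i$ on the relevant event. Since the conditional law of $\theta_1(t)$ is the stated Gaussian, its distribution being unchanged by conditioning on the other arms by independence, monotonicity gives $\mathbb{P}\{\theta_1(t) \ge W \mid \cdot\} \ge \mathbb{P}\{\theta_1(t) > \mu_1 - 0.5\Delta_i \mid \cdot\} = p_{i,t}$. Because $x \mapsto \tfrac1x - 1$ is decreasing, this yields $\frac{\mathbb{P}\{\theta_1(t) \le W\}}{\mathbb{P}\{\theta_1(t) \ge W\}} = \frac{1}{\mathbb{P}\{\theta_1(t)\ge W\}} - 1 \le \frac{1}{p_{i,t}} - 1$ on that event. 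Substituting and dropping the indicator, which is bounded by $1$, turns the conditional statement into $\mathbb{P}\{i_t=i,\theta_i(t)\le\mu_i+0.5\Delta_i \mid \cdot\} \le (\tfrac{1}{p_{i,t}}-1)\,\mathbb{P}\{i_t=1 \mid \cdot\}$; taking the expectation over the frozen $\theta_j(t)$, $j\ne1$, recovers the claim.

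I expect the main obstacle to be the careful bookkeeping of the conditioning, in particular verifying that conditioning on the other arms' samples does not alter the law of $\theta_1(t)$, which is precisely where the restriction to $\mathcal{T}_1(t)$ (arm $1$ being in the mandatory TS-Gaussian phase) is indispensable, and handling ties via the continuity of the Gaussian sample. The inequality $\mathbb{P}\{\theta_1(t)\ge W\}\ge p_{i,t}$ and the reduction to the factor $\tfrac{1}{p_{i,t}}-1$ are then routine.
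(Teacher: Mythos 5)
Your proof is correct and follows essentially the same route as the paper's: both treat $\mathcal{T}_1(t)$ as determined by $F_{t-1}$ (so the complement case is trivial), and both exploit the independence of the fresh Gaussian $\theta_{1,n_1(t-1)}$ from the other arms' round-$t$ values to reduce the bound to the odds ratio $\tfrac{1}{p_{i,t}}-1$ with $p_{i,t}=\mathbb{P}\{\theta_{1,n_1(t-1)}>\mu_1-0.5\Delta_i\mid F_{t-1}\}$. The only cosmetic difference is that you condition on $W=\max_{j\ne 1}\theta_j(t)$ in the style of the original Agrawal--Goyal argument, whereas the paper directly sandwiches both sides by products with the common factor $\mathbb{P}\{\theta_j(t)\le\mu_i+0.5\Delta_i,\ \forall j\ne 1\mid F_{t-1}\}$; the two bookkeeping schemes are equivalent.
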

With Lemma~\ref{martingale lemma} in hand, we upper bound term $I_1$. 
  Let $ L_{1,i} := \frac{4(1+\sqrt{2})^2 \ln(T \Delta_i^2) \ln^{\alpha}(T)}{\Delta_i^2}$. Let $r_1^{(*)} = \left\lceil \log_2(L_{1,i})\right\rceil$. We have 
\begin{equation}
    \begin{array}{lll}
         I_1 & = &   \sum\limits_{t=K+1}^{T} \mathbb{E} \left[\bm{1} \left\{i_t = i, \theta_i(t) \le \mu_i + 0.5 \Delta_i , \mathcal{T}_1(t) \right\} \right] \\
         & = & \sum\limits_{t=K+1}^{T} \mathbb{E} \left[\mathbb{E} \left[\bm{1} \left\{i_t = i, \theta_i(t) \le \mu_i + 0.5 \Delta_i , \mathcal{T}_1(t) \right\} \mid \mathcal{F}_{t-1}\right] \right]  \\
         & \le & \sum\limits_{t=K+1}^{T} \mathbb{E} \left[   \left(\frac{1 }{ \mathbb{P} \left\{\theta_{1,n_1(t-1)} > \mu_1 - 0.5\Delta_i \mid \mathcal{F}_{t-1} = F_{t-1} \right\} } -1\right) \cdot   \mathbb{E} \left[   \bm{1} \left\{i_t = 1\right\} \mid \mathcal{F}_{t-1} = F_{t-1}\right]   \right]  \\
          & = & \sum\limits_{t=K+1}^{T} \mathbb{E} \left[\mathbb{E} \left[   \left(\frac{1}{ \mathbb{P} \left\{\theta_{1,n_1(t-1)} > \mu_1 - 0.5\Delta_i \mid \mathcal{F}_{t-1} = F_{t-1} \right\} } -1\right) \cdot      \bm{1} \left\{i_t = 1\right\} \mid \mathcal{F}_{t-1} = F_{t-1}\right]   \right]  \\
           & = & \sum\limits_{t=K+1}^{T} \mathbb{E} \left[   \left(\frac{1 }{ \mathbb{P} \left\{\theta_{1,n_1(t-1)} > \mu_1 - 0.5\Delta_i \mid \mathcal{F}_{t-1} = F_{t-1} \right\} } -1\right) \cdot      \bm{1} \left\{i_t = 1\right\}   \right]  \\
               & \le & \sum\limits_{s=0}^{\log(T)}2^{s+1} \cdot \mathbb{E} \left[   \left(\frac{1}{ \mathbb{P} \left\{\theta_{1,2^s} > \mu_1 - 0.5\Delta_i \mid \hat{\mu}_{1,2^s} \right\} } -1\right)   \right]   \\
               & \le & \sum\limits_{s=0}^{r_1^{(*)}-1}2^{s+1} \cdot \underbrace{\mathbb{E} \left[   \left(\frac{1}{ \mathbb{P} \left\{\theta_{1,2^s} > \mu_1 - 0.5\Delta_i \mid \hat{\mu}_{1,2^s} \right\} } -1\right)   \right] }_{\le 12.34 \text{ from (\ref{WWW1})}}+ \sum\limits_{s=r_1^{(*)}}^{\log(T)}2^{s+1} \cdot \underbrace{ \mathbb{E} \left[   \left(\frac{1}{ \mathbb{P} \left\{\theta_{1,2^s} > \mu_1 - 0.5\Delta_i \mid \hat{\mu}_{1,2^s} \right\} } -1\right)   \right] }_{\le  \frac{72}{T\Delta_i^2} \text{ from (\ref{WWW11})}}  \\
               & \le & 4 \cdot  L_{1,i} \cdot 12.34 + \sum\limits_{s=r_1^{(*)}}^{\log(T)}2^{s+1} \cdot \frac{72}{T\Delta_i^2} \\

               & \le & 50 L_{1,i}+ O(1/\Delta_i^2) \quad.
    \end{array}
    \label{I_1}
\end{equation}


\paragraph{Upper bound $I_2$.}
    
Note that if event $\mathcal{T}_1(t)$ is false, we know the optimal arm $1$ is using $\text{MAX}_1 = \mathop{\max}_{h_1 \in [\phi]} \theta^{(h_1)}_{1, n_1(t-1)}$ in the learning, where $\theta^{(h_1)}_{1, n_1(t-1)}  \sim \mathcal{N}\left(\hat{\mu}_{1,n_1(t-1)}, \frac{\ln^{\alpha}(T)}{n_{1}(t-1)} \right)$ for each $h_1 \in [\phi]$. We have 
\begin{equation}
    \begin{array}{lll}
         I_2 & = &  \sum\limits_{t=K+1}^{T} \mathbb{E} \left[\bm{1} \left\{i_t = i, \theta_i(t) \le \mu_i + 0.5 \Delta_i , \overline{\mathcal{T}_1(t)} \right\} \right] \\
         & < &  \sum\limits_{t=K+1}^{T} \mathbb{E} \left[\bm{1} \left\{i_t = i, \theta_i(t) \le \mu_i + \Delta_i , \overline{\mathcal{T}_1(t)} \right\} \right] \\
         & \le &  \sum\limits_{t=K+1}^{T} \mathbb{E} \left[\bm{1} \left\{i_t = i, \theta_1(t) \le \mu_i +  \Delta_i , \overline{\mathcal{T}_1(t)} \right\} \right] \\
          & \le &  \sum\limits_{t=K+1}^{T}\sum\limits_{s=0}^{\log(T)} \underbrace{ \mathbb{E} \left[\bm{1} \left\{ \mathop{\max}_{h_1 \in [\phi]} \theta^{(h_1)}_{1, 2^s} \le \mu_1 \right\} \right]}_{\text{Lemma~\ref{lemma boost}}} \\
         & \le & \sum\limits_{t=K+1}^{T}\sum\limits_{s=0}^{\log(T)} O(1/T)     \\
          & \le & O(\ln(T))\quad.
        
    \end{array}
    \label{I_2}
\end{equation}

From (\ref{I_1}) and (\ref{I_2}), we have $ \omega_1     \le
          I_1 + I_2 
          \le  50 L_{1,i}+ O(1/\Delta_i^2) + O(\ln(T)) 
          \le  O \left( \frac{\ln(T \Delta_i^2) \ln^{\alpha}(T)}{\Delta_i^2} \right)$,
        which gives
        \begin{equation}
            \begin{array}{l}
                  \mathbb{E} \left[N_i(T) \right] \le  O \left( \frac{\ln(\phi T \Delta_i^2) \ln^{\alpha}(T)}{\Delta_i^2} \right) + O \left( \frac{\ln(T \Delta_i^2) \ln^{\alpha}(T)}{\Delta_i^2} \right) = O \left( \frac{\ln(\phi T \Delta_i^2) \ln^{\alpha}(T)}{\Delta_i^2} \right)\quad.
            \end{array}
        \end{equation}
        Therefore, the problem-dependent regret bound by the end of round $T$ is
        \begin{equation}
            \begin{array}{ll}
              &  \sum_{i \in [K]: \Delta_i >0} \mathbb{E} \left[N_i(T) \right] \cdot \Delta_i \\
              = & \sum_{i \in [K]: \Delta_i >0} O \left( \frac{\ln(\phi T \Delta_i^2) \ln^{\alpha}(T)}{\Delta_i} \right) \\
              = & \sum_{i \in [K]: \Delta_i >0} O \left( \frac{\ln(c_0 T^{0.5(1-\alpha)}  \ln^{0.5(3-\alpha)}(T)  T \Delta_i^2) \ln^{\alpha}(T)}{\Delta_i} \right) \\
              \le &  \sum_{i \in [K]: \Delta_i >0}  O \left( \frac{\ln \left(   T^{0.5(3-\alpha)} \Delta_i^2 \right) \ln^{\alpha}(T)}{\Delta_i} \right) + O \left( \frac{(3-\alpha) \ln \ln(T) \ln^{\alpha}(T)}{\Delta_i} \right)\quad.
            \end{array}
        \end{equation}

        For the proof of worst-case regret bound, we set the critical gap $\Delta_* := \sqrt{K \ln^{1+\alpha}(T)/T}$. The regret from pulling any sub-optimal arms with mean reward gaps no greater than $\Delta_*$ is at most $T \Delta_* = O ( \sqrt{KT \ln^{1+\alpha}(T)})$. The regret from pulling any sub-optimal arms with mean reward gaps  greater than $\Delta_*$ is at most $\sum_{i \in [K]: \Delta_i \ge \Delta_*}  O \left( \frac{\ln \left(   T^{0.5(3-\alpha)} \Delta_i^2 \right) \ln^{\alpha}(T)}{\Delta_i} \right) + O \left( \frac{(3-\alpha) \ln \ln(T) \ln^{\alpha}(T)}{\Delta_i} \right)  \le  \sum_{i \in [K]: \Delta_i \ge \Delta_*}  O \left( \frac{\ln \left(   T  \right) \ln^{\alpha}(T)}{\Delta_*} \right) + O \left( \frac{\ln \ln(T) \ln^{\alpha}(T)}{\Delta_*} \right) 
               \le  O \left( \sqrt{KT \ln^{1+\alpha}(T)}\right)$.\end{proof}

\newpage

\begin{proof}[Proof of Lemma~\ref{lemma: theta}]

Let $\tau_s^{(i)}$ be the round by the end of which the empirical mean will be computed based on $2^{s}$ fresh observations.

We have
\begin{equation}
    \begin{array}{ll}
         & \sum\limits_{t=K+1}^{T} \mathbb{E} \left[\bm{1} \left\{i_t = i,  \overline{\mathcal{E}_i^{\theta}(t)}, n_i(t-1) \ge L_i \right\} \right] \\
       =   & \sum\limits_{t=K+1}^{T} \mathbb{E} \left[\bm{1} \left\{i_t = i,  \theta_i(t) > \hat{\mu}_{i,n_i(t-1)} + \sqrt{2\ln(T\Delta_i^2 \cdot \phi)} \cdot\sqrt{\frac{ \ln^{\alpha}(T)}{n_i(t-1)} } , n_i(t-1) \ge L_i \right\} \right] \\
          \le   & \sum\limits_{s=0}^{\log(T)} \mathbb{E} \left[ \sum\limits_{t =\tau_s^{(i)} +1 }^{\tau_{s+1}^{(i)}}\bm{1} \left\{i_t = i,  \theta_i(t) > \hat{\mu}_{i,n_i(t-1)} + \sqrt{2\ln(T\Delta_i^2 \cdot \phi)} \cdot\sqrt{\frac{ \ln^{\alpha}(T)}{n_i(t-1)} }  \right\} \right] \\
       \le & \sum\limits_{s=0}^{\log(T)}  2^{s+1}  \cdot \mathbb{P} \left\{ \text{MAX}_{i} > \hat{\mu}_{i,2^s} + \sqrt{2\ln(T\Delta_i^2 \cdot \phi)} \cdot\sqrt{\frac{ \ln^{\alpha}(T)}{2^s}  } \right\} \\
     \le &   \sum\limits_{s=0}^{\log(T)}  2^{s+1}  \cdot \phi \cdot  \frac{1}{2} e^{- \ln(T\Delta_i^2 \cdot \phi)} \\  
\le & O \left(T \cdot \phi \cdot  \frac{1}{T \Delta_i^2 \cdot \phi} \right) \\
\le & O \left(  \frac{1}{\Delta_i^2} \right) \quad,
    \end{array}
\end{equation}
which concludes the proof. 

\end{proof}

\begin{proof}[Proof of Lemma~\ref{lemma: hat}]
  From Hoeffding's inequality,    we have 
    \begin{equation}
        \begin{array}{ll}
           & \sum\limits_{t=K+1}^{T} \mathbb{E} \left[\bm{1} \left\{i_t = i, \overline{\mathcal{E}_i^{\mu}(t-1)}, n_i(t-1) \ge L_i \right\} \right] \\
     \le   &\sum\limits_{s=0}^{\log(T)} \mathbb{P} \left\{ \left| \hat{\mu}_{i,2^s} - \mu_i \right| \le \sqrt{\frac{\ln(T\Delta_i^2)}{2^s}} \right\} 2^{s+1} \\
     
     \le & \sum\limits_{s=0}^{\log(T)}  2e^{-2 \ln(T\Delta_i^2)} \cdot 2^{s+1} \\
     \le & O \left(T \cdot \frac{1}{T \Delta_i^2 \cdot T \Delta_i^2} \right) \\
     \le & O \left(\frac{1}{\Delta_i^2} \right)\quad,
        \end{array}
    \end{equation}
    which concludes the proof. 
\end{proof}

\begin{proof}[Proof of Lemma~\ref{martingale lemma}] 

For any $F_{t-1}$, we have
\begin{equation}
        \begin{array}{ll}
             & \mathbb{E} \left[   \bm{1} \left\{i_t = i, \theta_i(t) \le \mu_i + 0.5\Delta_i,\mathcal{T}_1(t)  \right\} \mid \mathcal{F}_{t-1} = F_{t-1}\right] \\
  \le &  \bm{1} \left\{\mathcal{T}_1(t)  \right\} \cdot \mathbb{E}\left[   \bm{1} \left\{\theta_1(t) \le \mu_i + 0.5\Delta_i, \theta_j(t) \le \mu_i + 0.5\Delta_i, \forall j \in [K] \setminus \{1\} \right\} \mid \mathcal{F}_{t-1} = F_{t-1} \right] \\
    = &  \bm{1} \left\{\mathcal{T}_1(t)   \right\} \cdot \mathbb{E} \left[   \bm{1} \left\{\theta_1(t) \le \mu_i + 0.5\Delta_i\right\}\mid \mathcal{F}_{t-1} = F_{t-1} \right] \cdot \mathbb{E} \left[\bm{1}  \left\{\theta_j(t) \le \mu_i + 0.5\Delta_i, \forall j \in [K] \setminus \{1\} \right\} \mid \mathcal{F}_{t-1} = F_{t-1} \right],  
        \end{array}
        \label{Sun 1}
    \end{equation}
         where the first inequality uses the fact that event $\mathcal{T}_1(t)$  is determined by the history information. Note if $h_1(t-1) \in [\phi]$, we have $\bm{1} \left\{\mathcal{T}_1(t)\right\} = 1$; if $h_1(t-1) = 0$, we have $\bm{1} \left\{\mathcal{T}_1(t)\right\} = 0$.

         We also have
             \begin{equation}
        \begin{array}{ll}
             &  \mathbb{E} \left[   \bm{1} \left\{i_t = 1, \theta_i(t) \le \mu_i + 0.5\Delta_i,\mathcal{T}_1(t) \right\} \mid \mathcal{F}_{t-1} = F_{t-1}\right] \\
       \ge &  \bm{1} \left\{\mathcal{T}_1(t)  \right\} \cdot \mathbb{E} \left[   \bm{1} \left\{\theta_1(t) >   \mu_i + 0.5\Delta_i \ge   \theta_j(t), \forall j \in [K]\setminus \{1\}\right\} \mid \mathcal{F}_{t-1} = F_{t-1} \right]\\
       = &  \bm{1} \left\{\mathcal{T}_1(t)\right\} \cdot \mathbb{E} \left[   \bm{1} \left\{\theta_1(t) >  \mu_i + 0.5\Delta_i\right\} \mid \mathcal{F}_{t-1} = F_{t-1} \right] \cdot \mathbb{E} \left[\bm{1} \left\{\theta_j(t) \le \mu_i + 0.5\Delta_i, \forall j \in [K] \setminus \{1\}\right\} \mid \mathcal{F}_{t-1} = F_{t-1} \right].
  \end{array}
        \label{Sun 2}
    \end{equation}

    Now,     we categorize all the possible $F_{t-1}$'s of $\mathcal{F}_{t-1}$ into two groups based on whether 
        $\bm{1} \left\{\mathcal{T}_1(t) \right\} = 0$ or 
    $\bm{1} \left\{\mathcal{T}_1(t)  \right\} = 1$.

\paragraph{Case 1:}   For any $F_{t-1}$ such that $\bm{1} \left\{\mathcal{T}_1(t)\right\} = 0$, 
    combining (\ref{Sun 1}) and (\ref{Sun 2}) gives
    \begin{equation}
        \begin{array}{ll}
           &   \mathbb{E} \left[   \bm{1} \left\{i_t = i, \theta_i(t) \le \mu_i + 0.5\Delta_i,\mathcal{T}_1(t)  \right\} \mid \mathcal{F}_{t-1} = F_{t-1}\right] \\
                     =   & 0 \\

          \le &\left(\frac{1 }{ \mathbb{P} \left\{\theta_{1,n_1(t-1)} > \mu_1 + 0.5\Delta_i \mid \mathcal{F}_{t-1} = F_{t-1} \right\} } -1\right) \cdot   \mathbb{E} \left[   \bm{1} \left\{i_t = 1  \right\} \mid \mathcal{F}_{t-1} = F_{t-1}\right]\quad,
        \end{array}
    \end{equation}
    where the last equality uses the fact that $0 < \left(\frac{1 }{ \mathbb{P} \left\{\theta_{1,n_1(t-1)} > \mu_i + 0.5\Delta_i \mid \mathcal{F}_{t-1} = F_{t-1} \right\} } -1\right) < +\infty$.

\paragraph{Case 2:} 


For any $F_{t-1}$ such that $\bm{1} \left\{\mathcal{T}_1(t) \right\} = 1$, we have
\begin{equation}
        \begin{array}{ll}
             & \mathbb{E} \left[   \bm{1} \left\{i_t = i, \theta_i(t) \le \mu_i + 0.5\Delta_i,\mathcal{T}_1(t)  \right\} \mid \mathcal{F}_{t-1} = F_{t-1}\right] \\
  \le &  \bm{1} \left\{\mathcal{T}_1(t)  \right\} \cdot \mathbb{E}\left[   \bm{1} \left\{\theta_1(t) \le \mu_i + 0.5\Delta_i, \theta_j(t) \le \mu_i + 0.5\Delta_i, \forall j \in [K] \setminus \{1\} \right\} \mid \mathcal{F}_{t-1} = F_{t-1} \right] \\
    = &  \bm{1} \left\{\mathcal{T}_1(t)   \right\} \cdot \mathbb{E} \left[   \bm{1} \left\{\theta_1(t) \le \mu_i + 0.5\Delta_i\right\}\mid \mathcal{F}_{t-1} = F_{t-1} \right] \cdot \mathbb{E} \left[\bm{1}  \left\{\theta_j(t) \le \mu_i + 0.5\Delta_i, \forall j \in [K] \setminus \{1\} \right\} \mid \mathcal{F}_{t-1} = F_{t-1} \right]\\
    = &  \mathbb{E} \left[   \bm{1} \left\{\theta_{1, n_1(t-1)} \le \mu_i + 0.5\Delta_i\right\}\mid \mathcal{F}_{t-1} = F_{t-1} \right] \cdot \mathbb{E} \left[\bm{1}  \left\{\theta_j(t) \le \mu_i + 0.5\Delta_i, \forall j \in [K] \setminus \{1\} \right\} \mid \mathcal{F}_{t-1} = F_{t-1} \right]\quad.
        \end{array}
        \label{Sun 11}
    \end{equation}
We also have 
             \begin{equation}
        \begin{array}{ll}
             &  \mathbb{E} \left[   \bm{1} \left\{i_t = 1, \theta_i(t) \le \mu_i + 0.5\Delta_i,\mathcal{T}_1(t) \right\} \mid \mathcal{F}_{t-1} = F_{t-1}\right] \\
       \ge &  \bm{1} \left\{\mathcal{T}_1(t)  \right\} \cdot \mathbb{E} \left[   \bm{1} \left\{\theta_1(t) >   \mu_i + 0.5\Delta_i \ge   \theta_j(t), \forall j \in [K]\setminus \{1\}\right\} \mid \mathcal{F}_{t-1} = F_{t-1} \right]\\
       = &  \bm{1} \left\{\mathcal{T}_1(t)\right\} \cdot \mathbb{E} \left[   \bm{1} \left\{\theta_1(t) >  \mu_i + 0.5\Delta_i\right\} \mid \mathcal{F}_{t-1} = F_{t-1} \right] \cdot \mathbb{E} \left[\bm{1} \left\{\theta_j(t) \le \mu_i + 0.5\Delta_i, \forall j \in [K] \setminus \{1\}\right\} \mid \mathcal{F}_{t-1} = F_{t-1} \right] \\
       = &  \underbrace{ \mathbb{E} \left[   \bm{1} \left\{\theta_{1, n_1(t-1)} >  \mu_i + 0.5\Delta_i\right\} \mid \mathcal{F}_{t-1} = F_{t-1} \right] }_{>0}\cdot \mathbb{E} \left[\bm{1} \left\{\theta_j(t) \le \mu_i + 0.5\Delta_i, \forall j \in [K] \setminus \{1\}\right\} \mid \mathcal{F}_{t-1} = F_{t-1} \right] \quad.
  \end{array}
        \label{Sun 22}
    \end{equation}
    From (\ref{Sun 11}) and (\ref{Sun 22}), we have
   \begin{equation}
        \begin{array}{ll}
           &   \mathbb{E} \left[   \bm{1} \left\{i_t = i, \theta_i(t) \le \mu_i + 0.5\Delta_i,\mathcal{T}_1(t)  \right\} \mid \mathcal{F}_{t-1} = F_{t-1}\right] \\
         \le   &\frac{\mathbb{P}  \left\{\theta_{1,n_1(t-1)} \le \mu_i + 0.5\Delta_i\mid \mathcal{F}_{t-1} = F_{t-1} \right\} }{\mathbb{P}  \left\{\theta_{1,n_1(t-1)} > \mu_i + 0.5\Delta_i\mid \mathcal{F}_{t-1} = F_{t-1} \right\}} \cdot \mathbb{E} \left[   \bm{1} \left\{i_t = 1, \theta_i(t) \le \mu_i + 0.5\Delta_i,\mathcal{T}_1(t)  \right\} \mid \mathcal{F}_{t-1} = F_{t-1}\right] \\
             \le  &\left(\frac{1 }{ \mathbb{P} \left\{\theta_{1,n_1(t-1)}> \mu_i + 0.5\Delta_i  \mid \mathcal{F}_{t-1} = F_{t-1} \right\}  } -1\right) \cdot   \mathbb{E} \left[   \bm{1} \left\{i_t = 1  \right\} \mid \mathcal{F}_{t-1} = F_{t-1}\right]\quad,
        \end{array}
    \end{equation}
    which concludes the proof.\end{proof}

\newpage

\section{Additional Experimental Results} \label{app: more exp}

\subsection{M-TS-Gaussian parameter selection}\label{appx:mtsg_pars}
Recall that in Section~\ref{sc:privacy}, we let $c = \frac{1}{2c_0 (b+1)} T^{0.5(1+\alpha)} \ln^{-1.5(1-\alpha)}(T)$ for any $b$ for M-TS-Gaussian to satisfy  $\sqrt{2c_0 T^{0.5(1-\alpha)}\ln^{1.5(1-\alpha)}(T)}$-GDP. To determine the best $b$ value for each $\alpha$ considered in Section~\ref{sc:privacy}, we conduct experiments with $b = \{0, 1, 500, 1000, 2000, 5000, 100000\}$. The results are shown in Figure~\ref {fig:main-fig_bernoulli_tsou}.

\begin{figure}[h]
    \centering
    \subfigure[Regret for  $O(\sqrt{T^{0.5} \ln^{1.5} T})$ -GDP  ($\alpha = 0$) . ]{
        \includegraphics[width=0.4\linewidth]{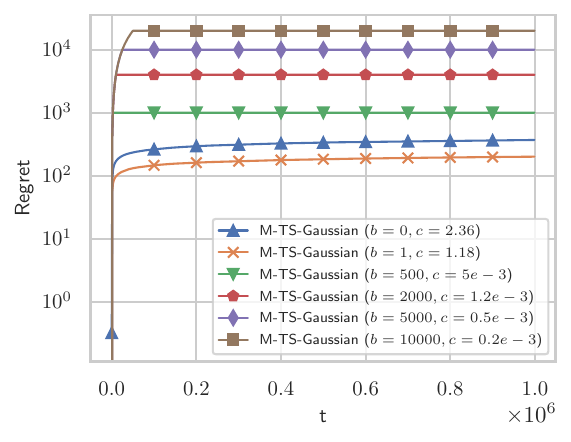}
        \label{fig:tregret_Bernoulli1}
    }
    \subfigure[Regret for $\sqrt{2c_0}$-GDP ($\alpha = 1$).]{
        \includegraphics[width=0.4\linewidth]{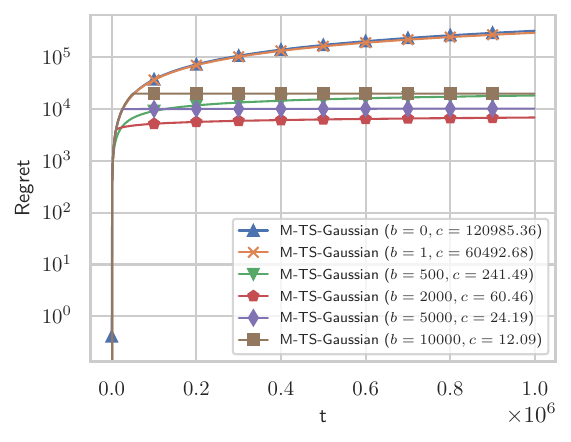}
        \label{fig:c0regret_Bernoulli2}
    }
    \caption{The regret of M-TS-Gaussian with parameters $b = \{0, 1, 500, 1000, 2000, 5000, 100000\}$ under two privacy guarantees.}
    \label{fig:main-fig_bernoulli_tsou}
\end{figure}

We can observe that when $\alpha = 0$, M-TS-Gaussian achieves the lowest regret with $b=1$ and $c=1.18$, as shown in Figure~\ref{fig:tregret_Bernoulli1}. When $\alpha = 1$, M-TS-Gaussian achieves the lowest regret with $b=2000$ and $c=60.46$, as shown in Figure~\ref{fig:c0regret_Bernoulli2}.

\subsection{Comparison with $(\epsilon, 0)$-DP algorithms}\label{app:epsi}
We compare DP-TS-UCB with $(\varepsilon, 0)$-DP algorithms with $\varepsilon = 0.5$: DP-SE~\citep{sajed2019optimal}, Anytime-Lazy-UCB \citep{hu2021near}  and Lazy-DP-TS~\citep{hu2022near}. These algorithms use the Laplace mechanism to inject noise.

\begin{figure}[h]
    \centering
    \subfigure[Regret for  $O(\sqrt{T^{0.5} \ln^{1.5} T})$ -GDP  ($\alpha = 0$) . ]{
        \includegraphics[width=0.4\linewidth]{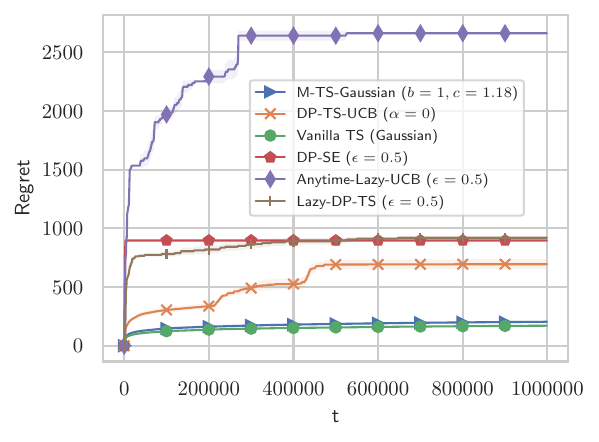}
        \label{fig:tregret_Bernoulli_epsi_e}
    }
    \subfigure[Regret for $\sqrt{2c_0}$-GDP ($\alpha = 1$).]{
        \includegraphics[width=0.4\linewidth]{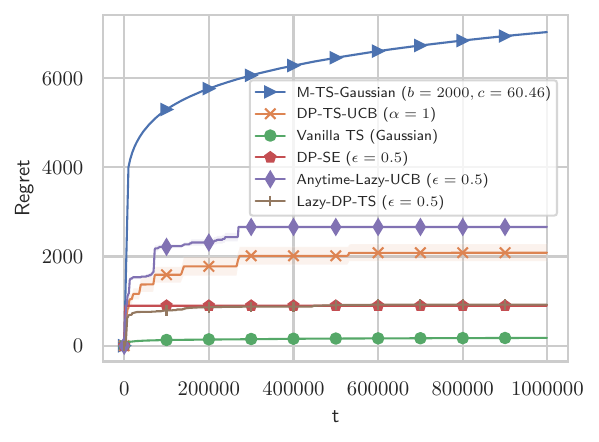}
        \label{fig:c0regret_Bernoulli_epsi_e}
    }
    \caption{The regret of DP-TS-UCB and M-TS-Gaussian under the same privacy guarantee with $\alpha = 0$ and $1$, with comparison to $(\epsilon,0)$-DP algorithms.}
    \label{fig:regret_same_privacy_epsilon}
\end{figure}

We can see that when $\alpha = 0$, both DP-TS-UCB and M-TS-Gaussian perform better than the $(\epsilon,0)$-DP algorithms, as shown in Figure~\ref{fig:tregret_Bernoulli_epsi_e}. When we increase $\alpha = 1$, M-TS-Gaussian performs worse than the $(\epsilon,0)$-DP algorithms, but DP-TS-UCB still outperforms Anytime-Lazy-UCB, as shown in Figure~\ref{fig:c0regret_Bernoulli_epsi_e}.

\end{document}